\theoremstyle{plain}
\newtheorem{theorem}{Theorem}[section]
\newtheorem{proposition}[theorem]{Proposition}
\newtheorem{lemma}[theorem]{Lemma}
\newtheorem{corollary}[theorem]{Corollary}
\theoremstyle{definition}
\newtheorem{definition}[theorem]{Definition}
\newtheorem{assumption}[theorem]{Assumption}
\theoremstyle{remark}
\newtheorem{remark}[theorem]{Remark}
\title{Stackelberg Self-Annotation: A Robust Approach to Data-Efficient LLM Alignment}
\author{%
  Xu Chu\thanks{Corresponding author.\ \ Contact E-mail: \texttt{chu\_xu@pku.edu.cn}}$^{\quad 1,2,3}$,
  Zhixin Zhang$^{1,3}$,
  Tianyu Jia$^{1,3}$, 
  Yujie Jin$^{1,3}$ \\
  [2mm]
    $^1$Key Laboratory of High Confidence Software Technologies, Ministry of Education \\
  $^2$Center on Frontiers of Computing Studies, Peking University \\
  $^3$School of Computer Science, Peking University \\
}
\newlength\myheight
\newlength\mydepth
\settototalheight\myheight{Xygp}
\begin{document}
\maketitle
% Add GitHub link with logo
\vspace{-1cm}
\begin{center}{
  \raisebox{-\mydepth}{\includegraphics[height=1.5\myheight]{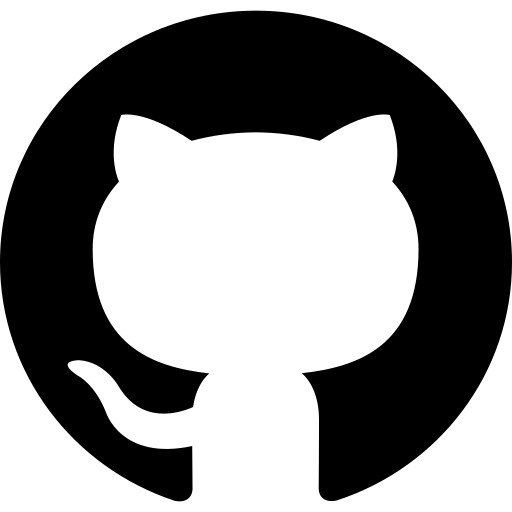}}
\textbf{\url{https://github.com/EunTilofy/SSAPO}}
}
\end{center}

\begin{abstract}
  Aligning large language models (LLMs) with human preferences typically demands vast amounts of meticulously curated data, which is both expensive and prone to labeling noise. We propose Stackelberg Game Preference Optimization (SGPO), a robust alignment framework that models alignment as a two-player Stackelberg game between a policy (leader) and a worst-case preference distribution (follower). The proposed SGPO guarantees $\mathcal{O}(\epsilon)$-bounded regret within an $\epsilon$-Wasserstein ball, offering formal robustness to (self-)annotation noise. We instantiate SGPO with Stackelberg Self-Annotated Preference Optimization (SSAPO), which uses minimal human-labeled “seed” preferences and iteratively self-annotates new prompts. In each iteration, SSAPO applies a distributionally robust reweighting of synthetic annotations, ensuring that noisy or biased self-labels do not derail training. Remarkably, using only 2K seed preferences—about 1/30 of standard human labels—SSAPO achieves strong win rates against GPT-4 across multiple benchmarks within three iterations. These results highlight that a principled Stackelberg formulation yields data-efficient alignment for LLMs, significantly reducing reliance on costly human annotations.

\end{abstract}

\section{Introduction} \label{sec:intro}
Large language models (LLMs) have demonstrated remarkable capabilities across a broad range of tasks, but aligning their outputs with human preferences remains a core challenge for safety and usability \citep{Ziegler2019Fine, Ouyang2022Training, Bai2022Training}. Traditional alignment paradigms, such as Reinforcement Learning from Human Feedback (RLHF) \citep{ Bai2022Training,Christiano2017Deep} or Direct Preference Optimization (DPO) \citep{Rafailov2023Direct}, typically rely on large amounts of meticulously curated preference data. Such data collection is costly, time-consuming, and inevitably prone to labeling noise or bias, which can in turn degrade model performance once integrated at scale \citep{Casper2023Open}. Consequently, an important question arises:\emph{How can we achieve robust alignment of LLMs without relying on vast, error-prone human-labeled datasets?}

One promising direction is to reduce the need for human-annotated samples by having the model itself generate preference labels on newly sampled prompts—so-called “self-annotation” \citep{Lee2024Rlaif, Yuan2024Self, Kim2025Spread}. However, most self-annotation approaches overlook the fact that synthetic labels may be systematically biased or noisy. If these errors go unchecked, they can compound over iterative rounds of training, ultimately harming rather than helping alignment \citep{Chowdhury2024Provably, Xiong2024Iterative}.

In this work, we address this problem by \textbf{framing preference alignment as a two-player Stackelberg game} between a policy (leader) and a worst-case preference distribution (follower). Our formulation, which we call \textbf{Stackelberg Game Preference Optimization (SGPO)}, explicitly guards against plausible shifts or adversarial corruption in the preference data by operating within an $\epsilon$-Wasserstein ball \citep{Villani2009Optimal} around the empirical distribution. We prove that the resulting policy achieves $\mathcal{O}(\epsilon)$-bounded regret, a theoretical guarantee of robustness to noise or distribution mismatch \citep{Esfahani2018Data}. This stands in contrast to standard DPO, whose regret can grow linearly with such shifts \citep{Rafailov2023Direct}.  \emph{We delay a more thorough \textbf{related work} section in the Appendix~\ref{sec:related_supp}}.

To instantiate SGPO with a concrete algorithm, we then present \textbf{Stackelberg Self-Annotated Preference Optimization (SSAPO)}. Starting from a small set of human-labeled “seed” preferences, SSAPO self-annotates new prompts by generating candidate responses and ranking them internally. Crucially, it couples this self-annotation with a distributionally robust reweighting \citep{Esfahani2018Data} that prevents noisy synthetic labels from overwhelming the training updates. Remarkably, we find that using only 2K seed preference pairs (around 1/30 of the usual scale), SSAPO outperforms or matches methods that rely on significantly more human labels. On multiple alignment benchmarks—including AlpacaEval \citep{dubois2024alpacafarm} and MT-Bench \citep{zheng2023judging}—SSAPO rapidly achieves competitive or superior performance within just three rounds of iterative self-annotation.

\noindent We summarize our contributions as follows. 1. \textbf{Stackelberg formulation of preference alignment:} We recast alignment as a two-player game and prove the existence of a robust equilibrium with $\mathcal{O}(\epsilon)$-bounded regret under $\epsilon$-Wasserstein preference shifts.
2. \textbf{Robust self-annotation algorithm (SSAPO):} We instantiate our framework by combining minimal seed labels with iterative synthetic annotations. Our distributionally robust reweighting attenuates the impact of potential labeling noise.
3. \textbf{Data efficiency and empirical results:} Experiments show that SSAPO maintains high-level performance despite using only a fraction of typical human annotations, achieving strong results against GPT-4 in head-to-head comparisons.

\section{Theoretical Foundation: SGPO Framework}
\label{sec:theory}

This section formalizes \emph{Stackelberg Game Preference Optimization} (SGPO) and establishes its guarantees. We begin with DPO preliminaries (Section~\ref{sec:dpo_prelims}), then cast SGPO as a two-player Stackelberg game over \emph{gap distributions} (Section~\ref{sec:sgpo_formulation}). We prove existence of a Stackelberg equilibrium and local convergence of a practical alternating scheme (Section~\ref{sec:existence and convergence}), and finally quantify regret and contrast SGPO with DPO (Section~\ref{sec:regret}). All proofs are deferred to Appendix~\ref{sec:theory_proofs}.

\subsection{Preliminaries: Preference Datasets and DPO}
\label{sec:dpo_prelims}

\paragraph{Preference-ranked dataset.}
We use $D=\{(x^i,y^i_w,y^i_\ell)\}_{i=1}^N$, where $x^i$ is a prompt and $(y^i_w,y^i_\ell)$ are the \emph{winner/loser} responses (from human or partially self-annotated feedback).

\paragraph{RLHF and KL regularization.}
Classical RLHF \citep{Christiano2017Deep} optimizes a policy $\pi_\theta$ under a KL penalty to $\pi_{\mathrm{ref}}$:
\begin{equation}
\label{eq:rlhf_obj_appendix}
\max_{\theta \in \Theta}~
\mathbb{E}_{x \sim \mathcal{D},\, y \sim \pi_\theta(\cdot \mid x)}
\!\left[
R(x,y)
-
\beta\, D_{\mathrm{KL}}\!\bigl(\pi_\theta(\cdot \mid x)\,\Vert\,\pi_{\mathrm{ref}}(\cdot \mid x)\bigr)
\right],
\end{equation}
where $\beta>0$ controls regularization and $\mathcal{D}$ is the prompt distribution.

\paragraph{Direct Preference Optimization (DPO).}
Under the Bradley--Terry model $p(y_w \succ y_\ell \mid x)=\sigma(R(x,y_w)-R(x,y_\ell))$ with $\sigma(z)=1/(1+e^{-z})$, first-order optimality of a related KL-regularized objective yields
\begin{equation}
\label{eq:optimal_reward_supp}
R(x,y)=\beta \log \frac{\pi_\theta(y \mid x)}{\pi_{\mathrm{ref}}(y \mid x)}+\beta \log Z(x),
\end{equation}
with partition function $Z$. Plugging this into the BT likelihood gives
\begin{equation}
\label{eq:dpo_loss_supp}
\mathcal{L}_{\mathrm{DPO}}(\theta)=
\mathbb{E}_{(x,y_w,y_\ell)\sim D}\!\left[\log \sigma\bigl(R(x,y_w)-R(x,y_\ell)\bigr)\right].
\end{equation}
DPO is simple but provides no explicit protection against shifts away from the empirical preference distribution. This motivates a robust formulation.

\subsection{SGPO: A Two-Player Stackelberg Game}
\label{sec:sgpo_formulation}

SGPO imposes robustness over \emph{preference gaps}. For a policy $\pi$, define the \emph{gap map}
\[
\Delta R_\pi(x,y_w,y_\ell)\;:=\; R_\pi(x,y_w)-R_\pi(x,y_\ell)\in\mathbb{R}.
\]
Let $\hat{P}=\tfrac{1}{N}\!\sum_{i=1}^N\delta_{(x^i,y^i_w,y^i_\ell)}$ and let $(\Delta R_\pi)_\# \hat{P}$ be its push-forward, i.e.,
\[
(\Delta R_\pi)_\# \hat{P}\;=\; \tfrac{1}{N}\!\sum_{i=1}^N \delta_{\Delta R_\pi(x^i,y^i_w,y^i_\ell)} \in \mathcal{P}(\mathbb{R}).
\]
We measure uncertainty via the 1-Wasserstein ball $\mathcal{U}_\epsilon(\nu):=\{\alpha\in\mathcal{P}(\mathbb{R})~:~W_1(\alpha,\nu)\le \epsilon\}$ centered at a gap distribution $\nu$. Hereafter, for simplicity, we write $\pi$ instead of $\pi_{\theta}$, the dependence on $\theta$ is implicitly assumed throughout the paper.

\paragraph{Leader objective in gap space.}
Let $f(\xi)=\log\sigma(\xi)$, a concave and $1$-Lipschitz function on $\mathbb{R}$. The SGPO leader solves
\begin{equation}
\label{eq:sgpo_robust_obj}
\max_{\pi \in \Pi}~ \min_{\alpha \in \mathcal{U}_\epsilon\!\big((\Delta R_\pi)_\# \hat{P}\big)}~
\mathbb{E}_{\xi \sim \alpha}\bigl[f(\xi)\bigr].
\end{equation}
This objective optimizes the worst-case preference likelihood over perturbations in \emph{gap space} rather than in token space. See Section~\ref{sec:ssapo} for implementation details.

\paragraph{Follower best response in gap space.}
For any center $\nu\!\in\!\mathcal{P}(\mathbb{R})$ and radius $\epsilon>0$, define the follower best-response set
\begin{equation}
\label{eq:widetilde_dro1}
\mathcal{A}_\epsilon(\nu)
~:=~
\arg\min_{\alpha \in \mathcal{P}(\mathbb{R})}
\Big\{
\mathbb{E}_{\xi\sim \alpha}\big[f(\xi)\big]
:~ W_1(\alpha,\nu)\le \epsilon
\Big\}.
\end{equation}
When $\nu=(\Delta R_\pi)_\#\hat{P}$, any $\alpha^\star\!\in\!\mathcal{A}_\epsilon(\nu)$ is a follower best response against~$\pi$.

This formalization induces the Stackelberg equilibrium:
\begin{definition}[Stackelberg equilibrium]
\label{def:se}
A pair $(\pi^*,\alpha^*)$ is a \emph{Stackelberg equilibrium} if
\begin{equation}
   \pi^* \in \arg\max_{\pi \in \Pi}~ \min_{\alpha \in \mathcal{U}_\epsilon\!\big((\Delta R_\pi)_\# \hat{P}\big)}~
\mathbb{E}_{\xi \sim \alpha}[f(\xi)],
\qquad
\alpha^* \in \mathcal{A}_\epsilon\!\big((\Delta R_{\pi^*})_\# \hat{P}\big). 
\end{equation}
\end{definition}
This definition links the leader's robust optimization~\eqref{eq:sgpo_robust_obj} with the follower's DRO problem~\eqref{eq:widetilde_dro1}. We next establish existence and analyze a practical alternating scheme.

\subsection{Existence and Convergence of a Stackelberg Equilibrium}
\label{sec:existence and convergence}

We first state mild conditions ensuring existence, then analyze an alternating best-response with a proximal leader step.

\paragraph{Assumptions.}
(i) $\Pi$ is compact.
(ii) For each $(x,y)$, $\pi \mapsto R_\pi(x,y)$ is continuous; hence $\pi \mapsto (\Delta R_\pi)_\# \hat{P}$ is continuous in the weak topology.
(iii) $f(\xi)=\log\sigma(\xi)$ is continuous and $1$-Lipschitz on $\mathbb{R}$.

\begin{theorem}[Existence of a Stackelberg equilibrium]
\label{thm:existence_se}
Under the assumptions above, problem~\eqref{eq:sgpo_robust_obj} admits at least one solution $(\pi^*,\alpha^*)$.
\end{theorem}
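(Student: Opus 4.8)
The plan is to establish existence via a standard compactness-plus-continuity argument (a Weierstrass-type extreme value theorem), with the main work being to verify that the leader's objective
\[
V(\pi)\;:=\;\min_{\alpha \in \mathcal{U}_\epsilon\!\big((\Delta R_\pi)_\# \hat{P}\big)}~\mathbb{E}_{\xi \sim \alpha}\bigl[f(\xi)\bigr]
\]
is upper semicontinuous (indeed continuous) in $\pi$ on the compact set $\Pi$. Once $V$ is shown to be continuous on the compact domain $\Pi$, the extreme value theorem immediately yields a maximizer $\pi^\star$, and then any $\alpha^\star \in \mathcal{A}_\epsilon\big((\Delta R_{\pi^\star})_\# \hat P\big)$ furnishes the follower component, giving the equilibrium pair.

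First I would handle the inner minimization. For fixed $\pi$ the center $\nu = (\Delta R_\pi)_\# \hat P$ is a fixed probability measure, and the feasible set $\mathcal{U}_\epsilon(\nu)$ is a $W_1$-ball. I would note that $f(\xi)=\log\sigma(\xi)$ is continuous and $1$-Lipschitz (Assumption (iii)); although $f$ is unbounded below, the $W_1$-constraint keeps the objective finite, since by Kantorovich--Rubinstein duality any $\alpha$ with $W_1(\alpha,\nu)\le\epsilon$ satisfies $\big|\mathbb{E}_{\alpha}[f]-\mathbb{E}_{\nu}[f]\big|\le \mathrm{Lip}(f)\,W_1(\alpha,\nu)\le\epsilon$. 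This simultaneously shows the infimum is finite and bounded, and that the inner value equals $\mathbb{E}_\nu[f]-\epsilon$ exactly (the $1$-Lipschitz bound is attained along the optimal mass-transport direction), so in fact
\[
V(\pi)\;=\;\mathbb{E}_{\xi\sim(\Delta R_\pi)_\#\hat P}\bigl[f(\xi)\bigr]\;-\;\epsilon\;=\;\frac{1}{N}\sum_{i=1}^N f\!\bigl(\Delta R_\pi(x^i,y^i_w,y^i_\ell)\bigr)\;-\;\epsilon.
\]
This closed form is the cleanest route: it reduces the whole inner DRO to an explicit empirical average, and existence of a minimizing $\alpha^\star$ follows from weak compactness of $W_1$-balls (Prokhorov, after checking tightness via the $W_1$ bound) together with lower semicontinuity of $\alpha\mapsto\mathbb{E}_\alpha[f]$.

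Next I would establish continuity of $\pi\mapsto V(\pi)$. Using the closed form, this reduces to continuity of $\pi\mapsto \tfrac{1}{N}\sum_i f\bigl(\Delta R_\pi(x^i,y^i_w,y^i_\ell)\bigr)$, which follows from Assumption (ii) that each $\pi\mapsto R_\pi(x,y)$ is continuous—hence each gap $\Delta R_\pi(x^i,y^i_w,y^i_\ell)$ is continuous in $\pi$—composed with the continuous $f$, averaged over finitely many data points. Since $V$ is continuous on the compact set $\Pi$, it attains its maximum at some $\pi^\star\in\Pi$, and $\alpha^\star\in\mathcal{A}_\epsilon\big((\Delta R_{\pi^\star})_\#\hat P\big)$ exists by the inner-problem argument above; the pair $(\pi^\star,\alpha^\star)$ is then a Stackelberg equilibrium by Definition~\ref{def:se}.

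The main obstacle I anticipate is the unboundedness of $f$: because $\log\sigma(\xi)\to-\infty$ as $\xi\to-\infty$, one must argue that the inner infimum does not escape to $-\infty$ and that minimizers actually exist rather than being merely approached. The $W_1$-Lipschitz estimate resolves both issues at once, so if I instead relied on weak-compactness arguments alone I would need an extra uniform-integrability check; the Kantorovich--Rubinstein bound sidesteps that entirely and is the step I would prioritize getting right. A secondary subtlety is justifying that the center $\nu=(\Delta R_\pi)_\#\hat P$ varies weakly-continuously with $\pi$ (Assumption (ii)), which is needed only if one prefers the Berge maximum theorem route over the explicit-formula route; the explicit formula makes this technical point unnecessary for the existence claim itself.
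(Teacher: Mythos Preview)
Your route to $\pi^\star$ differs from the paper's and is more direct. The paper applies Berge's maximum theorem: it argues that the ball $\mathcal{U}_\epsilon(\hat\alpha(\pi))$ is compact, that $\alpha\mapsto\mathbb{E}_\alpha[f]$ is continuous, and that the feasible correspondence $\pi\mapsto\mathcal{U}_\epsilon(\hat\alpha(\pi))$ is upper hemicontinuous, whence $V$ is u.s.c.\ on compact $\Pi$. You instead invoke the closed form $V(\pi)=\tfrac{1}{N}\sum_i f\bigl(\Delta R_\pi(x^i,y^i_w,y^i_\ell)\bigr)-\epsilon$, which is exactly Lemma~\ref{lem:ssapo-closed} (the Esfahani--Kuhn reduction for $1$-Lipschitz losses on $\mathbb{R}$). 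Continuity of $V$ then follows immediately from Assumption~(ii) composed with the continuous $f$, and Weierstrass yields $\pi^\star$. This bypasses the correspondence machinery at the cost of using the specific $1$-Lipschitz structure of $f$; for the leader component it is a valid and cleaner alternative.

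The gap is in your argument for the follower $\alpha^\star$. You claim that $\alpha\mapsto\mathbb{E}_\alpha[f]$ is weakly lower semicontinuous and combine this with weak compactness of the ball via Prokhorov. But $f=\log\sigma$ is bounded above and unbounded below, so portmanteau/Fatou gives weak \emph{upper} semicontinuity, not lower: take $\nu=\delta_0$ and $\alpha_n=(1-\tfrac1n)\delta_0+\tfrac1n\delta_{-n}\in\mathcal{U}_1(\nu)$; then $\alpha_n\rightharpoonup\delta_0$ while $\mathbb{E}_{\alpha_n}[f]\to f(0)-1<f(0)$. Minimizing an u.s.c.\ function on a weakly compact set does not yield a minimizer in general, and indeed since $f'(\xi)=\sigma(-\xi)<1$ strictly at every finite $\xi$, the infimum $\mathbb{E}_\nu[f]-\epsilon$ is approached only by sending mass to $-\infty$. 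Your Kantorovich--Rubinstein estimate correctly pins down the \emph{value} of the infimum and rules out divergence of the objective, but it does not by itself deliver an attaining $\alpha^\star$---you have conflated finiteness with attainment. The paper's argument for $\alpha^\star$ instead works in the $W_1$ topology throughout (asserting $W_1$-compactness of the ball and using $W_1$-continuity of $\mathbb{E}_\alpha[f]$, which does hold by KR); to align with the paper you should switch from weak-topology reasoning to $W_1$-topology reasoning, or else impose the bounded-margin assumption (A6) so that $\Xi$ is a compact interval and both routes go through without subtlety.
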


\noindent\textit{Proof sketch.}
For fixed $\pi$, the inner problem has a minimizer by compactness of $\mathcal{U}_\epsilon((\Delta R_\pi)_\# \hat{P})$ and continuity in $\alpha$. Berge’s maximum theorem~\citep{berge1877topological,aliprantis2006infinite} yields upper semicontinuity of $V(\pi)=\min_{\alpha\in\mathcal{U}_\epsilon((\Delta R_\pi)_\# \hat{P})}\mathbb{E}[f]$, and compactness of $\Pi$ gives a maximizer $\pi^*$ and a follower best response $\alpha^*$. See Appendix~\ref{sec:theory_proofs}. \hfill$\square$

\paragraph{Alternating best responses with a proximal leader step.}
Let $\hat{\alpha}(\pi):=(\Delta R_\pi)_\# \hat{P}$. Given $\pi_t$, choose a follower best response
\begin{equation}
\label{eq:iterative_update_distribution}
    \alpha_{t+1} \in \mathcal{A}_\epsilon\!\big(\hat{\alpha}(\pi_t)\big)
~\in~
\arg\min_{\alpha \in \mathcal{U}_\epsilon(\hat{\alpha}(\pi_t))}\mathbb{E}_{\xi \sim \alpha}[f(\xi)].
\end{equation}

Update the leader via a proximal step:
\begin{equation}
\label{eq:iterative_update_policy}
  \pi_{t+1} \in \arg\max_{\pi \in \Pi}
\left\{
\min_{\alpha \in \mathcal{U}_\epsilon(\hat{\alpha}(\pi_t))}
\mathbb{E}_{\xi \sim \alpha}[f(\xi)]
-
\frac{\lambda}{2}\,\|\pi-\pi_t\|^2
\right\}.  
\end{equation}
This scheme makes the leader step stable while allowing the follower to track the changing center $\hat{\alpha}(\pi_t)$. The regularization term $\|\pi-\pi_t\|^2$ in the proximal step can be induced in practice with small learning rate and weight decay regularization.

\begin{theorem}[Well-posedness and local linear convergence]
\label{thm:convergence_iterative_detailed}
Suppose the proximal leader objective is $\mu$-strongly concave in a neighborhood of $\pi^*$, uniformly over $\alpha \in \mathcal{U}_\epsilon(\hat{\alpha}(\pi))$, and $\pi \mapsto \hat{\alpha}(\pi)$ is locally Lipschitz in $W_1$. Then the update map $(\pi_t,\alpha_t)\mapsto(\pi_{t+1},\alpha_{t+1})$ is a contraction near $(\pi^*,\alpha^*)$ and thus converges locally linearly.
\end{theorem}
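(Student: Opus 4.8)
The plan is to exploit the \emph{autonomy} of the policy iteration: both $\alpha_{t+1}$ and $\pi_{t+1}$ are determined by $\pi_t$ alone (the follower re-centres its ball at $\hat{\alpha}(\pi_t)$ and the leader re-solves its own inner minimisation), so the joint update factorises as $\pi_{t+1}=G(\pi_t)$ and $\alpha_{t+1}=F(\pi_t)$. On the product space I would use the weighted metric $d\big((\pi,\alpha),(\pi',\alpha')\big)=\|\pi-\pi'\|+c\,W_1(\alpha,\alpha')$. Because the map discards its $\alpha$-input, one checks directly that its modulus under $d$ is at most $\mathrm{Lip}(G)+c\,\mathrm{Lip}(F)$, so choosing $c>0$ small reduces the theorem to two claims: (a) $\mathrm{Lip}(G)<1$ near $\pi^*$, and (b) $F$ is locally Lipschitz from $(\Pi,\|\cdot\|)$ into $(\mathcal{P}(\mathbb{R}),W_1)$.

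For (a) the workhorse is a stability estimate for argmaxima of strongly concave functions: if $g_1,g_2$ are $\mu$-strongly concave with maximisers $\pi_1,\pi_2$, then adding the two strong-concavity inequalities at the pair $(\pi_1,\pi_2)$ yields $\|\pi_1-\pi_2\|\le \mu^{-1}\sup_\pi\|\nabla g_1(\pi)-\nabla g_2(\pi)\|$. I would apply this to the proximal leader objectives $J_a(\pi)=V(\pi)-\tfrac\lambda2\|\pi-a\|^2$ and $J_b$ obtained from two inputs $a,b$, where $V(\pi)=\min_{\alpha\in\mathcal{U}_\epsilon(\hat{\alpha}(\pi))}\mathbb{E}_{\xi\sim\alpha}[f(\xi)]$ is the same worst-case value as in Theorem~\ref{thm:existence_se}; since the pointwise minimum over $\alpha$ of the $\mu$-strongly concave integrands is again $\mu$-strongly concave, $J_a,J_b$ inherit strong concavity near $\pi^*$. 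The gradient difference splits into a proximal part, which contributes exactly $\lambda\|a-b\|$ independently of $\pi$, and a \emph{follower-sensitivity} part $\|\nabla V(\pi)|_{a}-\nabla V(\pi)|_{b}\|$. Writing the intrinsic modulus of $V$ as $\mu_0$ (so the proximal modulus is $\mu=\mu_0+\lambda$), this gives $\mathrm{Lip}(G)\le \tfrac{\lambda+C\,L_{\hat{\alpha}}}{\mu_0+\lambda}$, a contraction precisely when the follower-sensitivity constant $C\,L_{\hat{\alpha}}$ is dominated by $\mu_0$; note the proximal term is harmless, as its share $\tfrac{\lambda}{\mu_0+\lambda}<1$ reproduces the classical proximal-operator rate.

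The crux, and the main obstacle, is controlling $C$, i.e.\ showing $V$ is $C^1$ with a Lipschitz gradient near $\pi^*$ whose value moves at a controlled rate when the ball centre $\hat{\alpha}(\pi)$ is perturbed. Here I would invoke Danskin's theorem to write $\nabla V(\pi)=\tfrac1N\sum_i f'\big(\Delta R_\pi(x^i,y^i_w,y^i_\ell)+\delta_i^\star\big)\,\nabla_\pi\Delta R_\pi(x^i,y^i_w,y^i_\ell)$, where $\delta^\star$ encodes the follower's optimal transport; the concave, $1$-Lipschitz structure of $f=\log\sigma$ bounds $f',f''$, while Kantorovich duality bounds the displacement of the optimal transport by the $W_1$ distance of the centres, so $\|\delta^\star(a)-\delta^\star(b)\|\lesssim W_1(\hat{\alpha}(a),\hat{\alpha}(b))\le L_{\hat{\alpha}}\|a-b\|$ using the assumed local Lipschitzness of $\pi\mapsto\hat{\alpha}(\pi)$. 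The same transport-stability argument delivers claim (b): the one-dimensional $W_1$-DRO over the increasing concave $f$ admits a water-filling solution depending Lipschitz-continuously on its centre, giving $W_1(F(a),F(b))\le L_F\|a-b\|$. The delicate point throughout is uniqueness of the inner minimiser near $\pi^*$, which is what makes $G,F$ genuine (single-valued, differentiable) maps rather than set-valued correspondences and lets Danskin supply an honest gradient.

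Finally I would assemble the pieces. With $c$ chosen so that $\tfrac{\lambda+C L_{\hat{\alpha}}}{\mu_0+\lambda}+cL_F<1$, the joint map is a $d$-contraction; I would first shrink the neighbourhood to make it forward-invariant, then apply the Banach fixed-point theorem to obtain a unique fixed point with geometric convergence. It remains to identify this fixed point with the Stackelberg equilibrium: at any fixed point $\bar\pi=G(\bar\pi)$ the proximal penalty gradient vanishes, so the first-order condition collapses to $\nabla V(\bar\pi)=0$; local $\mu_0$-strong concavity forces $\bar\pi=\pi^*$, and then $\bar\alpha=F(\bar\pi)\in\mathcal{A}_\epsilon(\hat{\alpha}(\pi^*))$ coincides with $\alpha^*$, completing the argument.
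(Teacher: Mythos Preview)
Your high-level strategy matches the paper's: exploit the autonomy of the $\pi$-iteration, control the follower map via Kantorovich--Rubinstein and the local Lipschitzness of $\hat\alpha$, and control the leader map via argmax stability under strong concavity, then assemble a contraction and invoke Banach. The weighted product-metric trick and the Danskin-based gradient calculation are extra machinery the paper does not use, but they are sound.

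There is, however, a genuine slip in how you identify the leader objective. The iteration in eq.~\eqref{eq:iterative_update_policy} freezes the Wasserstein ball at $\hat\alpha(\pi_t)$, not at $\hat\alpha(\pi)$; your $V(\pi)=\min_{\alpha\in\mathcal U_\epsilon(\hat\alpha(\pi))}\mathbb E_\alpha[f]$ with a \emph{moving} center is the full SGPO value, not what the leader maximises at step $t$. This misidentification bites twice. First, the inheritance claim ``pointwise minimum over $\alpha$ of $\mu$-strongly concave integrands is $\mu$-strongly concave'' requires the feasible set to be independent of $\pi$; when the ball moves with $\pi$ that inference fails, so you cannot conclude $J_a$ is strongly concave this way. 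Second, under your own $V$ the ``follower-sensitivity'' term $\|\nabla V(\pi)|_a-\nabla V(\pi)|_b\|$ is identically zero, contradicting the bound you then write down. What you actually need is $J_a(\pi)=h\bigl(\pi,\alpha^\sharp(a)\bigr)-\tfrac\lambda2\|\pi-a\|^2$, with $h(\pi,\alpha)$ the (per-$\alpha$ strongly concave) reweighted log-likelihood and $\alpha^\sharp$ the follower best response; then $a$ genuinely enters through both the prox center and the adversarial distribution, and your gradient-difference estimate becomes meaningful.

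The paper sidesteps this tangle by factoring the leader as $\mathcal T(\alpha;\pi_t)=\arg\max_\pi\{h(\pi,\alpha)-\tfrac\lambda2\|\theta(\pi)-\theta(\pi_t)\|^2\}$ and bounding $\|\mathcal T(\alpha_1;\cdot)-\mathcal T(\alpha_2;\cdot)\|\le(L_\ell/\mu)\,W_1(\alpha_1,\alpha_2)$ directly from strong concavity plus a Lipschitz gradient in $\alpha$; the follower is shown $2$-Lipschitz in its center by metric projection onto the shifted ball and $1$-Lipschitzness of $f$. Composing gives $\rho=L_\ell L_f L_R/\mu$. Once you correct the leader objective, your route delivers essentially the same estimate---and is in one respect more careful, since your numerator retains the $\lambda\|a-b\|$ contribution from the prox center that the paper's final contraction line elides.
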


%\noindent\textit{Proof sketch.}
%Strong concavity gives a Lipschitz, single-valued leader response; the follower best response is Lipschitz in the ball center. Composition with $\pi \mapsto \hat{\alpha}(\pi)$ yields a contraction (Banach fixed point). Details are in Appendix~\ref{sec:theory_proofs}. \hfill$\square$

In practice, one may not directly implement \eqref{eq:iterative_update_distribution}--\eqref{eq:iterative_update_policy}, but the Theorem~\ref{thm:convergence_iterative_detailed} shows that any procedure that approximates these alternating best-response updates can converge to the robust equilibrium.  This provides a theoretical grounding for the SSAPO algorithm (to be introduced in the section~\ref{sec:ssapo}), which combines standard gradient-based optimization with distributionally robust optimization.

\subsection{Regret Analysis and Comparison with DPO}
\label{sec:regret}

We quantify worst-case performance under gap-space shifts and compare with DPO. Define the performance functional
\[
\mathcal{P}(\pi,\alpha)=\mathbb{E}_{\xi \sim \alpha}\!\left[\log \sigma(\xi)\right].
\]
Let $\pi^\star$ solve~\eqref{eq:sgpo_robust_obj}. We prove that $\pi^*$ maintains high performance on \emph{all} distributions $\alpha$ within $\epsilon$-Wasserstein distance of $\hat{\alpha}$.  In particular, the drop from $\hat{P}$ to any $P$ is at most $\mathcal{O}(\epsilon)$.

\begin{theorem}[Worst-case performance in gap space]
\label{thm:worstcase-gap}
For every $\alpha \in \mathcal{U}_\epsilon\!\bigl(\hat{\alpha}(\pi^\star)\bigr)$,
\[
\mathcal{P}(\pi^\star,\alpha) \ \ge\  \mathcal{P}\bigl(\pi^\star,\hat{\alpha}(\pi^\star)\bigr) - \epsilon.
\]
\end{theorem}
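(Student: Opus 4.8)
The plan is to reduce the statement to a single application of the Kantorovich--Rubinstein duality for the $1$-Wasserstein distance, exploiting the fact that $f(\xi)=\log\sigma(\xi)$ is $1$-Lipschitz (Assumption~(iii)). The core observation is that integrating a $1$-Lipschitz test function against two probability measures cannot change the integral by more than their $W_1$ distance, so the entire robustness bound is really just the Lipschitz--Wasserstein inequality specialized to $f$.

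First I would recall the dual representation
\[
W_1(\alpha,\nu)=\sup_{g:\,\mathrm{Lip}(g)\le 1}\Big(\mathbb{E}_{\xi\sim\alpha}[g(\xi)]-\mathbb{E}_{\xi\sim\nu}[g(\xi)]\Big),
\]
valid for $\alpha,\nu$ in the Wasserstein-$1$ space $\mathcal{P}_1(\mathbb{R})$. Since both $g$ and $-g$ are admissible competitors, this supremum controls the absolute difference. Because $f$ is itself $1$-Lipschitz, it is one such competitor, so for any $\alpha,\nu\in\mathcal{P}_1(\mathbb{R})$,
\[
\big|\,\mathbb{E}_{\xi\sim\alpha}[f(\xi)]-\mathbb{E}_{\xi\sim\nu}[f(\xi)]\,\big|\le W_1(\alpha,\nu).
\]

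Next I would specialize $\nu=\hat{\alpha}(\pi^\star)=(\Delta R_{\pi^\star})_{\#}\hat{P}$. For any $\alpha\in\mathcal{U}_\epsilon(\hat{\alpha}(\pi^\star))$ we have $W_1(\alpha,\hat{\alpha}(\pi^\star))\le\epsilon$ by definition of the ball, hence
\[
\mathcal{P}(\pi^\star,\alpha)-\mathcal{P}\big(\pi^\star,\hat{\alpha}(\pi^\star)\big)=\mathbb{E}_{\xi\sim\alpha}[f(\xi)]-\mathbb{E}_{\xi\sim\hat{\alpha}(\pi^\star)}[f(\xi)]\ge -W_1\big(\alpha,\hat{\alpha}(\pi^\star)\big)\ge -\epsilon.
\]
Rearranging gives the claimed bound, and since it holds uniformly over the ball it applies in particular to the follower's worst-case $\alpha$, which is exactly the stated $\mathcal{O}(\epsilon)$-robustness guarantee.

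The only point requiring care is well-posedness. Since $f(\xi)=-\log(1+e^{-\xi})$ grows like $\xi$ as $\xi\to-\infty$, the expectation $\mathbb{E}_{\alpha}[f]$ is finite only when $\alpha$ has a finite first moment, so the duality must be invoked within $\mathcal{P}_1(\mathbb{R})$ rather than on all of $\mathcal{P}(\mathbb{R})$. This is automatic in our setting: $\hat{\alpha}(\pi^\star)$ is a finitely supported empirical pushforward and hence lies in $\mathcal{P}_1(\mathbb{R})$, and any $\alpha$ at finite $W_1$ distance from it inherits a finite first moment. I expect this integrability check to be the only genuine (though minor) obstacle; everything else is a direct consequence of the Lipschitz--Wasserstein inequality.
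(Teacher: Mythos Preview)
Your proof is correct and follows essentially the same route as the paper: invoke the $1$-Lipschitzness of $f=\log\sigma$, apply the Kantorovich--Rubinstein inequality to bound $\mathbb{E}_\alpha[f]-\mathbb{E}_{\hat\alpha(\pi^\star)}[f]\ge -W_1(\alpha,\hat\alpha(\pi^\star))\ge -\epsilon$, and rearrange. Your added integrability check is a nice touch and is consistent with the paper's standing assumptions on the Wasserstein ball.
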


%\noindent\textit{Proof sketch.}
%Since $f$ is $1$-Lipschitz, $|\mathbb{E}_\alpha[f]-\mathbb{E}_{\hat{\alpha}}[f]|\le W_1(\alpha,\hat{\alpha})\le \epsilon$. \hfill$\square$
%In other words, the performance drop from $\hat{\alpha}(\pi^\star)$ to any $\alpha \in \mathcal{U}_\epsilon\!\bigl(\hat{\alpha}(\pi^\star)\bigr)$ is at most $\epsilon$.

\paragraph{Gap-space regret.}
We define the Gap-space regret of a policy $\pi$ on a distribution $\alpha$ as
$\mathrm{Regret}(\pi,\alpha):=\max_{\tilde{\pi}\in\Pi}\mathcal{P}(\tilde{\pi},\alpha)-\mathcal{P}(\pi,\alpha)$.

\begin{theorem}[SGPO regret bound]
\label{thm:sgpo-regret-gap}
For $\pi^\star$ solving~\eqref{eq:sgpo_robust_obj},
\[
\sup_{\alpha \in \mathcal{U}_\epsilon\!\bigl(\hat{\alpha}(\pi^\star)\bigr)} \mathrm{Regret}(\pi^\star,\alpha) \ \le\ 2\epsilon.
\]
\end{theorem}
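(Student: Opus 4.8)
The plan is to reduce the whole statement to a single Lipschitz estimate combined with the worst-case guarantee already in hand. Since $f=\log\sigma$ is $1$-Lipschitz (Assumption (iii)), Kantorovich--Rubinstein duality gives, for any two gap distributions $\mu,\nu\in\mathcal{P}(\mathbb{R})$,
\[
\bigl|\mathbb{E}_{\xi\sim\mu}[f(\xi)]-\mathbb{E}_{\xi\sim\nu}[f(\xi)]\bigr|\ \le\ \mathrm{Lip}(f)\,W_1(\mu,\nu)\ =\ W_1(\mu,\nu).
\]
This is the only analytic input; everything else is bookkeeping around the center $\hat{\alpha}(\pi^\star)$ of the ball and the drop bound of Theorem~\ref{thm:worstcase-gap}. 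The idea is to anchor both the competitor benchmark and $\pi^\star$'s own value to the common reference $\mathbb{E}_{\hat{\alpha}(\pi^\star)}[f]$ (the nominal performance at the center), so that the two $\epsilon$-deviations add.

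Concretely, I would fix an arbitrary $\alpha\in\mathcal{U}_\epsilon(\hat{\alpha}(\pi^\star))$, so that $W_1(\alpha,\hat{\alpha}(\pi^\star))\le\epsilon$, and bound the two terms of $\mathrm{Regret}(\pi^\star,\alpha)=\max_{\tilde{\pi}\in\Pi}\mathcal{P}(\tilde{\pi},\alpha)-\mathcal{P}(\pi^\star,\alpha)$ separately. For the subtrahend, Theorem~\ref{thm:worstcase-gap} gives directly $\mathcal{P}(\pi^\star,\alpha)\ge\mathcal{P}(\pi^\star,\hat{\alpha}(\pi^\star))-\epsilon=\mathbb{E}_{\hat{\alpha}(\pi^\star)}[f]-\epsilon$. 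For the benchmark I apply the Lipschitz estimate in the opposite direction: since $\mathcal{P}(\tilde{\pi},\alpha)=\mathbb{E}_{\xi\sim\alpha}[f(\xi)]$ and $W_1(\alpha,\hat{\alpha}(\pi^\star))\le\epsilon$, every competitor obeys $\mathcal{P}(\tilde{\pi},\alpha)\le\mathbb{E}_{\hat{\alpha}(\pi^\star)}[f]+\epsilon$, and passing to the supremum over $\tilde{\pi}\in\Pi$ preserves this upper bound (the supremum is attained by compactness of $\Pi$ and weak continuity of $\pi\mapsto(\Delta R_\pi)_\#\hat{P}$, Assumptions (i)--(ii)). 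Subtracting the two displays cancels the reference term and leaves $\mathrm{Regret}(\pi^\star,\alpha)\le(\mathbb{E}_{\hat{\alpha}(\pi^\star)}[f]+\epsilon)-(\mathbb{E}_{\hat{\alpha}(\pi^\star)}[f]-\epsilon)=2\epsilon$. Since $\alpha$ was arbitrary, taking the supremum over $\alpha\in\mathcal{U}_\epsilon(\hat{\alpha}(\pi^\star))$ yields the claimed $2\epsilon$ bound.

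The step that deserves care is the \emph{uniform} handling of the adversary: I must control $\mathbb{E}_\alpha[f]$ simultaneously for every $\alpha$ in the Wasserstein ball, not for a single perturbation, and tie both the benchmark and $\pi^\star$'s value to the same center. Kantorovich--Rubinstein duality is exactly what makes this uniform, since the bound $\mathbb{E}_\alpha[f]-\mathbb{E}_{\hat{\alpha}(\pi^\star)}[f]\le W_1(\alpha,\hat{\alpha}(\pi^\star))$ holds for all $\alpha$ with the single constant $\mathrm{Lip}(f)=1$; no $\alpha$-dependent constants enter, and the factor $2$ emerges as the sum of the worst-case drop incurred by $\pi^\star$ and the best-case gain available to a competitor. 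I would also flag, honestly, that this argument does not invoke the robust optimality of $\pi^\star$ beyond what Theorem~\ref{thm:worstcase-gap} already encodes, which signals that $2\epsilon$ is a loose (worst-over-both-sides) estimate rather than a tight one; tightening it would require exploiting the saddle structure of \eqref{eq:sgpo_robust_obj} directly, but that refinement is not needed for the stated inequality.
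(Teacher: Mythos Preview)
Your proposal is correct and follows essentially the same route as the paper's proof: anchor both terms of the regret to the center $\hat{\alpha}(\pi^\star)$ via the $1$-Lipschitzness of $\log\sigma$ (Kantorovich--Rubinstein), picking up $\epsilon$ from the benchmark side and $\epsilon$ from Theorem~\ref{thm:worstcase-gap} on $\pi^\star$'s side, then subtract. Your closing observation that the argument does not really exploit the robust optimality of $\pi^\star$ beyond Theorem~\ref{thm:worstcase-gap} is accurate and matches the paper's own bookkeeping, where the corresponding ``$\le 0$ by definition of $\pi^\star$'' term is in fact zero under the stated definition of $\mathcal{P}$.
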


Thus, SGPO is robust: under any shift of at most $\epsilon$, its regret is bounded by a constant factor of $\epsilon$.

\subsubsection{Comparison: DPO’s Linear Regret}
\label{sec:compare_dpo}

Let $\pi_{\mathrm{DPO}}\in\arg\max_{\pi}\mathcal{P}\bigl(\pi,\hat{\alpha}(\pi)\bigr)$ be a DPO solution on the empirical center. For any target $\alpha^\star$ with $\delta:=W_1\!\bigl(\alpha^\star,\hat{\alpha}(\pi_{\mathrm{DPO}})\bigr)$, we have:

\begin{theorem}[DPO regret lower bound]
\label{thm:dpo-lb-gap}
$\mathrm{Regret}\!\bigl(\pi_{\mathrm{DPO}},\alpha^\star\bigr)\ \ge\ \delta-2\epsilon.$ In particular, if $\delta \gg \epsilon$, DPO’s regret grows \emph{linearly} in $\delta$.
\end{theorem}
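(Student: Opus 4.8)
The plan is to prove the bound by the standard hindsight-comparison: since the benchmark term is a maximum over $\Pi$, any competitor lower-bounds it, so it suffices to exhibit one policy that does much better on $\alpha^\star$ than $\pi_{\mathrm{DPO}}$. Writing $\mathrm{OPT}(\alpha^\star):=\max_{\tilde\pi\in\Pi}\mathcal{P}(\tilde\pi,\alpha^\star)$, the regret is $\mathrm{OPT}(\alpha^\star)-\mathcal{P}(\pi_{\mathrm{DPO}},\alpha^\star)$, and I would lower-bound $\mathrm{OPT}(\alpha^\star)$ by the value of a competitor $\pi^\dagger$ that is adapted to the shifted target $\alpha^\star$ (e.g.\ the robust SGPO solution $\pi^\star$, whose uncertainty ball can be made to contain $\alpha^\star$). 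The whole argument then reduces to showing
\[
\mathcal{P}(\pi^\dagger,\alpha^\star)-\mathcal{P}(\pi_{\mathrm{DPO}},\alpha^\star)\ \ge\ \delta-2\epsilon ,
\]
i.e.\ the adapted competitor retains performance after the shift while $\pi_{\mathrm{DPO}}$, having committed to the empirical center $\hat{\alpha}(\pi_{\mathrm{DPO}})$, forfeits essentially the full transport cost $\delta$.

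For the $2\epsilon$ accounting I would lean on the two SGPO guarantees already in hand. One $\epsilon$ comes from Theorem~\ref{thm:worstcase-gap}: moving $\pi^\dagger=\pi^\star$ from its own center $\hat{\alpha}(\pi^\star)$ out to $\alpha^\star$ inside its $\epsilon$-ball costs at most $\epsilon$. The second $\epsilon$ comes from the robust optimality of $\pi^\star$: plugging $\pi_{\mathrm{DPO}}$ into the leader's max–min objective and using that the worst case inside an $\epsilon$-ball loses at most $\epsilon$ gives $\mathbb{E}_{\hat{\alpha}(\pi^\star)}[f]\ge \mathbb{E}_{\hat{\alpha}(\pi_{\mathrm{DPO}})}[f]-\epsilon$, so the empirical-optimum advantage that $\pi_{\mathrm{DPO}}$ enjoys over $\pi^\star$ is itself bounded by $\epsilon$. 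These two facts, combined with Kantorovich--Rubinstein duality — since $f=\log\sigma$ is $1$-Lipschitz, $|\mathbb{E}_\mu[f]-\mathbb{E}_\nu[f]|\le W_1(\mu,\nu)$ — and the triangle inequality in $W_1$ relating $\alpha^\star$, $\hat{\alpha}(\pi^\star)$, and $\hat{\alpha}(\pi_{\mathrm{DPO}})$, are exactly what produce the additive $2\epsilon$ slack subtracted in the claim.

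The hard part, and where I would spend the most care, is the matching \emph{lower} bound on $\pi_{\mathrm{DPO}}$'s degradation: one must show $\mathcal{P}(\pi_{\mathrm{DPO}},\hat{\alpha}(\pi_{\mathrm{DPO}}))-\mathcal{P}(\pi_{\mathrm{DPO}},\alpha^\star)\gtrsim\delta$, whereas Lipschitzness alone only yields the matching \emph{upper} bound $\le\delta$. The obstacle is that this lower bound cannot hold for an arbitrary $\alpha^\star$ — a benign shift that increases gaps would \emph{raise} $\pi_{\mathrm{DPO}}$'s value — so the statement is genuinely about the worst-case target at distance $\delta$. The way I would attack it is dual: let $g$ be the optimal $1$-Lipschitz Kantorovich potential realizing $W_1(\alpha^\star,\hat{\alpha}(\pi_{\mathrm{DPO}}))=\mathbb{E}_{\hat{\alpha}(\pi_{\mathrm{DPO}})}[g]-\mathbb{E}_{\alpha^\star}[g]=\delta$, and choose (or restrict to) the adversarial $\alpha^\star$ that transports mass along the direction on which $\log\sigma$ is steepest, so that the potential $g$ aligns with $f$ up to the robustness budget. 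Because $\pi_{\mathrm{DPO}}$ carries no protective margin, this shift converts the full transport distance into an equal performance loss, and only the $2\epsilon$ buffer can be recovered by the robust competitor. Assembling the benchmark lower bound, the two $\epsilon$-terms, and this degradation estimate yields $\mathrm{Regret}(\pi_{\mathrm{DPO}},\alpha^\star)\ge\delta-2\epsilon$, and hence the advertised linear growth in $\delta$ once $\delta\gg\epsilon$.
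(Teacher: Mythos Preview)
Your diagnosis of where the real content lies is accurate: Lipschitzness of $f=\log\sigma$ only gives the upper bound $|\mathcal{P}(\pi_{\mathrm{DPO}},\hat\alpha(\pi_{\mathrm{DPO}}))-\mathcal{P}(\pi_{\mathrm{DPO}},\alpha^\star)|\le\delta$, and converting this into a matching \emph{lower} bound is the whole theorem. You are also right that the bound cannot hold for an arbitrary $\alpha^\star$ at distance $\delta$; a benign shift would only help $\pi_{\mathrm{DPO}}$.

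However, your proposed resolution --- choosing a worst-case $\alpha^\star$ whose Kantorovich potential aligns with $f$ --- is not how the paper proceeds, and it does not match the theorem as stated: in the paper $\alpha^\star$ is a \emph{fixed} follower best response in $\mathcal{U}_\epsilon(\hat\alpha(\pi^\star))$, not an element you get to select adversarially. The paper instead closes the gap by \emph{adding a hypothesis}. In the appendix the result is reproved as a ``stability version'' under an explicit center-stability assumption: $W_1\bigl(\hat\alpha(\pi_{\alpha^\star}^*),\hat\alpha(\pi_{\mathrm{DPO}})\bigr)\le\kappa$ for some $\kappa\le\epsilon$, where $\pi_{\alpha^\star}^*\in\arg\max_\pi\mathbb{E}_{\alpha^\star}[u]$. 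With that in hand, the proof is just a chain of KR inequalities: relate $\mathbb{E}_{\alpha^\star}[u]$ for both $\pi_{\alpha^\star}^*$ and $\pi_{\mathrm{DPO}}$ to their values at $\hat\alpha(\pi_{\mathrm{DPO}})$, then use the stability bound together with $W_1(\hat\alpha(\pi^\star),\alpha^\star)\le\epsilon$ to extract $\kappa+\epsilon\le 2\epsilon$. No dual-potential construction or worst-case selection appears. So your instinct that extra structure is needed is exactly right; the paper supplies it as an assumption rather than deriving it, and your potential-alignment route, if carried out, would establish a different (sup-over-$\alpha^\star$) statement than the pointwise one claimed.
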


\begin{corollary}[SGPO advantage over DPO]
\label{cor:sgpo_advantage}
If $W_1\!\bigl(\hat{\alpha}(\pi),\alpha^\star\bigr)=\delta>2\epsilon$, then
\[
\frac{\mathrm{Regret}(\pi_{\mathrm{DPO}},\alpha^\star)}{\mathrm{Regret}(\pi^\star,\alpha^\star)}
\;\ge\; \frac{\delta-2\epsilon}{2\epsilon}.
\]
\end{corollary}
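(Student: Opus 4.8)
The plan is to derive the ratio bound by controlling the numerator and denominator separately with the two regret results just proved, and then dividing; the arithmetic is trivial, so the substance lies in verifying the hypotheses that feed each bound.

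First I would lower-bound the numerator by invoking Theorem~\ref{thm:dpo-lb-gap} with the target $\alpha^\star$. Since the relevant displacement is $\delta = W_1\!\bigl(\alpha^\star,\hat{\alpha}(\pi_{\mathrm{DPO}})\bigr)$, that theorem yields $\mathrm{Regret}(\pi_{\mathrm{DPO}},\alpha^\star) \ge \delta - 2\epsilon$. Under the standing hypothesis $\delta > 2\epsilon$ this lower bound is strictly positive, which is exactly what makes the stated ratio meaningful rather than vacuous. Next I would upper-bound the denominator using Theorem~\ref{thm:sgpo-regret-gap}: that result controls $\mathrm{Regret}(\pi^\star,\cdot)$ uniformly by $2\epsilon$ over the ball $\mathcal{U}_\epsilon\!\bigl(\hat{\alpha}(\pi^\star)\bigr)$, so provided $\alpha^\star$ lies in that ball we obtain $\mathrm{Regret}(\pi^\star,\alpha^\star) \le 2\epsilon$. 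Combining the two estimates and dividing — legitimate because both quantities are positive, so $\tfrac{a}{b}\ge\tfrac{A}{B}$ whenever $a\ge A>0$ and $0<b\le B$ — gives
\[
\frac{\mathrm{Regret}(\pi_{\mathrm{DPO}},\alpha^\star)}{\mathrm{Regret}(\pi^\star,\alpha^\star)}
\;\ge\;
\frac{\delta-2\epsilon}{2\epsilon},
\]
which is the claim.

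The step I expect to be the main obstacle is reconciling the two Wasserstein radii. The numerator's displacement $\delta$ is measured from the DPO center $\hat{\alpha}(\pi_{\mathrm{DPO}})$, whereas the denominator bound requires $\alpha^\star$ to sit within $\epsilon$ of the \emph{SGPO} center $\hat{\alpha}(\pi^\star)$; these centers differ in general because $\hat{\alpha}(\cdot)$ is policy-dependent. I would resolve this either by reading the corollary's hypothesis as supplying the SGPO-side containment $W_1\!\bigl(\alpha^\star,\hat{\alpha}(\pi^\star)\bigr)\le\epsilon$ directly (so that Theorem~\ref{thm:sgpo-regret-gap} applies verbatim), or by relating the two centers through the triangle inequality in $W_1$ and checking that the induced slack does not degrade the $2\epsilon$ denominator estimate. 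Getting this containment argument clean, rather than the division itself, is where the care is needed.
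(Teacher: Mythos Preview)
Your approach is correct and is exactly how the paper treats the corollary: an immediate division of the DPO lower bound (Theorem~\ref{thm:dpo-lb-gap}) by the SGPO upper bound (Theorem~\ref{thm:sgpo-regret-gap}), with no separate proof given. The containment issue you flag dissolves once you recall that $\alpha^\star$ here is the Stackelberg follower's equilibrium distribution (Definition~\ref{def:se}, and made explicit in the appendix via Assumption~\ref{ass:center-stability}), so $\alpha^\star \in \mathcal{U}_\epsilon\!\bigl(\hat{\alpha}(\pi^\star)\bigr)$ holds by construction and Theorem~\ref{thm:sgpo-regret-gap} applies verbatim---no triangle-inequality bridge between centers is needed.
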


Thus, SGPO’s robust policy can outperform DPO by a factor of $\tfrac{\delta}{2\epsilon} - 1$ under sufficiently large distribution shift $\delta$. SGPO builds \emph{in-sample} performance and \emph{out-of-sample} robustness into a single objective by optimizing against $\mathcal{U}_\epsilon$ in gap space. The $\mathcal{O}(\epsilon)$ worst-case degradation contrasts with DPO’s linear sensitivity to distribution mismatch $\delta$, mirroring our empirical results in Section~\ref{sec:experiments}.

\section{Practical Instantiation: SSAPO Algorithm}
\label{sec:ssapo}

We now present a practical and computationally tractable realization of the Stackelberg scheme from the theory section, called
\emph{Stackelberg Self-Annotated Preference Optimization (SSAPO)}. SSAPO implements the iterative leader--follower updates
of Theorem~\ref{thm:convergence_iterative_detailed} and \eqref{eq:iterative_update_distribution}--\eqref{eq:iterative_update_policy} for preference alignment.

\paragraph{Notation.}
Let $\sigma(u)=(1+e^{-u})^{-1}$ and define the \emph{margin} random variable
\[
\xi \ :=\ \Delta R_{\theta}(x,y_w,y_\ell)
\quad\text{for winner--loser pairs $(y_w,y_\ell)$,}\quad
\hat{\alpha}(\pi_t)\ =\ \tfrac{1}{N}\sum_{i=1}^N \delta_{\hat{\xi}_i},
\ \ \hat{\xi}_i:=\Delta R_{\theta_t}(x^i,y_w^i,y_\ell^i).
\]
The follower (adversary) chooses a distribution $\alpha$ over $\xi$ inside the $W_1$ Wasserstein ball
$\mathcal U_\epsilon(\hat\alpha(\pi_t))$ centered at $\hat\alpha(\pi_t)$, while the leader (policy) updates~$\theta$.

\paragraph{Implementation challenges addressed.}
1. \emph{Minimal human labels via self‑annotation.} We bootstrap from a small seed of human‑labeled preferences and enlarge the dataset by letting the current policy rank its own responses on unlabeled prompts.
2. \emph{Loss re‑representation for tractable DRO.} The follower minimizes a concave inner objective $\mathbb{E}_\alpha[\log\sigma(\xi)]$. Writing $\ell(\xi):=-\log\sigma(\xi)$, which is convex and $1$‑Lipschitz, turns the inner problem into $-\!\sup_\alpha \mathbb{E}_\alpha[\ell(\xi)]$ and allows a \emph{convex} PWL surrogate that yields a finite convex program for the follower.
3. \emph{Scalability via uniform grouping.} For large datasets, we solve the follower subproblem on groups and average the resulting worst‑case distributions, trading a small approximation for substantial speed‑ups.

\subsection{Follower objective: loss re‑representation and a closed form}
\label{subsec:follower-closedform}

Since $\log \sigma$ is concave and $1$‑Lipschitz, define the convex $1$‑Lipschitz loss
\[
\ell(\xi)\ :=\ -\log \sigma(\xi)\ =\ \log(1+e^{-\xi}),
\qquad
\min_{\alpha\in\mathcal U_\epsilon(\hat\alpha(\pi_t))}\mathbb{E}_\alpha[\log \sigma(\xi)]
\ =\
-\!\!\sup_{\alpha\in\mathcal U_\epsilon(\hat\alpha(\pi_t))}\mathbb{E}_\alpha[\ell(\xi)].
\label{eq:follower-as-max_restate}
\]
When $\Xi=\mathbb{R}$ (no support restrictions) and the ground metric is the absolute value, the worst‑case expectation of any $L$‑Lipschitz function equals the empirical mean plus $L\epsilon$. Specializing this \emph{convex reduction} result of \citet[Thm.~6.3]{Esfahani2018Data} to $\ell$ (whose Lipschitz constant is $1$) gives:

\begin{lemma}[Closed‑form follower in the unconstrained one‑dimensional case; \citealp{Esfahani2018Data}, Thm.~6.3]
\label{lem:ssapo-closed}
If $\Xi=\mathbb{R}$ and $\mathcal U_\epsilon$ is a $W_1$ ball (absolute ground metric), then
\[
\sup_{\alpha\in\mathcal U_\epsilon(\hat\alpha(\pi_t))}\mathbb{E}_\alpha[\ell(\xi)]
\ =\
\tfrac1N\sum_{i=1}^N \ell(\hat\xi_i)\ +\ \epsilon,
\qquad\Longleftrightarrow\qquad
\min_{\alpha}\mathbb{E}_\alpha[\log\sigma(\xi)]
\ =\
\tfrac1N\sum_{i=1}^N \log\sigma(\hat\xi_i)\ -\ \epsilon.
\]
\end{lemma}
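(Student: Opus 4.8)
The plan is to recognize this statement as a direct specialization of the Wasserstein-DRO strong-duality result of \citet[Thm.~6.3]{Esfahani2018Data} with Lipschitz constant $L=1$, and to make the additive $\epsilon$ transparent by tracking where the $1$-Lipschitz bound on $\ell$ enters. First I would collapse the two displayed claims into one: since $\ell(\xi)=-\log\sigma(\xi)$, we have $\mathbb{E}_\alpha[\log\sigma(\xi)]=-\mathbb{E}_\alpha[\ell(\xi)]$, so $\min_\alpha\mathbb{E}_\alpha[\log\sigma(\xi)]=-\sup_\alpha\mathbb{E}_\alpha[\ell(\xi)]$ over the same ball $\mathcal U_\epsilon(\hat\alpha(\pi_t))$. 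Hence the $\Longleftrightarrow$ is pure arithmetic, and it suffices to prove the worst-case identity $\sup_\alpha\mathbb{E}_\alpha[\ell]=\tfrac1N\sum_i\ell(\hat\xi_i)+\epsilon$ for the convex, $1$-Lipschitz loss $\ell$.

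The core step is strong duality for the inner supremum. For a $W_1$ ball with absolute ground metric on $\Xi=\mathbb{R}$ centered at the empirical measure $\hat\alpha(\pi_t)=\tfrac1N\sum_i\delta_{\hat\xi_i}$, the cited theorem supplies the dual reformulation
\[
\sup_{\alpha\in\mathcal U_\epsilon(\hat\alpha(\pi_t))}\mathbb{E}_\alpha[\ell(\xi)]
=\inf_{\lambda\ge0}\Big\{\lambda\epsilon+\tfrac1N\sum_{i=1}^N\sup_{\xi\in\mathbb{R}}\big[\ell(\xi)-\lambda|\xi-\hat\xi_i|\big]\Big\}.
\]
I would then evaluate the inner supremum using the Lipschitz bound $\ell(\xi)\le\ell(\hat\xi_i)+|\xi-\hat\xi_i|$. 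If $\lambda\ge1$, then $\ell(\xi)-\lambda|\xi-\hat\xi_i|\le\ell(\hat\xi_i)+(1-\lambda)|\xi-\hat\xi_i|\le\ell(\hat\xi_i)$, with equality at $\xi=\hat\xi_i$, so the supremum equals $\ell(\hat\xi_i)$. If instead $\lambda<1$, then sending $\xi\to-\infty$ makes $\ell(\xi)\sim-\xi$ outgrow $\lambda|\xi-\hat\xi_i|\sim-\lambda\xi$, forcing the supremum to $+\infty$. Thus the dual objective is $+\infty$ on $[0,1)$ and equals $\lambda\epsilon+\tfrac1N\sum_i\ell(\hat\xi_i)$ on $[1,\infty)$, where it is nondecreasing in $\lambda$; its infimum is attained at $\lambda=1$ and equals $\tfrac1N\sum_i\ell(\hat\xi_i)+\epsilon$. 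Substituting $\ell=-\log\sigma$ and applying the arithmetic from the first paragraph yields both displayed equalities.

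The only genuinely delicate ingredient is the strong-duality (no-gap) step, which is exactly what Thm.~6.3 of \citet{Esfahani2018Data} provides: its hypotheses require an upper semicontinuous loss and a convex closed support, both of which hold here since $\ell$ is continuous and $\Xi=\mathbb{R}$. I expect this to be the main obstacle only if one insists on a self-contained derivation; in that case I would establish the duality via Kantorovich--Rubinstein together with a minimax exchange, checking attainment of the optimal coupling and convexity of the dual objective in $\lambda$. Everything past the invocation of the duality is the elementary Lipschitz computation above, so the proof is short once the cited convex reduction is in place. I would also remark that the Lipschitz constant of $\ell$ equals $1$ exactly (its derivative $-\sigma(-\xi)$ has modulus approaching $1$ as $\xi\to-\infty$), which is what pins the optimal multiplier at $\lambda=1$ and hence the penalty at precisely $\epsilon$.
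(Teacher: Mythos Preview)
Your proposal is correct and aligns with the paper's approach: the paper does not supply a standalone proof but simply invokes \citet[Thm.~6.3]{Esfahani2018Data} and notes that $\ell$ is $1$-Lipschitz, which is precisely the reduction you outline. Your write-up goes further by unpacking the dual $\inf_{\lambda\ge0}\{\lambda\epsilon+\tfrac1N\sum_i\sup_\xi[\ell(\xi)-\lambda|\xi-\hat\xi_i|]\}$ and showing explicitly why the optimal multiplier is $\lambda=1$; this is a welcome elaboration but not a different route.
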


Although Lemma~\ref{lem:ssapo-closed} provides an exact closed form in the unconstrained 1‑D setting, in this paper we propose to solve a \emph{finite convex program} that also returns a discrete worst‑case distribution.

\subsection{Follower via a max‑of‑affine surrogate and a finite convex program}
\label{subsec:follower-pwl}

We approximate $\ell$ by a convex piecewise‑linear under‑approximation
\[
\tilde\ell(\xi)\ =\ \max_{1\le k\le K}\{a_k\,\xi+b_k\}\ \le\ \ell(\xi),
\quad\text{with}\quad
a_k=\ell'(\xi^{(k)})=-\sigma(-\xi^{(k)}),\ \
b_k=\ell(\xi^{(k)})-a_k\,\xi^{(k)}.
\]
Knots $\{\xi^{(k)}\}_{k=1}^K$ are chosen on a window $[a_t,b_t]$ in margin space (empirical quantiles or $[\min_i\hat\xi_i-\tau,\max_i\hat\xi_i+\tau]$ with small $\tau>0$). Endpoint tangents extend $\tilde\ell$ outside $[a_t,b_t]$, preserving $\tilde\ell\le \ell$ globally. Because $\tilde\ell\le\ell$, replacing $\ell$ by $\tilde\ell$ in \eqref{eq:follower-as-max_restate} yields an \emph{upper bound} on the original inner minimum, which tightens with~$K$:

\begin{proposition}[Monotone tightening in $K$]\label{prop:mono-K}
Let $\hat\alpha$ be a probability measure on $\mathbb{R}$ with finite first moment and let $\mathcal U_\epsilon(\hat\alpha)$ be the $W_1$ ball of radius $\epsilon\ge0$.
Let $\ell(\xi)=-\log\sigma(\xi)$ and let $(\tilde\ell_K)_{K\ge1}$ be convex piecewise-linear underestimators of $\ell$ such that
$\tilde\ell_K\le \tilde\ell_{K+1}\le \ell$ pointwise and $\tilde\ell_K\uparrow \ell$.
Define
\[
v^\star := \inf_{\alpha\in\mathcal U_\epsilon(\hat\alpha)} \mathbb E_\alpha[\log\sigma(\xi)]
= -\sup_{\alpha\in\mathcal U_\epsilon(\hat\alpha)} \mathbb E_\alpha[\ell(\xi)],
\quad
v_K := -\sup_{\alpha\in\mathcal U_\epsilon(\hat\alpha)} \mathbb E_\alpha[\tilde\ell_K(\xi)].
\]
Then, for all $K\ge1$:  
(i) $v_K\ge v^\star$ (valid upper bound);  
(ii) $v_{K+1}\le v_K$ (monotone in $K$);  
(iii) $v_K\downarrow v^\star$ as $K\to\infty$.
\end{proposition}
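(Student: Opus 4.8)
The plan is to prove the three claims (i)--(iii) by exploiting the structure of the worst-case expectation as a supremum over a Wasserstein ball, which behaves monotonically under pointwise ordering of the integrand. Throughout I write $S[g]:=\sup_{\alpha\in\mathcal U_\epsilon(\hat\alpha)}\mathbb E_\alpha[g(\xi)]$ for the worst-case operator acting on a function $g$, so that $v^\star=-S[\ell]$ and $v_K=-S[\tilde\ell_K]$. The key elementary fact I would establish first is \emph{monotonicity of $S$}: if $g_1\le g_2$ pointwise, then $\mathbb E_\alpha[g_1]\le\mathbb E_\alpha[g_2]$ for every fixed $\alpha$, and taking the supremum over $\alpha\in\mathcal U_\epsilon(\hat\alpha)$ on both sides preserves the inequality, giving $S[g_1]\le S[g_2]$. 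This single observation drives all three parts once combined with the hypotheses $\tilde\ell_K\le\tilde\ell_{K+1}\le\ell$.

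For part (i), since $\tilde\ell_K\le\ell$ pointwise, monotonicity of $S$ gives $S[\tilde\ell_K]\le S[\ell]$, and negating reverses the inequality to yield $v_K=-S[\tilde\ell_K]\ge -S[\ell]=v^\star$. Part (ii) is identical in spirit: from $\tilde\ell_K\le\tilde\ell_{K+1}$ we get $S[\tilde\ell_K]\le S[\tilde\ell_{K+1}]$, hence $v_{K+1}=-S[\tilde\ell_{K+1}]\le -S[\tilde\ell_K]=v_K$. These two parts are routine and require only the monotonicity lemma together with the fact that each $\tilde\ell_K$, being a finite max of affine functions, has finite worst-case expectation over the $W_1$ ball (each affine piece is $1$-Lipschitz by construction, so by the convex reduction result of \citet[Thm.~6.3]{Esfahani2018Data} invoked in Lemma~\ref{lem:ssapo-closed}, $S[\tilde\ell_K]$ is finite whenever $\hat\alpha$ has finite first moment).

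The substantive part is (iii), the convergence $v_K\downarrow v^\star$. By (i) and (ii) the sequence $(v_K)$ is nonincreasing and bounded below by $v^\star$, so it converges to some limit $v_\infty\ge v^\star$; the task is to show $v_\infty=v^\star$, equivalently $S[\tilde\ell_K]\uparrow S[\ell]$. The plan is to combine two ingredients. First, the Lipschitz/duality structure: because $\ell$ is convex and $1$-Lipschitz and each $\tilde\ell_K$ is a max of $1$-Lipschitz affine pieces, one has the explicit Kantorovich-type identities $S[\ell]=\mathbb E_{\hat\alpha}[\ell]+\epsilon$ and $S[\tilde\ell_K]=\mathbb E_{\hat\alpha}[\tilde\ell_K]+\mathrm{Lip}(\tilde\ell_K)\,\epsilon$ in the unconstrained $1$-D case from Lemma~\ref{lem:ssapo-closed}. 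Since $\tilde\ell_K\uparrow\ell$ with slopes $a_k=-\sigma(-\xi^{(k)})$ approaching the full range $(-1,0)$ of $\ell'$, we have $\mathrm{Lip}(\tilde\ell_K)\uparrow 1$. Thus convergence reduces to showing $\mathbb E_{\hat\alpha}[\tilde\ell_K]\uparrow\mathbb E_{\hat\alpha}[\ell]$, which follows from the monotone convergence theorem applied to the nondecreasing sequence $\tilde\ell_K\uparrow\ell$ (using that $\tilde\ell_1$ has finite $\hat\alpha$-expectation as a lower envelope, so we may normalize to a nonnegative increasing sequence).

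The main obstacle I anticipate is making part (iii) rigorous \emph{without} silently assuming the unconstrained closed form, since Proposition~\ref{prop:mono-K} is stated for a general $\hat\alpha$ with finite first moment and does not restrict the support. The clean route is therefore to argue convergence directly at the level of the supremum rather than through the closed form: fix $\eta>0$ and pick a near-optimal $\alpha_\eta\in\mathcal U_\epsilon(\hat\alpha)$ with $\mathbb E_{\alpha_\eta}[\ell]\ge S[\ell]-\eta$; then $S[\tilde\ell_K]\ge\mathbb E_{\alpha_\eta}[\tilde\ell_K]$, and monotone convergence $\tilde\ell_K\uparrow\ell$ gives $\mathbb E_{\alpha_\eta}[\tilde\ell_K]\to\mathbb E_{\alpha_\eta}[\ell]\ge S[\ell]-\eta$ provided $\alpha_\eta$ has finite first moment (which it does, being within finite $W_1$-distance of $\hat\alpha$). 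Hence $\liminf_K S[\tilde\ell_K]\ge S[\ell]-\eta$ for every $\eta>0$, so $\liminf_K S[\tilde\ell_K]\ge S[\ell]$; combined with $S[\tilde\ell_K]\le S[\ell]$ from (i) this forces $S[\tilde\ell_K]\to S[\ell]$, i.e. $v_K\downarrow v^\star$. The delicate point to check carefully is the uniform integrability needed to pass the limit inside $\mathbb E_{\alpha_\eta}$, which I would secure via the $1$-Lipschitz bound $0\le\ell-\tilde\ell_K$ and the finite first moment of $\alpha_\eta$ under the dominated/monotone convergence theorem.
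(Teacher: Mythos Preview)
Your proposal is correct and matches the paper's approach: parts (i)--(ii) follow from monotonicity of the supremum operator under pointwise ordering, and part (iii) follows from monotone convergence at each fixed $\alpha$ combined with the standard interchange of limit and supremum. Your $\alpha_\eta$ argument and the normalization by $\tilde\ell_1$ to justify monotone convergence are in fact more explicit than the paper's proof, which simply asserts that pointwise convergence $\mathbb E_\alpha[\tilde\ell_K]\uparrow\mathbb E_\alpha[\ell]$ implies $\sup_\alpha\mathbb E_\alpha[\tilde\ell_K]\uparrow\sup_\alpha\mathbb E_\alpha[\ell]$; your detour through the closed-form Lipschitz identity is unnecessary (as you yourself note) but harmless.
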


\paragraph{A finite convex program for the worst case (after \citealp{Esfahani2018Data}, Thm.~4.4).}
For losses representable as a \emph{pointwise maximum of finitely many affine functions},
\[
\sup_{\alpha\in\mathcal U_\epsilon(\hat\alpha(\pi_t))}\mathbb{E}_\alpha\!\Big[\max_{k\le K}\{a_k\xi+b_k\}\Big]
\]
admits a \emph{finite convex program} whose solution is a \emph{discrete extremal distribution}. Specializing
\citet[Thm.~4.4]{Esfahani2018Data} to our one‑dimensional $\xi$ and absolute ground metric yields:

\begin{theorem}[Finite convex program for max‑of‑affine (PWL convex) losses; specialization of \citet{Esfahani2018Data}, Thm.~4.4]
\label{thm:worst_case_concave}
Let $\tilde\ell(\xi)=\max_{k\le K}\{a_k\xi+b_k\}$ and $\Xi\subseteq \mathbb{R}$.
Introduce, for each sample $i$ and piece $k$, a mixing weight $s_{ik}\!\ge\!0$ and a displacement $q_{ik}\!\in\!\mathbb{R}$.
Then
\begin{equation}
\label{eq:widetilde_dro}
\sup_{\alpha\in \mathcal{U}_\epsilon(\hat{\alpha}(\pi_t))}
\mathbb{E}_{\alpha}[\tilde\ell(\xi)]
\;=\;
\max_{\{s_{ik},\,q_{ik}\}}
\frac{1}{N} \sum_{i=1}^N \sum_{k=1}^K
\Bigl[
s_{ik}\,(a_k\,\hat\xi_i + b_k)\ -\ a_k\,q_{ik}
\Bigr]
\end{equation}

subject to the Wasserstein and feasibility constraints
\[
\frac{1}{N}\sum_{i=1}^N\sum_{k=1}^K |q_{ik}|\ \le\ \epsilon,
\qquad
\sum_{k=1}^K s_{ik}=1,\ \ s_{ik}\ge 0\quad(\forall i),
\qquad
a_t\,s_{ik}\ \le\ s_{ik}\,\hat\xi_i - q_{ik}\ \le\ b_t\,s_{ik}\quad(\forall i,k).
\]
An extremal discrete measure
$\displaystyle \alpha^*_t=\frac{1}{N}\sum_{i=1}^N\sum_{k=1}^K s_{ik}^*\,
\delta_{\hat\xi_i - q_{ik}^*/s_{ik}^*}$
attains the supremum.
\end{theorem}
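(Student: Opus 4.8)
The plan is to establish~\eqref{eq:widetilde_dro} by a direct primal argument rather than by invoking the full dual machinery, proving the two inequalities separately so that the extremal measure falls out for free; this specializes and recovers \citet[Thm.~4.4]{Esfahani2018Data}. The organizing idea is that every member of $\mathcal U_\epsilon(\hat\alpha(\pi_t))$ is obtained by splitting each empirical atom $\hat\xi_i$ (mass $1/N$) and transporting the pieces, and that the index $k$ records which affine piece $a_k\xi+b_k$ of $\tilde\ell$ is active at the destination. I will write the destination of piece $(i,k)$ as $z_{ik}=\hat\xi_i-q_{ik}/s_{ik}$, so that $s_{ik}$ is the fractional mass leaving atom $i$ toward the $k$-th region and $q_{ik}=s_{ik}(\hat\xi_i-z_{ik})$ is the signed mass-flux; the transport cost of that piece is then exactly $|q_{ik}|/N$.

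First I would prove $\ge$: the finite program dominates the worst-case expectation. Fix any $\alpha\in\mathcal U_\epsilon(\hat\alpha(\pi_t))$ and an optimal $W_1$ coupling $\gamma=\sum_i\frac1N\delta_{\hat\xi_i}\otimes\gamma_i$. Partition $\Xi$ into the activity regions $R_k=\{\xi:\tilde\ell(\xi)=a_k\xi+b_k\}$, set $s_{ik}=\gamma_i(R_k)$, let $z_{ik}$ be the $\gamma_i$-conditional mean on $R_k$, and put $q_{ik}=s_{ik}(\hat\xi_i-z_{ik})$. Because $\tilde\ell$ is affine on each $R_k$, integrating term by term gives $\mathbb E_\alpha[\tilde\ell]=\frac1N\sum_{i,k}[s_{ik}(a_k\hat\xi_i+b_k)-a_k q_{ik}]$, i.e.\ the program objective exactly. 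Convexity of $|\cdot|$ (Jensen) yields $\int_{R_k}|\hat\xi_i-\xi|\,d\gamma_i\ge|q_{ik}|$, whence $\frac1N\sum_{i,k}|q_{ik}|\le W_1(\alpha,\hat\alpha(\pi_t))\le\epsilon$; the conditions $\sum_k s_{ik}=1$, $s_{ik}\ge0$ are immediate, and $z_{ik}\in[a_t,b_t]$ (a conditional mean of points in $\Xi$) gives the support bounds $a_t s_{ik}\le s_{ik}\hat\xi_i-q_{ik}\le b_t s_{ik}$. Thus every $\alpha$ is matched by a feasible $(s,q)$ of equal objective, so the program value is $\ge\sup_\alpha\mathbb E_\alpha[\tilde\ell]$.

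Conversely, for $\le$, any feasible $(s,q)$ reconstructs a candidate $\alpha=\frac1N\sum_{i,k}s_{ik}\delta_{z_{ik}}$ with $z_{ik}=\hat\xi_i-q_{ik}/s_{ik}$, under the convention that pieces with $s_{ik}=0$ are dropped (the support constraint forces $q_{ik}=0$ there). The support bounds place each $z_{ik}\in[a_t,b_t]=\Xi$, and the plan sending $\frac1N s_{ik}$ mass from $\hat\xi_i$ to $z_{ik}$ has cost $\frac1N\sum_{i,k}|q_{ik}|\le\epsilon$, so $\alpha\in\mathcal U_\epsilon(\hat\alpha(\pi_t))$. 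Since $\tilde\ell(z_{ik})\ge a_k z_{ik}+b_k$, we get $\mathbb E_\alpha[\tilde\ell]\ge\frac1N\sum_{i,k}[s_{ik}(a_k\hat\xi_i+b_k)-a_k q_{ik}]$, and maximizing over feasible $(s,q)$ gives $\sup_\alpha\mathbb E_\alpha[\tilde\ell]\ge$ the program value. The two inequalities yield~\eqref{eq:widetilde_dro}. For the extremal measure, apply the reconstruction to an optimizer $(s^\star,q^\star)$: the resulting $\alpha^\star_t$ is feasible and satisfies $\mathbb E_{\alpha^\star_t}[\tilde\ell]\ge$ (program value) $=\sup\ge\mathbb E_{\alpha^\star_t}[\tilde\ell]$, forcing equality, so $\alpha^\star_t$ attains the supremum.

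The main obstacle is the $\ge$ direction's reduction of an arbitrary, possibly non-discrete, target $\alpha$ to at most $K$ atoms per source. The two facts to nail down are that (i) replacing the mass of $\gamma_i$ on $R_k$ by a single atom at its conditional mean preserves $\mathbb E_\alpha[\tilde\ell]$ — which hinges on $\tilde\ell$ being \emph{affine} on $R_k$, not merely convex — and (ii) this re-centering never raises the transport cost, by Jensen for $|\cdot|$. Care is also needed at region overlaps (ties in the max, a $\gamma$-null nuisance resolved by any measurable tie-breaking) and at degenerate pieces $s_{ik}=0$, handled by the stated conventions. The phrase \emph{attains the supremum} is underwritten by weak compactness of the $W_1$ ball (tight and weakly closed, via Prokhorov), though the explicit $\alpha^\star_t$ above already exhibits an attainer. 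As a cross-check and alternative route, Kantorovich/Lagrangian duality gives $\sup_\alpha\mathbb E_\alpha[\tilde\ell]=\inf_{\lambda\ge0}\{\lambda\epsilon+\frac1N\sum_i\sup_{\xi\in\Xi}[\tilde\ell(\xi)-\lambda|\xi-\hat\xi_i|]\}$, and LP-dualizing the inner maximization of a max-of-affine over the interval $[a_t,b_t]$ reproduces precisely~\eqref{eq:widetilde_dro} — the one-dimensional specialization of \citet[Thm.~4.4]{Esfahani2018Data}.
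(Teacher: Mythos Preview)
Your argument is correct. Note, however, that the paper does not supply its own proof of this theorem: it is stated in the text as a direct specialization of \citet[Thm.~4.4]{Esfahani2018Data}, with only the interpretive remark that $z_{ik}=\hat\xi_i-q_{ik}/s_{ik}$ identifies $s_{ik}$ as a mass-split and $q_{ik}$ as a transport displacement. The original Esfahani--Kuhn proof proceeds through Lagrangian/strong LP duality---essentially the route you sketch in your final paragraph as a cross-check: dualize the Wasserstein constraint to obtain $\inf_{\lambda\ge0}\{\lambda\epsilon+\frac1N\sum_i\sup_{\xi\in\Xi}[\tilde\ell(\xi)-\lambda|\xi-\hat\xi_i|]\}$, then LP-dualize each inner sup of a max-of-affine over a polyhedron.

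Your primal two-sided argument is genuinely different and somewhat more elementary in this one-dimensional specialization. The $\ge$ direction---disintegrating an arbitrary coupling, partitioning by activity regions $R_k$, and collapsing each $\gamma_i|_{R_k}$ to a Dirac at its barycenter---exploits cleanly that $\tilde\ell$ is \emph{affine} (not merely convex) on each $R_k$, so the collapse preserves the objective exactly while Jensen on $|\cdot|$ controls the cost. The $\le$ direction is then just the reconstruction you describe. This buys you a self-contained argument that avoids strong duality and exhibits the extremal $\alpha^\star_t$ constructively, at the price of being tailored to the piecewise-affine structure; the dual route of \citet{Esfahani2018Data} is more general (it handles arbitrary closed convex $\Xi\subset\mathbb{R}^m$ and richer loss classes) but requires the conic-duality machinery. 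Your handling of the edge cases (measurable tie-breaking on $\partial R_k$, the convention $q_{ik}=0$ when $s_{ik}=0$ forced by the perspective constraints) is adequate.
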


\noindent
The change of variables $z_{ik}=\hat\xi_i-q_{ik}/s_{ik}$ (when $s_{ik}>0$) reveals that
$s_{ik}$ splits the unit mass at $\hat\xi_i$ across pieces, while $q_{ik}$ transports that mass in margin space.
The interval constraints are linear ``perspective'' constraints enforcing $\xi\in[a_t,b_t]$.

\subsection{SSAPO workflow}
\label{subsec:ssapo-workflow}
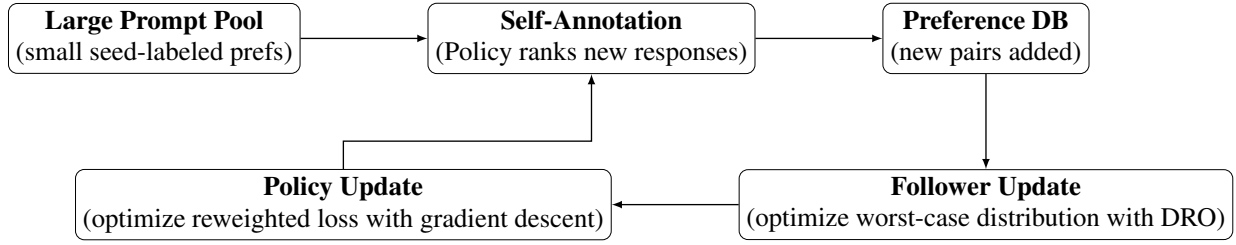
\begin{figure}[t]
\centering
\resizebox{0.95\columnwidth}{!}{%
\begin{tikzpicture}[
    font=\large, % <-- Makes text in the nodes larger
    node distance=2.0cm,
    >=latex,
    box/.style={
      rectangle,
      draw=black,
      rounded corners,
      align=center,
      minimum width=2.4cm,
      minimum height=1.1cm
    }
]
\usetikzlibrary{arrows.meta, positioning, calc}

%--- Nodes ---
\node[box] (prompts) {
  \textbf{Large Prompt Pool} \\
  (small seed-labeled prefs)
};
\node[box, right=2.0cm of prompts] (selfanno) {
  \textbf{Self-Annotation} \\
  (Policy ranks new responses)
};
\node[box, right=2.0cm of selfanno] (prefdb) {
  \textbf{Preference DataBase} \\
  (new pairs added)
};
\node[box, below=1.5cm of prefdb] (dro) {
  \textbf{Follower Update} \\
  (optimize worst-case distribution with DRO)
};
\node[box, left=2.0cm of dro] (policy) {
  \textbf{Policy Update} \\
  (optimize reweighted loss with gradient descent)
};

%--- Arrows ---
\draw[->, line width=0.6pt] (prompts) -- (selfanno);
\draw[->, line width=0.6pt] (selfanno) -- (prefdb);
\draw[->, line width=0.6pt] (prefdb) -- (dro);
\draw[->, line width=0.6pt] (dro) -- (policy);
\draw[->, line width=0.6pt] (policy) -- ++(0,1.0) -| (selfanno);

\end{tikzpicture}%
}
\caption{%
\textbf{SSAPO workflow.} 
We maintain a large prompt pool and a small set of seed-labeled preferences. 
The policy self-annotates new prompts by generating and ranking responses, 
thus expanding the preference database. 
A follower then identifies a worst-case distribution 
for these preferences, 
and the leader (policy) is updated accordingly. 
This process repeats for iterations.
}
\label{fig:ssapo_framework}
\end{figure}

Figure~\ref{fig:ssapo_framework} summarizes the SSAPO workflow. 
Starting with a small seed of human-labeled preferences plus a large unlabeled pool, 
we proceed in the following loop, at iteration $t$:
\begin{compactenum}
\item \textbf{Self‑annotation.} Sample prompts from the unlabeled pool, generate multiple responses, and let the current policy $\pi_{\theta_t}$ rank them to create new preference pairs $(y_w,y_\ell)$.
\item \textbf{Empirical center.} Compute $\hat\xi_i=\Delta R_{\theta_t}(x^i,y_w^i,y_\ell^i)$ and set
$\hat\alpha(\pi_t)=\tfrac1N\sum_i\delta_{\hat\xi_i}$.
\item \textbf{Follower (DRO).} Build $\tilde\ell(\xi)=\max_{k\le K}(a_k\xi+b_k)$ using tangents at knots $\{\xi^{(k)}\}$; choose $[a_t,b_t]$ as above. Solve the convex program in Theorem~\ref{thm:worst_case_concave} to obtain the worst‑case $\alpha_t^*$.
\item \textbf{Leader update.} Update $\theta$ by minimizing
\[
\theta_{t+1}\ \in\ \arg\min_{\theta}\ 
\mathbb{E}_{(x,y_w,y_\ell)\sim\mathcal{D}}\,
\mathbb{E}_{\xi\sim\alpha_t^*}\bigl[\ell\bigl(\Delta R_{\theta}(x,y_w,y_\ell)\bigr)\bigr],
\]
implemented as standard minibatch SGD on the preference pairs while incorporating the
per‑sample mixture weights induced by $(s^*,q^*)$.
\end{compactenum}

Repeating for $T$ total iterations yields the final aligned policy $\pi_{\theta_T}$. A more explicit version of SSAPO is provided in Algorithm~\ref{algo:ssapo} (Appendix~\ref{sec:SSAPO_algorithm+analysis}), along with its computational complexity analysis.

\paragraph{How $\alpha_t^*$ enters the leader step.}
The optimizer $(s^*,q^*)$ associates to each training pair $i$ a set of active affine pieces (weights $\{s_{ik}^*\}_k$)
and a transport direction (through $q_{ik}^*/s_{ik}^*$). In practice we (i) weight the per‑pair loss contributions by $\{s_{ik}^*\}_k$ and (ii) optionally add a proximal penalty nudging the current margin toward $\hat\xi_i - q_{ik}^*/s_{ik}^*$ for stability.

\subsection{Scalability and Complexity}
\label{subsec:ssapo-practice}

\paragraph{Grouping for large $N$.}
When $N$ is large (e.g.\ $\!10^5\!$ or more preferences), solving the convex program 
in Step~(Worst-Case Distribution) can be expensive.  A popular heuristic partitions 
$\{\hat{\xi}_1,\dots,\hat{\xi}_N\}$ into $M$ groups (each of size $G=N/M$), 
and solves the finite program \eqref{eq:widetilde_dro} separately within each group.   

The resulting distributions $\alpha^*_m$ are then averaged (or merged proportionally):
$$
P_{\mathrm{final}}^*
\;=\;
\frac{1}{M}
\sum_{m=1}^M
\alpha^*_m.
$$
While not an \emph{exact} solution to the global $N$-sample problem, 
this still confers substantial robustness while reducing complexity from 
$N$ to $G \ll N$ in each subproblem. In summary, this grouping approach greatly reduces memory/compute cost, and is parallelizable. Section~\ref{sec:SSAPO_approximation} in the appendix remarks the approximation effects of SSAPO algorithm design on SGPO guarantees

\paragraph{Complexity.}
The program in Theorem~\ref{thm:worst_case_concave} introduces $O(NK)$ variables and $O(NK)$ constraints and becomes a pure LP after linearizing $|q_{ik}|$ via standard slack variables. With warm starts, per‑iteration time scales nearly linearly in $N$ and linearly in $K$. A more detailed algorithmic complexity analysis is in Appendix~\ref{sec:SSAPO_algorithm+analysis}.

\paragraph{Why a finite convex program is necessary in practice.}
We argue the special case of unconstrained one–dimensional case the follower (that  collapses to an exact closed form from Lemma~\ref{lem:ssapo-closed}) is too restrictive for real pipelines: we (i) clip or restrict the support of margins for stability,
(ii) require extremal \emph{discrete} adversaries whose mixture weights can be recycled to form stochastic gradients for the leader,
and (iii) need a controllable approximation whose accuracy improves monotonically.
Accordingly, SSAPO replaces the inner concave expectation
$\min_{\alpha\in\mathcal U_\epsilon(\hat\alpha(\theta_t))}\mathbb E_\alpha[\log\sigma(\xi)]$
by a tractable \emph{finite convex program} obtained from a convex piecewise-linear surrogate
$\tilde\ell_K(\xi)=\max_{k\le K}\{a_k\xi+b_k\}$ of $\ell(\xi)=-\log\sigma(\xi)$.
This yields an upper bound $-\sup_\alpha\mathbb E_\alpha[\tilde\ell_K(\xi)]$ on the inner minimum that
tightens monotonically in $K$ (Proposition~\ref{prop:mono-K}) and becomes exact as $K\!\to\!\infty$.
Moreover, specializing \citet[Thm.~4.4]{Esfahani2018Data} to our one–dimensional absolute ground metric produces a
finite convex program whose optimizer is a \emph{discrete} extremal distribution supported on at most $NK$ atoms:
exactly the structure we need to implement sample reweighting and efficient leader updates.
In short, the finite convex program is both \emph{theoretically correct} (via \citealp[Thm.~6.3 and Thm.~4.4]{Esfahani2018Data})
and \emph{operationally necessary} for SSAPO’s \textbf{stability} and efficiency.

\section{Experiments}
\label{sec:experiments}
In this section, we present an extensive empirical evaluation of our proposed \emph{Stackelberg Self-Annotated Preference Optimization} (SSAPO) algorithm.
\vspace{-0.1 in}
\subsection{Experiment Setup}
We introduce the basic experiment setup in this subsection (Cf. Appendix~\ref{sec:experimental_details_more} for more details). The settings are mostly consistent to the recent literature \citet{Kim2025Spread}.
\textbf{Datasets}. We used the UltraFeedback dataset~\cite{Cui2023Ultrafeedback}, containing 60K samples. A seed of 2K human-labeled preferences (3.3\% of total 60K data) was used for initial training. The rest (58K samples) were split into three subsets (8K, 20K, and 30K) for self-annotation in iterative stages.

\textbf{Models}.
We use the supervised fine-tuned Mistral-7B-0.1~\cite{jiang2023mistral} as the initial model $\pi_{\text{init}}$ and LLaMA-3-8B\footnote{\href{https://huggingface.co/meta-LLaMA/Meta-LLaMA-3-8B-Instruct}{meta-LLaMA/Meta-LLaMA-3-8B-Instruct}} for compatibility checks. All models are fine-tuned on UltraChat~\cite{ding2023enhancing}.

\textbf{Evaluations}.
We use \textbf{AlpacaEval 2.0}~\cite{dubois2024alpacafarm} for instruction-following tasks and \textbf{MT-Bench}~\cite{zheng2023judging} to evaluate multi-turn performance across tasks like math, coding, and writing. Both benchmarks assess the alignment with human preferences and the model’s functional proficiency. We stress that AlpacaEval 2.0 is especially useful for measuring how well the model aligns with general user preferences (and controlling for length bias), whereas MT-Bench tests the model’s functional capabilities across a broader range of tasks.

\textbf{Implementation.}
We initialize training with DPO on 2K seed samples, followed by 3 iterative stages of self-annotation. In each stage, new preferences are generated via a policy that ranks response pairs. A distributionally robust optimization (DRO) is performed using sequential least squares programming (SLSQP) to adjust the model based on adversarial shifts within a Wasserstein ball. The group size $G$ for parallel computation is set to 100 unless otherwise specified.

\textbf{Baselines.}
We consider the following baselines for comparison: (1) DPO, which performs DPO training only on the seed data. (2) Iter DPO~\cite{Xiong2024Iterative}, which iteratively generates preference data using an external reward model (PairRM)~\cite{Jiang2023LLM} or LLM-as-judge~\cite{Li2024LLMs}. (3) SPA~\cite{Kim2025Spread}, which iteratively generates preference data using implicit reward model.

\vspace{-0.1 in}
\begin{table*}[ht]
\centering
\caption{\textbf{Main results.} Evaluation results
 on AlpacaEval 2.0 and MT-Bench with different variants of Mistral-7B-v0.1 and LLaMA3-8B. All models use the same 2K preference data with gold label as seed data.  The best and second-best results are highlighted in bold and underlined, respectively. Most of the baseline results are from \cite{Kim2025Spread}.}
\resizebox{\columnwidth}{!}{
\begin{tabular}{lccc}
\toprule
\textbf{Models} &  \multicolumn{2}{c}{\textbf{AlpacaEval 2.0}} & \textbf{MT-Bench} \\
\cmidrule(lr){2-4}
&  \textbf{Len-control. Win Rate (\%)} & \textbf{Win Rate vs. GPT-4 (\%)} & \textbf{Avg. Score (0-10)} \\

\midrule
Mistral-7B-DPO &  9.03 & 7.68 & 6.81 \\
Mistral-7B-Iter DPO (PairRM)  & 11.87 & 9.46 & 6.98 \\
Mistral-7B-Iter DPO (LLM-as-judge)  & 9.28 & 9.18 & 6.67 \\
LLaMA3-8B-DPO & 20.61 & 18.04 & - \\
Mistral-7B-SPA  & 15.39 & 21.13 & 6.94 \\
LLaMA3-8B-SPA  & 21.85 & 24.95 & 7.86 \\
\midrule
Mistral-7B-SSAPO (Ours)  & \underline{24.44} & \underline{35.82} & 6.68 \\
LLaMA3-8B-SSAPO (Ours)  & \bf{33.33} & \bf{40.12} & \bf{8.03} \\
\bottomrule
\end{tabular}}
\label{tab:main_results}
\end{table*}

\begin{table*}[ht]
\centering
\caption{\textbf{Comparison with different variants of Mistral.} Evaluation results on AlpacaEval 2.0 and MT-Bench with different variants of Mistral-7B-v0.1. The best scores are highlighted with bold. The baseline results are from \cite{Kim2025Spread} and \citep{dubois2024length}.}
\resizebox{\columnwidth}{!}{
\begin{tabular}{lcccc}
\toprule
\textbf{Models} & \textbf{Gold Label (\%)} & \multicolumn{2}{c}{\textbf{AlpacaEval 2.0}} & \textbf{MT-Bench} \\
\cmidrule(lr){3-5}
& & \textbf{Len-control. Win Rate (\%)} & \textbf{Win Rate vs. GPT-4 (\%)} & \textbf{Avg. Score (0-10)} \\

\midrule
Mistral-7B-v0.1 & - & 0.17 & 0.50 & 3.25 \\
Zephyr-7B-$\beta$ & 100 & 11.75 & 10.03 & 6.87 \\
Mistral-7B-SFT & - & 7.58 & 4.72 & 6.34 \\
Mistral-7B-DPO & 3.3 & 9.03 & 7.68 & 6.81 \\
Mistral-Large (123B) & - & 21.4 & 32.7 & - \\
Mistral-7B-SSAPO (Ours) & 3.3 & \bf{24.44} & \bf{35.82} & 6.68 \\
\bottomrule
\end{tabular}}
\label{tab:mistral_variants}
\end{table*}

\vspace{-0.1 in}
\subsection{Main Results}
\label{subsec:main_results}

Table~\ref{tab:main_results} summarizes our primary comparison on \textbf{AlpacaEval 2.0} and \textbf{MT-Bench}. All models in this comparison use only 2K preference pairs of the UltraFeedback dataset as seed data (3.3\% out of 60K), with the remainder self-annotated. Our \emph{SSAPO} method consistently outperforms DPO and other iterative baselines (e.g., Iter-DPO, SPA) in both the length-controlled (LC) and raw win-rate metrics on AlpacaEval~2.0. For Mistral-7B, \textbf{SSAPO} achieves \underline{24.44\%} LC win rate and \underline{35.82\%} raw win rate, compared to only 9.03\% and 7.68\% with standard DPO. On the larger LLaMA-3-8B model, SSAPO reaches a \textbf{33.33\%} LC win rate and \textbf{40.12\%} raw win rate, surpassing its SPA counterpart by a wide margin. MT-Bench scores corroborate these improvements, indicating that SSAPO yields robust, high-quality responses across diverse tasks.  

To further illustrate SSAPO’s data-efficiency and robustness, Table~\ref{tab:mistral_variants} compares various Mistral models, including \emph{Mistral-7B-SFT}, \emph{Mistral-Large} (the number of parameters is 123B), and a fully-finetuned \emph{Zephyr-7B-$\beta$} variant with 100\% labeled data. Remarkably, \textbf{Mistral-7B-SSAPO} outperforms or closely approaches these stronger references in AlpacaEval~2.0, despite using only 2K preference pairs (3.3\% out of the 60K human-labeled training set). This demonstrates that \emph{a principled Stackelberg method can substantially mitigate the reliance on massive human annotations}. It also aligns with our theoretical findings (Section~\ref{sec:theory}) that SGPO-based approaches, when instantiated as SSAPO, achieve bounded regret under moderate preference shift.

\subsection{Ablation and Sensitivity Analysis}
\label{subsec:ablation_sensitivity}

\begin{table}[h]
\centering
\caption{\textbf{Effect of Wasserstein Radius $\epsilon$ on Performance.} Evaluation results on Mistral-7B, showing the impact of varying the Wasserstein radius on the Len-control. Win Rate and Win Rate vs. GPT-4.}
 \resizebox{0.6\columnwidth}{!}{
\begin{tabular}{lccccc}
\toprule
$\epsilon$ & 0 & 0.01 & 0.03 & 0.05 & 0.1 \\
\midrule
\textbf{Len-control. Win Rate (\%)} & 19.76 & \textbf{24.44} & 22.42 & 23.20 & 19.78 \\
\textbf{Win Rate vs. GPT-4 (\%)} & 26.58 & \textbf{35.82} & 32.30 & 32.92 & 25.84 \\
\bottomrule
\end{tabular}
 }
\label{tab:wasserstein-radius}
\end{table}

\vspace{-0.1 in}
\begin{table}[h]
\centering
\caption{\textbf{Impact of Tangent Size ($K$) and Impact of Group Size ($G$) on Model Performance.} Evaluation results on Mistral-7B.}
\resizebox{\columnwidth}{!}{
\begin{tabular}{lccc|lccc}
\toprule
\multicolumn{4}{c|}{\textbf{Impact of Tangent Size ($K$)}} & \multicolumn{4}{c}{\textbf{Effect of Group Size ($G$)}} \\
\midrule
$K$ & 5 & 6 & 7 & $G$ & 100 & 200 & 300 \\
\midrule
\textbf{Len-control. Win Rate (\%)} & 22.89 & \textbf{23.20} & 19.05 & \textbf{CPU Runtime (min)} & 45 & 206 & 630 \\
\textbf{Win Rate vs. GPT-4 (\%)} & 29.19 & \textbf{32.92} & 25.84 & \textbf{Len-control. Win Rate (\%)} & 13.70 & 14.81 & \textbf{16.95} \\
 & & & & \textbf{Win Rate vs. GPT-4 (\%)} & 10.00 & 11.74 & \textbf{14.91} \\
\bottomrule
\end{tabular}
}
\label{tab:combined-tangent-group}
\end{table}

We conduct a series of ablation studies to understand the factors influencing the efficacy and robustness of our \emph{Stackelberg Self-Annotated Preference Optimization} (SSAPO). Specifically, we vary the Wasserstein radius $\epsilon$, the number of tangents $K$, and the group size $G$. We conduct the experiments on the Mistral-7B model for budget consideration. These experiments confirm our method’s flexibility and validate the practical design choices guided by our theoretical framework.

\paragraph{Wasserstein Radius $\epsilon$.}
Table~\ref{tab:wasserstein-radius} demonstrates how performance changes with different Wasserstein radius. When $\epsilon = 0$, our approach reduces to self-annotated DPO without robust reweighting, yielding weaker results (19.76\% LC win rate). 
As $\epsilon$ increases slightly (e.g., 0.01--0.05), both win-rates improve substantially, with the best outcomes at $\epsilon=0.01$. 
However, overly large $\epsilon$ (e.g., 0.1) can make the adversarial shift too pessimistic, degrading performance. 
These findings align with our theoretical analysis in Section~\ref{sec:theory}, where moderate $\epsilon$ provides a robust yet not overly conservative solution, thus striking the optimal balance between data fidelity and adversarial resilience.

\paragraph{Number of Tangents $K$.}
Since our piecewise-linear approximation of $-\!\log\,\sigma(\cdot)$ uses $K$ linear segments (cf.\ Section~\ref{sec:ssapo}), we examine how varying $K$ affects alignment (Table~\ref{tab:combined-tangent-group} left). 
At $K=5$, the model attains a 22.89\% LC win-rate, while increasing to $K=6$ yields a marginally better 23.20\%. 
Interestingly, moving to $K=7$ leads to performance drops (19.05\%). 
We hypothesize that while a larger $K$ refines the convex under-approximation, it may also overcomplicate optimization or amplify minor errors in the approximation. Thus, $K=6$ serves as a sweet spot in our setting, balancing expressiveness and computational stability.
\vspace{-0.1 in}
\paragraph{Group Size $G$.}
Our distributionally robust optimization solver randomly partition data into groups of size $G$ for parallel subproblem solutions. Table~\ref{tab:combined-tangent-group} (right half) illustrates the trade-off between computational cost and performance. 
A small group size ($G=100$) has faster runtime (45\,min) but yields a 13.70\% LC win-rate, whereas a larger $G=300$ reaches 16.95\% yet takes over 10 times longer (630\,min). 
This confirms that while bigger groups permit more fine-grained reweighting and hence improved alignment, the overhead grows significantly. 
In practice, we choose $G=100$ or $G=200$ for an acceptable performance--efficiency balance.

\vspace{-0.1 in}
\paragraph{Iterative Performance Gains.}
\begin{figure}[t]
\centering
\includegraphics[width=.4\textwidth]{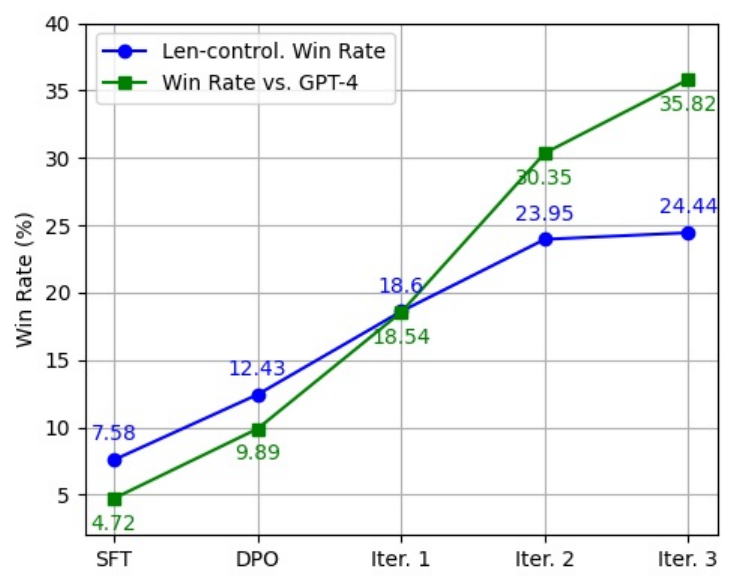}
\caption{\textbf{Improvement during iterations} Evaluation results on AlpcaEval 2.0 of initial DPO stage and each iteration, the results of the SFT model are from \cite{Kim2025Spread}.}
\label{fig:improv_dur_iters}
\end{figure}

Figure~\ref{fig:improv_dur_iters} provides a direct illustration of iterative improvement over three rounds of SSAPO. 
Starting from a baseline DPO model, each round not only adds new self-annotated preferences but also reweights them adversarially within an $\epsilon$-ball. 
We observe a consistent upward trend in alignment metrics during the first two rounds, validating our claim that \emph{robust self-annotation can compensate for scarce human labels while preserving alignment quality.} 

Taken together, these ablations highlight the flexibility and effectiveness of SSAPO: Moderate $\epsilon$ balances robustness and data fidelity, confirming our theoretical finding that worst-case reweighting within a bounded radius can significantly enhance alignment without over-penalizing feasible distributions.Piecewise-linear approximations with small $K$ are sufficient to capture the shape of $-\!\log(\sigma(\cdot))$, maintaining computational tractability. Group size $G$ offers a controllable trade-off between runtime and fine-grained adversarial reweighting, making the approach scalable to different budget constraints.
 Iterative self-annotation with minimal seed data substantially boosts alignment, demonstrating that only 2K human-labeled preferences can suffice to achieve high performance. Overall, these experiments affirm our primary contributions: a \emph{data-efficient} and \emph{theoretically grounded} approach to preference alignment.

\begin{tcolorbox}[colback=gray!5,colframe=gray!40,title=Practical Hyperparameter Guidelines]
\textbf{Wasserstein radius $\epsilon$.} Scale $\epsilon$ with expected self-annotation noise.
\textbf{Capable models:} $\epsilon\in[0.005,\,0.02]$;
\textbf{smaller models:} $\epsilon\in[0.01,\,0.05]$.
A robust default is $\epsilon=0.01$ when no validation is available.

\textbf{Piecewise approximation tangents $K$.} We recommend \textbf{$K=6$} as a stable default. 
$K=7$ may \emph{hurt} due to solver instability rather than approximation error.

\textbf{Grouping size $G$.} For parallel DRO, \textbf{$G\in[100,\,1000]$} balances robustness and throughput; 
we find \textbf{$G\approx100$--$300$} a sweet spot in practice.
\end{tcolorbox}

\subsubsection{Robustness to Seed Label Noise (25\% flips)}
\label{sec:seed_noise_25}

To assess robustness promised by our O($\epsilon$)-regret guarantee,
we flip the preferred/unpreferred labels on \textbf{25\%} of the \textbf{2K} seed pairs and re-run SSAPO end-to-end.
Table~\ref{tab:seed_noise_25} shows that \textbf{Mistral-SSAPO} suffers only a $\sim$7--13\% degradation, while
\textbf{LLaMA-SSAPO} improves under noise, indicating DRO regularization and stronger self-annotation can counteract moderate seed noise.

\begin{table}[h]
\centering
\caption{\textbf{Effect of 25\% seed label corruption.} Entries show (AlpacaEval 2.0 LC win-rate / Win-rate vs GPT-4), higher is better.}
\label{tab:seed_noise_25}
\begin{tabular}{lcc}
\toprule
\textbf{Model} & \textbf{No noise} & \textbf{25\% noise} \\
\midrule
Mistral-SSAPO & 26.90\% / 31.93\% & 19.70\% / 18.51\% \\
LLaMA-SSAPO   & 33.33\% / 40.12\% & 43.74\% / 46.70\% \\
\bottomrule
\end{tabular}
\end{table}

\noindent \emph{Discussion.}
For Mistral, bounded degradation empirically aligns with our O($\epsilon$)-regret theory.
For LLaMA-3-8B, noise acts as implicit regularization: the worst-case distribution explores a wider $\epsilon$-ball region and mitigates overfitting to small seeds.

\section{Conclusion, Limitations and Future Work}
\label{sec:conclusion}
\vspace{-0.1 in}
Aiming at a data-efficient alignment method, we have introduced SGPO alignment framework with $\mathcal{O}(\epsilon)$-bounded regret under moderate noise or distribution shifts. Our practical instantiation, \emph{SSAPO}, uses self-annotation and distributionally robust reweighting to achieve strong performance with far fewer human labels. The scalability limitation of SSAPO comes from the number of preferences $N$, we use a simple uniform group trick to balance between robustness and complexity. For further improvement, one may resort to primal-dual or specialized cutting-plane methods \citep{Esfahani2018Data}, or use approximate relaxations with 
entropic regularization \citep{Cuturi2013Sinkhorn}. Our guarantees target \emph{training-time} robustness to \emph{preference noise} and mild distribution shifts 
(e.g., reweightings within a $W_1$ $\epsilon$-ball). This is distinct from \emph{inference-time} robustness to adversarial prompts or jailbreak attacks. 
While our DRO step improves alignment under noisy supervision, it does not replace dedicated safety mechanisms for adversarial inputs. We only consider the human-labeled preference restricted scenario, however, SSAPO can also be integrated with prompt-generation procedure such as EVA \citep{Ye2024Evolving}, which could be crucial to scaling large language model based intelligence, considering that high-quality human data is running out in the next few years \citep{Villalobos2024Will}.

\section*{Acknowledgments}

We thank the reviewers for their useful feedback that improved this work.
This research is supported by the National Natural Science Foundation of China (NSFC) under grants No.62506010.

\bibliographystyle{unsrtnat}
\bibliography{example_paper}

%%%%%%%%%%%%%%%%%%%%%%%%%%%%%%%%%%%%%%%%%%%%%%%%%%%%%%%%%%%%

\newpage
\appendix
\paragraph{Organization of the Appendix.}
\begin{itemize}
    \item \textbf{Section~\ref{sec:related_supp}} recap LLM alignment and data-efficient methods, as well as the Game-theoretic alignment methods. And dicuss the connection and distinction between SGPO/SSAPO with them.
    \item \textbf{Section~\ref{sec:Wasserstein_Prelim}} revisits the core definitions and properties of the 1-Wasserstein metric, including a statement of the Kantorovich--Rubinstein duality.
    \item \textbf{Section~\ref{sec:regularity_conditions}} restates and discusses the regularity conditions needed for our theoretical guarantees, such as compactness and Lipschitz continuity.
    \item \textbf{Section~\ref{sec:theory_proofs}} provides Proofs for the existence and convergence of the Stackelberg equilibrium, as well as the regret bounds for SGPO and comparisons with DPO.
    \item \textbf{Section~\ref{sec:SSAPO_algorithm+analysis}} presents the SSAPO algorithm in pseudocode and includes an analysis of its computational complexity.
     \item \textbf{Section~\ref{sec:SSAPO_approximation}} remarks the approximation effects of SSAPO algorithm design on SGPO guarantees
    \item \textbf{Section~\ref{sec:experimental_details_more}} gives supplementary information on experimental setups, hyperparameter choices, grouping strategies for DRO, and other implementation details.
    \item \textbf{Section~\ref{sec:qualitative_examples}} illustrates additional qualitative comparisons of model outputs, highlighting the differences between DPO, SPA, and SSAPO in practice.
    \item \textbf{Section~\ref{sect:impact}} discusses the potential broader impact of this work.
\end{itemize}

\section{More Detailed Related Work}
\label{sec:related_supp}
\paragraph{LLM Alignment and Data-Efficient Methods}
Aligning large language models (LLMs) with human preferences is central to modern deployments
\citep{Ziegler2019Fine,Ouyang2022Training,Bai2022Training},.
While Reinforcement Learning with Human Feedback (RLHF) \citep{Christiano2017Deep} trains a reward model and then maximizes it under KL constraints, it typically requires massive human-annotated data. Recent alternatives focus on \emph{directly} fine-tuning LLMs from pairwise preference data without an explicit reward model. 
Notably, Direct Preference Optimization (DPO) \citep{Rafailov2023Direct}
derives a closed-form surrogate objective that recovers RLHF’s solution 
but avoids a separate reward modeling stage. Subsequent works simplify or extend this pipeline; for instance, \citet{Ethayarajh2024KTO} remove the need for pairwise labels by adopting a human utility model, 
while there are also works \citep{Meng2024Simpo,Hong2024ORPO,Azar2024General} introduce novel optimization objectives  to handle different preference formats. 
Despite progress, these approaches still rely on large-scale preference annotations, making label-efficiency a key challenge. To reduce the reliance on expensive human labels, several methods have explored letting the LLM or an auxiliary model 
generate and rank unlabeled responses, thereby creating synthetic preference data \citep{Jiang2023LLM,Yuan2024Self,Xiong2024Iterative,Kim2025Spread}. 
However, many of these approaches assume accessibility to a reliable well-aligned "judge", which could be prohibitive costly in realistic scenarios. To address the cost bottleneck, \citet{Kim2025Spread} propose a \emph{Spread Preference Annotation (SPA)} framework that starts from a small seed of human-annotated preferences and iteratively expands the dataset by self-annotation. Our work is closely related to SPA: we replicate its experimental setup by using the same small-scale seed preferences and iterating between new response generation and preference learning. 
However, our \emph{Stackelberg} perspective considers the inaccuracy of self-annotation, and explicitly defends against worst-case preference shifts. Empirically, we show that this game-theoretic \emph{distributional} approach yields stronger label efficiency.

\paragraph{Game-Theoretic Alignment Methods}
An emerging body of work has begun to frame preference alignment of LLMs 
through the lens of \emph{games}. A conceptual similar work \citep{Makar2024Sta} propose \emph{Stackelberg Alignment RLHF}. However, their nested gradient-based heuristic does not guaranteed to converge to the equilibrium. While we prove our updates for the leader and follower converge to an equilibrium.
Meanwhile, \citet{Ye2024Evolving} present a framework that casts prompt-creator and solver asymmetric players in an evolving game, the differences between our work is we focus on evolving the distribution of the  responses , while they focus on evoling the distribution of the prompts. SPIN \citep{Chen2024SelfPlay} use self-play to iteratively refine 
a policy without additional human data, however they assume accessible to adequate supervised fine-tuning (SFT) data. 
Other works adopt \emph{Nash} or \emph{minimax} formulations:
\citet{Melnyk2024Distributional} study alignment via an optimal-transport objective to capture distributional preferences, 
\citet{Zhang2024Iterative} and \citet{Rosset2024Direct} formulate alignment as a two-player game aiming for a Nash policy, 
and \citet{Munos2024Nash} proposes “Nash learning from human feedback” 
by treating the policy and a competing policy as iterative players.
Likewise, \citet{Swamy2024Minimaximalist,Wu2024Self} introduce self-play preference optimization methods 
in which two policies repeatedly compete under a constant-sum setting. 
They demonstrate promising performance on synthetic and text-based benchmarks, but typically set both players as \emph{policy vs. policy}. By contrast, our \emph{SGPO} framework focuses on \emph{policy vs. distribution}:  the leader policy maximizes preference likelihood, while the follower adversarially reweights or shifts the empirical preference distribution. This setup offers a distinct distributional robust-control view, leading to tight theoretical guarantees (e.g., $\mathcal{O}(\epsilon)$-bounded regret) and a practical algorithm (SSAPO) that is readily integrated with self-annotation. Hence, our method complements the “policy vs.\ policy” family by delivering strong resistance to noisy or distribution-mismatched preferences at small annotation cost.
\section{Preliminaries on the Wasserstein Metric Space}
\label{sec:Wasserstein_Prelim}

Wasserstein (or Earth Mover’s) distances are widely used in robust optimization and optimal transport to measure how far two probability distributions are from one another \citep{Villani2009Optimal}. Below, we give a concise overview of the 1-Wasserstein metric on a subset $\Xi \subseteq \mathbb{R}^m$. We also recap the Kantorovich--Rubinstein duality (Lemma~\ref{lemma:kantorovich_rubinstein_duality}), which is central to several of our regret and robustness proofs in the main text.

\subsection{Definition of the 1-Wasserstein Metric}
Let $\mathcal{M}(\Xi)$ be the space of all probability distributions supported on $\Xi$ such that 
$$
\mathbb{E}_{\xi \sim F} \bigl[\|\xi\|\bigr] \;=\; \int_{\Xi} \|\xi\|\;\mathrm{d}F(\xi) \;<\;\infty.
$$
In our setting, $\|\cdot\|$ can be any norm on $\mathbb{R}^m$, typically the Euclidean norm (although other choices are possible).

\begin{definition}[1-Wasserstein Metric]
\label{def:1_wasserstein}
For two probability distributions $F_1, F_2 \in \mathcal{M}(\Xi)$, the \emph{1-Wasserstein} distance (often just called “the Wasserstein distance”) is defined as
\begin{equation}
\label{eq:wasserstein_def}
W_1(F_1, F_2) \;:=\; \inf_{\pi \in \Pi(F_1, F_2)}
\;\left\{
    \int_{\Xi \times \Xi} \bigl\|\xi_1 \;-\; \xi_2\bigr\|\;\mathrm{d}\pi(\xi_1, \xi_2)
\right\},
\end{equation}
where $\Pi(F_1, F_2)$ is the set of all joint distributions on $\Xi \times \Xi$ whose marginals are $F_1$ and $F_2$, respectively. Intuitively, $\pi$ specifies how “mass” is transported from points in the support of $F_1$ to points in the support of $F_2$, and $\|\xi_1 - \xi_2\|$ is the cost of moving a unit of mass from $\xi_1$ to $\xi_2$.
\end{definition}

\paragraph{Domain used in this paper.}
All Wasserstein balls in our analysis live in $\Xi=\mathbb{R}$ over scalar preference \emph{gaps} $\xi=\Delta R_\pi(y_w,y_\ell)$ induced by a fixed prompt $x$; we do not transport $x$ or raw sentences. Robustness is with respect to annotation noise through the induced gap distribution.

Equivalently, one can interpret the Wasserstein distance as the minimal cost of transforming the distribution $F_1$ into $F_2$ when the cost of moving a unit mass from $\xi_1$ to $\xi_2$ is $\|\xi_1 - \xi_2\|$. This framework underpins many distributionally robust methods, including the SGPO formulation in our paper.

\subsection{Kantorovich--Rubinstein Duality}
\label{subsec:KR-duality}

A crucial result for the 1-Wasserstein distance is the Kantorovich--Rubinstein duality (Theorem 5.9 in \citet{Villani2009Optimal}), which states that the infimum over transport plans (as in Definition~\ref{def:1_wasserstein}) is equivalent to a supremum over 1-Lipschitz test functions. We use this lemma extensively to derive Lipschitz-based bounds in the main proofs (e.g., Theorems~\ref{thm:sgpo-regret-gap}--\ref{thm:dpo-lb-gap}).

\begin{lemma}[Kantorovich--Rubinstein Duality]
\label{lemma:kantorovich_rubinstein_duality}
Let $F_1, F_2 \in \mathcal{M}(\Xi)$ with finite first moments. Then the 1-Wasserstein distance \eqref{eq:wasserstein_def} admits the following dual representation:
\begin{equation}
\label{eq:KR-duality}
W_1(F_1, F_2)
\;=\;
\sup_{\|f\|_{\mathrm{Lip}}\le 1}
\;\bigl(\mathbb{E}_{\xi \sim F_1}[f(\xi)] \;-\; \mathbb{E}_{\xi \sim F_2}[f(\xi)]\bigr),
\end{equation}
where the supremum is taken over all 1-Lipschitz functions $f : \Xi \to \mathbb{R}$, i.e.\ functions satisfying
$$
\bigl|f(\xi) - f(\xi')\bigr|
\;\le\;
\|\xi - \xi'\|
\quad \forall\,\xi,\xi' \in \Xi.
$$
\end{lemma}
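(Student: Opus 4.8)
The plan is to prove the two inequalities separately: the direction $\sup_{\|f\|_{\mathrm{Lip}}\le 1}\bigl(\mathbb{E}_{\xi\sim F_1}[f(\xi)]-\mathbb{E}_{\xi\sim F_2}[f(\xi)]\bigr)\le W_1(F_1,F_2)$ follows from elementary weak duality, while the reverse inequality is the crux and rests on the general Kantorovich duality together with a reduction to $1$-Lipschitz potentials.

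For the weak-duality direction, I would fix an arbitrary $1$-Lipschitz $f$ and an arbitrary coupling $\pi\in\Pi(F_1,F_2)$. Since the marginals of $\pi$ are $F_1$ and $F_2$, we have $\mathbb{E}_{\xi\sim F_1}[f(\xi)]-\mathbb{E}_{\xi\sim F_2}[f(\xi)]=\int\bigl(f(\xi_1)-f(\xi_2)\bigr)\,\mathrm{d}\pi(\xi_1,\xi_2)$, and the Lipschitz bound $f(\xi_1)-f(\xi_2)\le\|\xi_1-\xi_2\|$ gives $\mathbb{E}_{\xi\sim F_1}[f(\xi)]-\mathbb{E}_{\xi\sim F_2}[f(\xi)]\le\int\|\xi_1-\xi_2\|\,\mathrm{d}\pi$. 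Taking the infimum over $\pi$ and then the supremum over $f$ yields $\sup_{\|f\|_{\mathrm{Lip}}\le1}(\cdots)\le W_1(F_1,F_2)$; the finite-first-moment hypothesis ensures every term is integrable.

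For the reverse inequality I would invoke the general Kantorovich duality, which recasts the primal transport infimum as $\sup_{\phi(\xi_1)+\psi(\xi_2)\le\|\xi_1-\xi_2\|}\bigl(\int\phi\,\mathrm{d}F_1+\int\psi\,\mathrm{d}F_2\bigr)$ over admissible potential pairs $(\phi,\psi)$, and then specialize to the metric cost $c(\xi_1,\xi_2)=\|\xi_1-\xi_2\|$ via the $c$-transform. Given any admissible $(\phi,\psi)$, replacing $\psi$ by $\psi^c(\xi):=\inf_{\xi'}\bigl(\|\xi-\xi'\|-\phi(\xi')\bigr)$ only increases the objective (since $\psi\le\psi^c$) while preserving feasibility, and $\psi^c$, being an infimum of the $1$-Lipschitz maps $\xi\mapsto\|\xi-\xi'\|$, is itself $1$-Lipschitz. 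The key algebraic fact is that for a metric cost $\phi$ is $1$-Lipschitz $\iff\phi^c=-\phi$: indeed $\|\xi-\xi'\|-\phi(\xi')\ge-\phi(\xi)$ for all $\xi'$ by the Lipschitz bound, with equality at $\xi'=\xi$. Consequently the optimal pair may be taken of the form $(\phi,-\phi)$ with $\phi$ $1$-Lipschitz, the constraint $\phi(\xi_1)-\phi(\xi_2)\le\|\xi_1-\xi_2\|$ is exactly the Lipschitz condition, and the dual collapses to $\sup_{\|\phi\|_{\mathrm{Lip}}\le1}\bigl(\int\phi\,\mathrm{d}F_1-\int\phi\,\mathrm{d}F_2\bigr)$, matching the claimed formula.

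The main obstacle is establishing the general Kantorovich duality itself, i.e.\ the absence of a duality gap between the transport infimum and the potential supremum. On the Polish space $\mathbb{R}^m$ with the continuous (hence lower-semicontinuous) cost $\|\cdot\|$, this is provable by a Fenchel--Rockafellar argument, or equivalently a minimax exchange justified by tightness of $\Pi(F_1,F_2)$ and weak-$*$ compactness, but it requires care with the measurability of the potentials and attainment of the coupling infimum. Since the lemma is stated as a direct consequence of Theorem~5.9 in \citet{Villani2009Optimal}, in the paper I would cite that theorem for the no-gap step and present the $c$-transform reduction above as the specialization that converts generic potentials into the $1$-Lipschitz dual form.
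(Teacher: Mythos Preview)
Your proposal is correct, but it actually goes well beyond what the paper does: the paper does not prove this lemma at all. It simply states the result as ``Theorem~5.9 in \citet{Villani2009Optimal}'' and uses it as a black box in the subsequent regret arguments. Your sketch---elementary weak duality via any coupling, then the reverse inequality by invoking general Kantorovich duality and collapsing the potential pair $(\phi,\psi)$ to $(\phi,-\phi)$ through the $c$-transform identity $\phi^c=-\phi$ for $1$-Lipschitz $\phi$---is exactly the standard route to the Kantorovich--Rubinstein formula and is more informative than the bare citation. The only caveat is the one you already flag: the no-duality-gap step for general Polish spaces and lower-semicontinuous costs is nontrivial, and you are right to defer it to Villani rather than attempt a self-contained Fenchel--Rockafellar argument here.
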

\noindent
Lemma~\ref{lemma:kantorovich_rubinstein_duality} underpins many of our theoretical arguments, particularly in bounding the impact of perturbations measured in the $W_1$ ball
$\mathcal{U}_\epsilon(\hat{\alpha}(\pi))$ in gap space, via
$\bigl|\mathbb{E}_{\alpha}[f]-\mathbb{E}_{\beta}[f]\bigr| \le W_1(\alpha,\beta)$ for 1-Lipschitz $f$. As shown in Section~\ref{sec:theory_proofs} of our paper, it simplifies comparing $\mathbb{E}_{P}[f]$ and $\mathbb{E}_{\hat{P}}[f]$ when $f$ is Lipschitz in model parameters or responses.

\section{Regularity Conditions for Stackelberg Game Preference Optimization}
\label{sec:regularity_conditions}

\paragraph{Setup.}
Let $\Pi$ denote a (parameterized) class of policies $\pi$.
Given preference triples $(x^i,y^i_w,y^i_\ell)$ for $i=1,\ldots,N$, define the empirical measure on triples
$\hat P \!=\! \tfrac{1}{N}\sum_{i=1}^N \delta_{(x^i,y^i_w,y^i_\ell)}$ and the \emph{gap map}
$\Delta R_\pi(x,y_w,y_\ell) := R_\pi(x,y_w) - R_\pi(x,y_\ell)$.
The corresponding empirical gap distribution is
$\hat\alpha(\pi) := (\Delta R_\pi)_\# \hat P = \tfrac{1}{N}\sum_{i=1}^N \delta_{\Delta R_\pi(x^i,y^i_w,y^i_\ell)} \in \mathcal P(\mathbb R)$.
For $\epsilon>0$, we denote the 1-Wasserstein ball in gap space by
$\mathcal U_\epsilon(\hat\alpha(\pi)) := \{ \alpha \in \mathcal P(\mathbb R) : W_1(\alpha,\hat\alpha(\pi)) \le \epsilon \}$,
where the ground metric is the absolute value on $\mathbb R$.
The \emph{leader payoff} is $\mathcal{P}\!\bigl(\pi, \alpha\bigr) := \mathbb E_{\xi\sim\alpha}[\log \sigma(\xi)]$ and the follower chooses
$\alpha \in \mathcal U_\epsilon(\hat\alpha(\pi))$.

\paragraph{Standing assumptions.}
We use the following minimal conditions.
\begin{enumerate}
    \item[\textbf{(A1)}] \textbf{Compactness.} $\Pi$ is compact (with respect to the topology induced by the model parameterization).
    \item[\textbf{(A2)}] \textbf{Continuity of rewards.} For each $(x,y)$, the map $\pi \mapsto R_\pi(x,y)$ is continuous.
          Consequently, for each $i$, the map $\pi \mapsto \Delta R_\pi(x^i,y^i_w,y^i_\ell)$ is continuous.
    \item[\textbf{(A3)}] \textbf{Continuity of the push-forward center.} The map $\pi \mapsto \hat\alpha(\pi)=(\Delta R_\pi)_\# \hat P$
          is continuous in the topology induced by $W_1$; in particular,
          $W_1\!\big(\hat\alpha(\pi_1),\hat\alpha(\pi_2)\big)\to 0$ whenever $\pi_1\to\pi_2$. (This actually follows from \textbf{(A2)}. See Lemma~\ref{lem:center-continuity}.)
    \item[\textbf{(A4)}] \textbf{Gap-link function.} $f(\xi):=\log\sigma(\xi)$ is continuous, concave, and $1$-Lipschitz on $\mathbb R$.
    \item[\textbf{(A5)}] \textbf{(Optional, for local convergence.)} There exists $\lambda>0$ such that the proximal leader objective
          $G(\pi;\pi_t,\alpha):=\mathbb E_{\xi\sim\alpha}[f(\Delta R_\pi(x,y_w,y_\ell))] - \tfrac{\lambda}{2}\|\theta(\pi)-\theta(\pi_t)\|^2$
          is $\mu$-strongly concave in $\theta(\pi)$ on a neighborhood of a solution, uniformly over
          $\alpha \in \mathcal U_\epsilon(\hat\alpha(\pi_t))$.
          \item[\textbf{(A6)}] ( \textbf{Optional,, for bounded margins or clipping.}) Either $|\Delta R_\pi(x,y_w,y_\ell)|\le B$ for all $(x,y_w,y_\ell)$ and $\pi\in\Pi$,
or margins are deterministically clipped to a window $[a,b]$. All PWL error bounds are computed on this interval. The $R_{\max}$ in the paper can be set to $B/2$.

\end{enumerate}

\begin{lemma}[Continuity of the center in $W_1$]
\label{lem:center-continuity}
Under \textbf{(A2)}, for any $\pi_1,\pi_2\in\Pi$,
\begin{equation*}
W_1\big(\hat\alpha(\pi_1),\hat\alpha(\pi_2)\big)
\ \le\ \tfrac{1}{N}\sum_{i=1}^N \big| \Delta R_{\pi_1}(x^i,y^i_w,y^i_\ell) - \Delta R_{\pi_2}(x^i,y^i_w,y^i_\ell) \big|.
\end{equation*}
In particular, $\pi\mapsto \hat\alpha(\pi)$ is continuous in $W_1$.
\end{lemma}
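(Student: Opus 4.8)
The plan is to prove the Wasserstein bound directly by constructing an explicit transport plan (coupling) between the two empirical gap distributions, and then invoke the definition of $W_1$ as an infimum over couplings. The key observation is that both $\hat\alpha(\pi_1)$ and $\hat\alpha(\pi_2)$ are supported on the \emph{same} index set $\{1,\dots,N\}$ of samples, each with uniform mass $1/N$. This index alignment gives us a natural candidate coupling: transport the mass at the $i$-th atom of $\hat\alpha(\pi_1)$, namely $\delta_{\Delta R_{\pi_1}(x^i,y^i_w,y^i_\ell)}$, entirely to the $i$-th atom of $\hat\alpha(\pi_2)$, namely $\delta_{\Delta R_{\pi_2}(x^i,y^i_w,y^i_\ell)}$.

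Concretely, first I would define the diagonal coupling $\gamma = \tfrac{1}{N}\sum_{i=1}^N \delta_{(\xi_1^i,\,\xi_2^i)}$, where $\xi_j^i := \Delta R_{\pi_j}(x^i,y^i_w,y^i_\ell)$ for $j\in\{1,2\}$. I would then verify that $\gamma$ has the correct marginals: projecting onto the first coordinate recovers $\hat\alpha(\pi_1)$ and onto the second recovers $\hat\alpha(\pi_2)$, so $\gamma\in\Pi(\hat\alpha(\pi_1),\hat\alpha(\pi_2))$ is admissible. Since $W_1$ is the infimum of the transport cost over all admissible couplings, the cost of this particular $\gamma$ furnishes an upper bound:
\[
W_1\big(\hat\alpha(\pi_1),\hat\alpha(\pi_2)\big)
\ \le\ \int |\xi_1-\xi_2|\,\mathrm{d}\gamma(\xi_1,\xi_2)
\ =\ \tfrac{1}{N}\sum_{i=1}^N \big|\xi_1^i-\xi_2^i\big|,
\]
which is exactly the claimed inequality after unfolding the definition of $\xi_j^i$.

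For the continuity conclusion, I would then argue that under assumption \textbf{(A2)}, each map $\pi\mapsto\Delta R_\pi(x^i,y^i_w,y^i_\ell)=R_\pi(x^i,y^i_w)-R_\pi(x^i,y^i_\ell)$ is continuous, being a difference of two continuous reward maps. Hence if $\pi_1\to\pi_2$, every summand $|\xi_1^i-\xi_2^i|$ tends to zero, and since the sum is finite, the right-hand side tends to zero, forcing $W_1(\hat\alpha(\pi_1),\hat\alpha(\pi_2))\to 0$. This establishes $W_1$-continuity of the center map, as claimed.

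This proof is essentially routine and I do not anticipate a serious obstacle; the only point requiring a little care is confirming that the diagonal coupling genuinely has the stated marginals when some atoms coincide (i.e.\ when distinct indices $i\neq i'$ produce the same gap value). This is handled automatically because the marginal of $\gamma$ is defined as the pushforward under coordinate projection, which correctly sums the masses of coinciding atoms; no injectivity of the gap map is needed. One should simply state the coupling at the level of the index set rather than the support points to avoid ambiguity. The inequality is then tight enough for our purposes, as it is only an upper bound (which is all continuity requires), and no appeal to Kantorovich--Rubinstein duality is necessary here.
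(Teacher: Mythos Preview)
Your proposal is correct and follows essentially the same approach as the paper: both construct the index-wise (diagonal) coupling $\gamma=\tfrac{1}{N}\sum_i\delta_{(\xi_1^i,\xi_2^i)}$ and use that $W_1$ is bounded above by the cost of any admissible coupling, then deduce continuity from \textbf{(A2)} termwise. Your additional remark about coinciding atoms is a nice clarification but not strictly needed, since the marginals are computed as pushforwards regardless.
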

\begin{proof}
Couple the Dirac masses in $\hat\alpha(\pi_1)$ and $\hat\alpha(\pi_2)$ index-wise.
The claim follows because the 1-Wasserstein distance on $\mathbb R$ is bounded above by the average transport cost under any coupling.
\end{proof}

\begin{lemma}[$\log\sigma$ is $1$-Lipschitz and concave]
\label{lem:logsig-lip}
For all $\xi\in\mathbb R$,
$\frac{\mathrm d}{\mathrm d\xi}\log\sigma(\xi)=\sigma(-\xi)\in(0,1)$ and
$\frac{\mathrm d^2}{\mathrm d\xi^2}\log\sigma(\xi)=-\sigma(\xi)\sigma(-\xi)\le 0$.
Hence $\log\sigma$ is $1$-Lipschitz and concave.
\end{lemma}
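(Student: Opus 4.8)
The plan is to verify the two derivative formulas by direct computation and then read off concavity and the Lipschitz bound as immediate consequences. First I would rewrite $\log\sigma(\xi) = -\log(1+e^{-\xi})$ using the definition $\sigma(\xi)=(1+e^{-\xi})^{-1}$, which converts the logarithm of a quotient into a form that is convenient to differentiate term by term.

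For the first derivative, I would apply the chain rule to $-\log(1+e^{-\xi})$, obtaining $\tfrac{e^{-\xi}}{1+e^{-\xi}}$. The one algebraic step worth spelling out is multiplying numerator and denominator by $e^{\xi}$ to reach $\tfrac{1}{1+e^{\xi}}$, which is exactly $\sigma(-\xi)$; equivalently this is the identity $1-\sigma(\xi)=\sigma(-\xi)$. Since $\sigma$ maps $\mathbb{R}$ into $(0,1)$, so does $\sigma(-\xi)$, which establishes $\tfrac{\mathrm d}{\mathrm d\xi}\log\sigma(\xi)=\sigma(-\xi)\in(0,1)$.

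For the second derivative, I would differentiate $\sigma(-\xi)$ using the standard logistic identity $\sigma'(u)=\sigma(u)\bigl(1-\sigma(u)\bigr)=\sigma(u)\sigma(-u)$ together with the inner chain-rule factor $-1$ coming from $-\xi$. This yields $\tfrac{\mathrm d^2}{\mathrm d\xi^2}\log\sigma(\xi) = -\sigma'(-\xi) = -\sigma(-\xi)\sigma(\xi)$. Being a product of two quantities in $(0,1)$ prefixed by $-1$, it is strictly negative, hence $\le 0$ everywhere on $\mathbb{R}$.

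Finally, the two qualitative conclusions drop out at once: concavity is precisely nonpositivity of the second derivative on the (convex) domain $\mathbb{R}$, and the $1$-Lipschitz property follows from $\bigl|\tfrac{\mathrm d}{\mathrm d\xi}\log\sigma(\xi)\bigr|=\sigma(-\xi)<1$ via the mean value theorem (or the fundamental theorem of calculus), giving $|\log\sigma(\xi)-\log\sigma(\xi')|\le|\xi-\xi'|$ for all $\xi,\xi'$. There is essentially no serious obstacle here; the only places to be careful are the sign bookkeeping in the two chain-rule applications and the identity $1-\sigma(\xi)=\sigma(-\xi)$, which I would state explicitly to keep the computation transparent and to avoid an off-by-sign slip in the derivative of $\sigma(-\xi)$.
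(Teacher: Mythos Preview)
Your proposal is correct and follows essentially the same direct-differentiation approach as the paper: compute the first derivative (the paper writes it as $\sigma'(\xi)/\sigma(\xi)=1-\sigma(\xi)=\sigma(-\xi)$ rather than expanding $-\log(1+e^{-\xi})$, but the two routes are equivalent), differentiate once more using $\sigma'(u)=\sigma(u)\sigma(-u)$, and then invoke the mean value theorem for the Lipschitz bound and nonpositivity of the second derivative for concavity.
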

\begin{proof}
Direct differentiation; $|\sigma(-\xi)|\le 1$ gives the Lipschitz constant and the second derivative is nonpositive.
\end{proof}

\begin{lemma}[Compact follower feasible set]
\label{lem:ball-compact}
For each fixed $\pi$, the set $\mathcal U_\epsilon(\hat\alpha(\pi))\subset \mathcal P(\mathbb R)$ is tight, closed in $W_1$, and thus compact.
\end{lemma}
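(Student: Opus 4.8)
The plan is to reduce the three assertions to two elementary facts about the fixed center $\hat\alpha(\pi)$: it is supported on the finite set $S=\{\hat\xi_1,\dots,\hat\xi_N\}\subset[m,M]$ with $m=\min_i\hat\xi_i$ and $M=\max_i\hat\xi_i$, and it has finite first moment. Tightness is the only step carrying real content; closedness is essentially free, and the passage to compactness is then Prokhorov's theorem.

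First I would establish tightness uniformly over the ball. Fix $\alpha\in\mathcal U_\epsilon(\hat\alpha(\pi))$ and let $\gamma$ be an optimal $W_1$-coupling of $\alpha$ and $\hat\alpha(\pi)$. Writing $\rho(\xi):=\operatorname{dist}(\xi,S)=\min_{1\le i\le N}|\xi-\hat\xi_i|$, the key observation is that any $\alpha$-mass sitting at $\xi$ is transported under $\gamma$ only to points of $S$, so $|\xi-\eta|\ge\rho(\xi)$ holds $\gamma$-a.e., giving
\[
\int_{\mathbb R}\rho(\xi)\,d\alpha(\xi)\ \le\ \int |\xi-\eta|\,d\gamma(\xi,\eta)\ =\ W_1(\alpha,\hat\alpha(\pi))\ \le\ \epsilon.
\]
Markov's inequality then yields $\alpha(\{\rho>R\})\le\epsilon/R$, and since $\{\rho\le R\}\subseteq[m-R,\,M+R]=:C_R$, a compact interval, I obtain $\alpha(C_R^{c})\le\epsilon/R$ uniformly in $\alpha$. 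Choosing $R=\epsilon/\delta$ caps the escaping mass at $\delta$, which is exactly tightness of the whole ball.

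Next, closedness: as a closed metric ball, $\mathcal U_\epsilon(\hat\alpha(\pi))$ is $W_1$-closed by definition, and it is also weakly closed because $W_1(\cdot,\hat\alpha(\pi))$ is lower semicontinuous under weak convergence (so $\alpha_n\rightharpoonup\alpha$ with $W_1(\alpha_n,\hat\alpha(\pi))\le\epsilon$ forces $W_1(\alpha,\hat\alpha(\pi))\le\liminf_n W_1(\alpha_n,\hat\alpha(\pi))\le\epsilon$). Combining tightness with Prokhorov's theorem gives relative weak compactness, and weak closedness upgrades this to weak (sequential) compactness. This is precisely the sense in which ``tight $+$ closed $\Rightarrow$ compact'' is valid, and it is the sense consumed by the Berge maximum-theorem argument behind Theorem~\ref{thm:existence_se}, where the inner objective $\alpha\mapsto\mathbb E_\alpha[\log\sigma]$ is controlled through its $1$-Lipschitz integrand.

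The point requiring the most care — and the main obstacle — is that compactness in the full $W_1$ topology on $\mathcal P(\mathbb R)$ is strictly stronger than weak compactness: one must additionally rule out first-moment mass escaping to infinity (uniform integrability of $|\xi|$), which the bound $\int\rho\,d\alpha\le\epsilon$ alone does not secure, since a vanishing fraction of mass sent toward $\pm\infty$ keeps both $W_1$ and the mean controlled while destroying $W_1$-precompactness. I would therefore close this gap by invoking Assumption~(A6): when margins are clipped to a bounded window $[a,b]$ (equivalently $|\Delta R_\pi|\le B$), the admissible distributions live in $\mathcal P([a,b])$, on which the $W_1$ topology coincides with the weak topology, tightness is automatic, and the ball is genuinely $W_1$-compact. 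In the unclipped case the statement is to be read as weak compactness, which is all the existence argument actually requires.
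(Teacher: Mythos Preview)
Your tightness argument via the distance-to-support function $\rho(\xi)=\min_i|\xi-\hat\xi_i|$ is correct but more elaborate than the paper's route, which simply notes $\int_{\mathbb R}|x|\,d\alpha(x)=W_1(\alpha,\delta_0)\le W_1(\alpha,\hat\alpha(\pi))+W_1(\hat\alpha(\pi),\delta_0)\le\epsilon+\int|x|\,d\hat\alpha(\pi)$ and applies Markov's inequality to $|x|$ directly. Both give a uniform first-moment bound; the paper's version avoids couplings altogether.

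Where your treatment is genuinely sharper is the passage from weak to $W_1$ compactness. The paper asserts that the uniform first-moment bound ``tightens [weak relative compactness] to relative compactness in $W_1$,'' but this step is incorrect: take $\hat\alpha=\delta_0$, $\epsilon=1$, and $\alpha_n=(1-\tfrac1n)\delta_0+\tfrac1n\delta_n$. Then $W_1(\alpha_n,\delta_0)=1$ for every $n$, so $\alpha_n\in\mathcal U_1(\delta_0)$, yet $\alpha_n\rightharpoonup\delta_0$ while $W_1(\alpha_n,\delta_0)\equiv1$, so no subsequence converges in $W_1$. A uniform first-moment \emph{bound} is not uniform \emph{integrability}. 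You are right to flag this and to invoke (A6): on a bounded interval the two topologies coincide and the ball is genuinely $W_1$-compact.

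One correction to your fallback, however: in the unclipped case, weak compactness is \emph{not} ``all the existence argument actually requires.'' Since $\log\sigma$ is unbounded below, $\alpha\mapsto\mathbb E_\alpha[\log\sigma]$ is only weakly \emph{upper} semicontinuous, so weak compactness guarantees a maximizer but not the follower's \emph{minimizer}. Concretely, with $\hat\alpha=\delta_0$ the infimum over $\mathcal U_\epsilon(\delta_0)$ is $\log\sigma(0)-\epsilon$ (the Lipschitz closed form), but it is never attained because $|(\log\sigma)'(\xi)|=\sigma(-\xi)<1$ strictly for all finite $\xi$, so no finite transport saturates the Kantorovich--Rubinstein bound. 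Thus (A6) is not merely a convenience for upgrading the topology; it is the assumption that makes the follower best-response set $\mathcal A_\epsilon$ nonempty and the Berge argument go through.
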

\begin{proof}
On the Polish space $(\mathbb R,|\cdot|)$, closed and $W_1$-bounded sets of probability measures are relatively compact;
tightness follows from Markov's inequality under bounded first moments, which hold for all $\alpha$ with $W_1(\alpha,\hat\alpha(\pi))\le \epsilon$.
Closure is standard for $W_1$-balls.
\end{proof}

\paragraph{Remarks.}
(i) No Lipschitz condition on $R_\pi$ in the output space $y$ is needed because robustness is posed in \emph{gap space} $\mathbb R$.
(ii) Assumption \textbf{(A5)} matches practice (small stepsizes/weight decay) and is only required for the local rate.
(iii) Although neural network parameters $\theta \in \mathbb{R}^d$ are technically unbounded, many theoretical analyses restrict $\theta$ to a large but bounded ball (via a norm constraint) or rely on a coercive objective to prevent unbounded parameter growth.  Hence, requiring $\Pi$ to be compact is common in theoretical treatments.  In practice, gradient-based optimization does not typically push $\|\theta\|$ to infinity.

\section{Theoretical Results}
\label{sec:theory_proofs}

\subsection{Preliminaries and basic lemmas}

\paragraph{Notation and spaces.}
All random variables in this section take values in $(\mathbb R,|\cdot|)$ equipped with the Borel $\sigma$-algebra.
For a policy $\pi$ and i.i.d.\ samples $\{(x^i,y_w^i,y_\ell^i)\}_{i=1}^N\sim \hat P$, define the \emph{empirical gap distribution}
\[
\hat\alpha(\pi)\ :=\ (\Delta R_\pi)_\#\hat P\ =\ \frac1N\sum_{i=1}^N \delta_{\hat\xi_i},
\quad \text{where}\quad
\hat\xi_i\ :=\ \Delta R_\pi(x^i,y_w^i,y_\ell^i)\in\mathbb R.
\]
For $\epsilon>0$, the (1-)Wasserstein ball around $\hat\alpha(\pi)$ is
\[
\mathcal U_\epsilon\big(\hat\alpha(\pi)\big)\ :=\ \Big\{\alpha\in\mathcal P(\mathbb R): W_1\big(\alpha,\hat\alpha(\pi)\big)\le \epsilon\Big\}.
\]
We write \(u(\xi):=\log\sigma(\xi)\) (concave, $1$-Lipschitz) and \(\ell(\xi):=-u(\xi)=-\log\sigma(\xi)\) (convex, $1$-Lipschitz).
When a piecewise-linear (PWL) surrogate is used, we set
\[
\widetilde\ell(\xi)\ :=\ \max_{1\le k\le K}\,\ell_k(\xi),\qquad \ell_k(\xi):=a_k\,\xi+b_k,
\]
chosen as global supporting tangents so that $\widetilde\ell(\xi)\le \ell(\xi)$ for all $\xi$.

\begin{lemma}[$\log\sigma$ is concave and $1$-Lipschitz]
\label{lem:logsig-lip-expanded}
For every $\xi\in\mathbb R$,
\(
\frac{\mathrm d}{\mathrm d\xi}\log\sigma(\xi)=\sigma(-\xi)\in(0,1)
\)
and
\(
\frac{\mathrm d^2}{\mathrm d\xi^2}\log\sigma(\xi)=-\sigma(\xi)\sigma(-\xi)\le 0.
\)
Hence $u(\xi)=\log\sigma(\xi)$ is concave and $1$-Lipschitz.
\end{lemma}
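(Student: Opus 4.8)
The plan is a direct two-stage differentiation of $u(\xi)=\log\sigma(\xi)$, relying only on the elementary identity $\sigma(-\xi)=1-\sigma(\xi)$ and the standard derivative $\sigma'(\xi)=\sigma(\xi)\,\sigma(-\xi)$. Everything reduces to computing $u'$ and $u''$ in closed form and reading off boundedness and sign.

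First I would compute the first derivative by the chain rule: since $u(\xi)=\log\sigma(\xi)$, we have $u'(\xi)=\sigma'(\xi)/\sigma(\xi)$. Substituting $\sigma'(\xi)=\sigma(\xi)\,\sigma(-\xi)$ cancels the factor $\sigma(\xi)$ and leaves $u'(\xi)=\sigma(-\xi)$, which is exactly the claimed first-derivative formula. Because $\sigma(-\xi)\in(0,1)$ for every $\xi\in\mathbb{R}$, the derivative is bounded in magnitude by $1$; by the mean value theorem this immediately yields the $1$-Lipschitz bound $|u(\xi)-u(\xi')|\le|\xi-\xi'|$ for all $\xi,\xi'$.

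Next I would differentiate once more, using $u'(\xi)=\sigma(-\xi)$: by the chain rule $u''(\xi)=\frac{\mathrm d}{\mathrm d\xi}\sigma(-\xi)=-\sigma'(-\xi)=-\sigma(-\xi)\,\sigma(\xi)$, where the leading minus sign comes from differentiating the inner argument $-\xi$ and I apply $\sigma'(-\xi)=\sigma(-\xi)\sigma(\xi)$. Since $\sigma(\xi),\sigma(-\xi)\in(0,1)$, the product $\sigma(\xi)\sigma(-\xi)$ is strictly positive, so $u''(\xi)\le 0$ everywhere, which establishes concavity of $u$.

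There is essentially no substantive obstacle here; the only point requiring mild care is consistent sign-bookkeeping when differentiating the composition $\sigma(-\xi)$ and invoking the symmetry $1-\sigma(\xi)=\sigma(-\xi)$, so that $u'$ collapses cleanly to $\sigma(-\xi)$ rather than remaining a quotient. Once the two derivative formulas are in hand, the $1$-Lipschitz and concavity conclusions follow at once.
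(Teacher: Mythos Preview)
Your proposal is correct and follows essentially the same approach as the paper: compute $u'(\xi)=\sigma'(\xi)/\sigma(\xi)=\sigma(-\xi)\in(0,1)$ to get the $1$-Lipschitz bound via the mean value theorem, then differentiate again to obtain $u''(\xi)=-\sigma(\xi)\sigma(-\xi)\le 0$ for concavity. The only cosmetic difference is that the paper writes $\sigma'(\xi)=\sigma(\xi)(1-\sigma(\xi))$ and invokes $1-\sigma(\xi)=\sigma(-\xi)$ as a separate step, whereas you fold this into $\sigma'(\xi)=\sigma(\xi)\sigma(-\xi)$ at the outset.
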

\begin{proof}[Proof]
Recall $\sigma(\xi)=\frac{1}{1+e^{-\xi}}$. Then
\[
\frac{\mathrm d}{\mathrm d\xi}\log\sigma(\xi)
= \frac{\sigma'(\xi)}{\sigma(\xi)}
= \frac{\sigma(\xi)(1-\sigma(\xi))}{\sigma(\xi)}
= 1-\sigma(\xi)
= \sigma(-\xi) \in (0,1).
\]
Hence $|\frac{\mathrm d}{\mathrm d\xi}\log\sigma(\xi)|\le 1$ for all $\xi$, so $\log\sigma$ is $1$-Lipschitz:
$|\log\sigma(\xi)-\log\sigma(\xi')|\le |\xi-\xi'|$ by the mean-value theorem.
Further,
\[
\frac{\mathrm d^2}{\mathrm d\xi^2}\log\sigma(\xi)
= \frac{\mathrm d}{\mathrm d\xi}\,\sigma(-\xi)
= -\sigma(-\xi)\bigl(1-\sigma(-\xi)\bigr)
= -\sigma(\xi)\sigma(-\xi)\le 0,
\]
so $\log\sigma$ is concave. 
\end{proof}

\paragraph{Kantorovich--Rubinstein (KR) duality for $W_1$.}
For any $1$-Lipschitz $h:\mathbb R\to\mathbb R$ and $\alpha,\beta\in\mathcal P(\mathbb R)$,
\[
\big|\mathbb E_\alpha[h]-\mathbb E_\beta[h]\big|\ \le\ W_1(\alpha,\beta).
\]
We use this both as a continuity tool and as a tight transport sensitivity bound.

\begin{lemma}[Continuity of the empirical-center map]
\label{lem:center-continuity-expanded}
If $\pi\mapsto \Delta R_\pi(x^i,y^i_w,y^i_\ell)$ is continuous for each $i\in[N]$, then for any $\pi_1,\pi_2$,
\[
W_1\big(\hat\alpha(\pi_1),\hat\alpha(\pi_2)\big)\ \le\ \frac1N\sum_{i=1}^N\big|\Delta R_{\pi_1}^i - \Delta R_{\pi_2}^i\big|,
\]
hence $\pi\mapsto\hat\alpha(\pi)$ is continuous in the $W_1$ metric.
\end{lemma}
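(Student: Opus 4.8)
The plan is to prove the inequality by exhibiting one explicit transport plan and invoking the definition of $W_1$ as an infimum over all couplings, so that any single admissible coupling furnishes an upper bound. Write $\hat\xi_i^{(j)} := \Delta R_{\pi_j}(x^i,y^i_w,y^i_\ell)$ for $j\in\{1,2\}$, so that both centers are uniform empirical measures, $\hat\alpha(\pi_j)=\tfrac1N\sum_{i=1}^N\delta_{\hat\xi_i^{(j)}}$. The natural candidate is the \emph{diagonal} (index-wise) coupling that transports the atom at $\hat\xi_i^{(1)}$ directly to the atom at $\hat\xi_i^{(2)}$ of the same index, namely $\gamma:=\tfrac1N\sum_{i=1}^N\delta_{(\hat\xi_i^{(1)},\,\hat\xi_i^{(2)})}$.

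First I would check that $\gamma$ is an admissible transport plan, i.e.\ that its two marginals are $\hat\alpha(\pi_1)$ and $\hat\alpha(\pi_2)$. Pushing $\gamma$ forward under the first coordinate projection recovers $\tfrac1N\sum_i\delta_{\hat\xi_i^{(1)}}=\hat\alpha(\pi_1)$, and likewise the second projection gives $\hat\alpha(\pi_2)$; this is immediate because each of the $N$ product atoms carries mass $1/N$. Next I would bound $W_1$ by the cost of this particular plan. Since $W_1(\hat\alpha(\pi_1),\hat\alpha(\pi_2))$ is by definition the infimum of $\int|\xi_1-\xi_2|\,\mathrm d\gamma'$ over all couplings $\gamma'$ with the prescribed marginals, evaluating the cost at $\gamma$ yields $W_1(\hat\alpha(\pi_1),\hat\alpha(\pi_2))\le\int|\xi_1-\xi_2|\,\mathrm d\gamma=\tfrac1N\sum_{i=1}^N|\hat\xi_i^{(1)}-\hat\xi_i^{(2)}|$, which is exactly the claimed inequality.

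The continuity statement then follows directly from the hypothesis: as $\pi_1\to\pi_2$, the assumed continuity of each map $\pi\mapsto\Delta R_\pi(x^i,y^i_w,y^i_\ell)$ forces every term $|\hat\xi_i^{(1)}-\hat\xi_i^{(2)}|\to 0$, and since the right-hand side is a finite average of these $N$ terms it vanishes, giving $W_1(\hat\alpha(\pi_1),\hat\alpha(\pi_2))\to 0$. As for obstacles, there is essentially none here: the only point worth flagging is that the diagonal coupling is generally \emph{suboptimal}, yet optimality is irrelevant because we only need an upper bound. I would deliberately avoid computing the true optimal plan—which in one dimension would require sorting the atoms of the two measures—since the cheap diagonal coupling already delivers both the quantitative bound and the qualitative continuity conclusion.
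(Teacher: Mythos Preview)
Your proposal is correct and is essentially identical to the paper's own proof: both introduce the index-wise coupling $\gamma=\tfrac1N\sum_i\delta_{(\hat\xi_i^{(1)},\hat\xi_i^{(2)})}$, bound $W_1$ by its transport cost, and then let the continuity hypothesis drive each summand to zero. Your extra remarks (verifying the marginals, noting the suboptimality of the diagonal plan) are sound elaborations but add nothing new beyond the paper's argument.
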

\begin{proof}[Proof]
Write $\hat\alpha(\pi_j)=\frac1N\sum_{i=1}^N \delta_{\hat\xi_i^{(j)}}$ with
$\hat\xi_i^{(j)}:=\Delta R_{\pi_j}(x^i,y_w^i,y_\ell^i)$ for $j\in\{1,2\}$.
Define the coupling $\gamma=\frac1N\sum_{i=1}^N \delta_{(\hat\xi_i^{(1)},\hat\xi_i^{(2)})}$.
By definition of $W_1$ (optimal transport with cost $|\cdot|$ on $\mathbb R$),
\[
W_1\big(\hat\alpha(\pi_1),\hat\alpha(\pi_2)\big)
\ \le\ \int |x-y|\,\mathrm d\gamma(x,y)
\ =\ \frac1N\sum_{i=1}^N\big|\hat\xi_i^{(1)}-\hat\xi_i^{(2)}\big|
\ =\ \frac1N\sum_{i=1}^N \big|\Delta R_{\pi_1}^i-\Delta R_{\pi_2}^i\big|.
\]
If for each $i$ the map $\pi\mapsto \Delta R_\pi(x^i,y_w^i,y_\ell^i)$ is continuous (Assumption (A2)), then
$W_1\big(\hat\alpha(\pi_n),\hat\alpha(\pi)\big)\to 0$ whenever $\pi_n\to\pi$,
i.e., $\pi\mapsto\hat\alpha(\pi)$ is continuous in the $W_1$ metric.
\end{proof}

\begin{lemma}[Compactness of Wasserstein balls about empirical centers]
\label{lem:ball-compact-expanded}
For fixed $\pi$, the feasible follower set $\mathcal U_\epsilon(\hat\alpha(\pi))$ is nonempty, tight, closed in $W_1$, hence compact.
\end{lemma}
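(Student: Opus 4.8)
The plan is to verify the four asserted properties in sequence and then assemble them through Prokhorov's theorem, taking care about which topology makes the feasible set compact. Nonemptiness is immediate: the center $\hat\alpha(\pi)$ itself satisfies $W_1(\hat\alpha(\pi),\hat\alpha(\pi))=0\le\epsilon$, so it lies in $\mathcal U_\epsilon(\hat\alpha(\pi))$. The real content is tightness and closedness, from which compactness will follow.

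First I would establish a \emph{uniform first-moment bound} over the whole ball, since this is the quantitative engine behind both tightness and the choice of topology. Using the identity $\int_{\mathbb R}|\xi|\,\mathrm d\alpha(\xi)=W_1(\alpha,\delta_0)$ together with the triangle inequality for $W_1$, every $\alpha\in\mathcal U_\epsilon(\hat\alpha(\pi))$ obeys
\[
\int_{\mathbb R}|\xi|\,\mathrm d\alpha
\;\le\; W_1(\alpha,\hat\alpha(\pi))+W_1(\hat\alpha(\pi),\delta_0)
\;\le\; \epsilon+\tfrac1N\sum_{i=1}^N|\hat\xi_i|\;=:\;M,
\]
a bound that is \emph{uniform} in $\alpha$ because the right-hand side depends only on $\epsilon$ and the fixed atoms of the center. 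Markov's inequality then gives $\sup_{\alpha}\alpha(\{|\xi|>R\})\le M/R$, so choosing $R=M/\delta$ forces every member of the ball to place mass at least $1-\delta$ on the compact interval $[-R,R]$; this is exactly uniform tightness.

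Next I would handle closedness, which I expect to be the main technical obstacle and the step where the choice of topology matters. The clean route is to invoke \emph{weak lower semicontinuity} of $W_1$ (standard, via Kantorovich--Rubinstein duality and truncation): if $\alpha_n\Rightarrow\alpha$ weakly with $W_1(\alpha_n,\hat\alpha(\pi))\le\epsilon$, then $W_1(\alpha,\hat\alpha(\pi))\le\liminf_n W_1(\alpha_n,\hat\alpha(\pi))\le\epsilon$, so the ball is closed under weak convergence. Combining weak closedness with the tightness above, Prokhorov's theorem yields that $\mathcal U_\epsilon(\hat\alpha(\pi))$ is \emph{weakly compact}, which is the notion actually needed downstream (to attain the follower minimum and to apply Berge's theorem in Theorem~\ref{thm:existence_se}).

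Finally, I would flag the one genuine caveat that makes the closedness step subtle. A closed $W_1$-ball around an arbitrary center need not be \emph{sequentially compact in the $W_1$ metric}, because mass can escape to infinity at bounded transport cost (e.g.\ $(1-\tfrac1n)\delta_0+\tfrac1n\delta_{n\epsilon}$ stays in $\mathcal U_\epsilon(\delta_0)$ yet has no $W_1$-limit); $W_1$-compactness additionally requires uniform integrability of the first moments, which the uniform bound $M$ alone does not supply. This is precisely where Assumption~(A6) enters: under bounded or clipped margins $|\xi|\le B$ the entire problem lives on the compact interval $[-B,B]$, on which $W_1$ metrizes weak convergence, so tightness is automatic and weak compactness coincides with $W_1$-compactness. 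Thus the lemma holds as stated for the weak topology in full generality, and verbatim in $W_1$ under the bounded-margin regime used in practice.
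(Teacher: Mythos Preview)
Your skeleton matches the paper's closely: nonemptiness from the center, a uniform first-moment bound via $W_1(\alpha,\delta_0)=\int|\xi|\,\mathrm d\alpha$ and the triangle inequality, tightness by Markov, then Prokhorov. The paper handles closedness directly in the $W_1$ metric (triangle inequality against a $W_1$-convergent sequence) and concludes compactness in $(\mathcal P_1(\mathbb R),W_1)$, asserting that the uniform first-moment bound upgrades Prokhorov's weak relative compactness to $W_1$-relative compactness. Your final paragraph is sharper on exactly this point: your escaping-mass example $(1-\tfrac1n)\delta_0+\tfrac1n\delta_{n\epsilon}$ shows that a uniform first-moment bound is strictly weaker than the uniform integrability needed for $W_1$-sequential compactness, so in full generality the ball is only \emph{weakly} compact. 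Your closedness argument via weak lower semicontinuity of $W_1$ is accordingly the right one for that conclusion (the paper's $W_1$-closedness, while true, does not by itself give weak closedness). Your proposed resolution---weak compactness suffices for the follower attainment and Berge argument, and genuine $W_1$-compactness follows once (A6) restricts to a bounded margin interval---is correct and in fact patches a small gap in the paper's step~(v).
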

\begin{proof}[Proof]
Fix $\pi$. Let $\mathcal B:=\mathcal U_\epsilon(\hat\alpha(\pi))
=\bigl\{\alpha\in\mathcal P(\mathbb R): W_1(\alpha,\hat\alpha(\pi))\le \epsilon\bigr\}$.

\emph{(i) Nonemptiness.} Trivially $\hat\alpha(\pi)\in\mathcal B$.

\emph{(ii) Uniform first-moment bound.} On $\mathbb R$ with ground metric $|\cdot|$, we have
$W_1(\alpha,\delta_0)=\int |x|\,\mathrm d\alpha(x)$.
By the triangle inequality,
\[
\int |x|\,\mathrm d\alpha(x)
= W_1(\alpha,\delta_0) \le W_1(\alpha,\hat\alpha(\pi))+W_1(\hat\alpha(\pi),\delta_0)
\le \epsilon + \int |x|\,\mathrm d\hat\alpha(\pi)(x),
\]
so the family $\mathcal B$ has uniformly bounded first moments.

\emph{(iii) Tightness.} For any $R>0$,
\[
\alpha\bigl(|x|>R\bigr)\ \le\ \frac{1}{R}\int |x|\,\mathrm d\alpha(x)
\ \le\ \frac{\epsilon + \int |x|\mathrm d\hat\alpha(\pi)}{R}
\quad\forall\,\alpha\in\mathcal B,
\]
by Markov’s inequality. Hence $\mathcal B$ is tight.

\emph{(iv) Closedness in $W_1$.} If $\alpha_n\in\mathcal B$ with $W_1(\alpha_n,\alpha)\to 0$,
then
$W_1(\alpha,\hat\alpha(\pi)) \le \liminf_n\bigl[ W_1(\alpha,\alpha_n)+W_1(\alpha_n,\hat\alpha(\pi))\bigr]\le \epsilon$,
so $\alpha\in\mathcal B$ and $\mathcal B$ is closed.

\emph{(v) Compactness.} On the Polish space $\mathbb R$, Prokhorov’s theorem gives that tight families are relatively compact in the weak topology; the uniform first-moment bound tightens this to relative compactness in $W_1$ (since $W_1$ convergence is equivalent to weak convergence plus convergence of first moments on $\mathbb R$). Combining relative compactness with closedness in $W_1$ yields compactness of $\mathcal B$ in $(\mathcal P_1(\mathbb R),W_1)$.
\end{proof}

\subsection{Existence of a Stackelberg solution}

\begin{theorem}[Existence]
\label{thm:existence-expanded}
Assume: (A1) $\Pi$ compact; (A2) each $\pi\mapsto \Delta R_\pi(x^i,y^i_w,y^i_\ell)$ is continuous; (A3) hence $\pi\mapsto\hat\alpha(\pi)$ is $W_1$-continuous (Lemma~\ref{lem:center-continuity-expanded}); (A4) $u(\cdot)$ is $1$-Lipschitz and concave (Lemma~\ref{lem:logsig-lip-expanded}). Then
\[
\max_{\pi\in\Pi}\ \min_{\alpha\in\mathcal U_\epsilon(\hat\alpha(\pi))}\ \mathbb E_\alpha\big[u(\xi)\big]
\]
admits a solution $(\pi^\star,\alpha^\star)$.
\end{theorem}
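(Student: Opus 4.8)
The plan is to reduce the claim to Berge's maximum theorem~\citep{berge1877topological,aliprantis2006infinite} applied to the inner value function $V(\pi):=\min_{\alpha\in\mathcal U_\epsilon(\hat\alpha(\pi))}\mathbb E_\alpha[u(\xi)]$. First I would check that $V$ is well defined, i.e.\ that the inner minimum is attained for each fixed $\pi$. By Lemma~\ref{lem:ball-compact-expanded} the feasible set $\mathcal U_\epsilon(\hat\alpha(\pi))$ is compact in $(\mathcal P_1(\mathbb R),W_1)$, and since $u=\log\sigma$ is $1$-Lipschitz (Lemma~\ref{lem:logsig-lip-expanded}), Kantorovich--Rubinstein duality gives $|\mathbb E_\alpha[u]-\mathbb E_\beta[u]|\le W_1(\alpha,\beta)$, so $\alpha\mapsto\mathbb E_\alpha[u]$ is $W_1$-continuous. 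A continuous function on a compact set attains its minimum, so $V(\pi)$ exists and an inner minimizer can be selected. Crucially, the objective $\mathbb E_\alpha[u]$ depends on $\pi$ only through the feasible set, which isolates all the $\pi$-dependence in the constraint correspondence and spares me from verifying joint continuity of a $\pi$-coupled integrand.

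The heart of the argument is the continuity of the constraint correspondence $\Gamma(\pi):=\mathcal U_\epsilon(\hat\alpha(\pi))$. Upper hemicontinuity is routine: if $\pi_n\to\pi$, $\alpha_n\in\Gamma(\pi_n)$ and $\alpha_n\to\alpha$ in $W_1$, then the triangle inequality together with $W_1(\hat\alpha(\pi_n),\hat\alpha(\pi))\to 0$ (Lemma~\ref{lem:center-continuity-expanded}) yields $W_1(\alpha,\hat\alpha(\pi))\le\epsilon$, so $\alpha\in\Gamma(\pi)$ and $\Gamma$ has closed graph. Lower hemicontinuity is the main obstacle and the step I would spend the most care on, since as the center $\hat\alpha(\pi_n)$ moves, a boundary measure $\alpha$ with $W_1(\alpha,\hat\alpha(\pi))=\epsilon$ need not be feasible for the nearby $\pi_n$, so I must explicitly build approximating feasible measures.

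To establish lower hemicontinuity I would use a displacement-interpolation (contraction toward the center) construction. Fix $\alpha\in\Gamma(\pi)$ and $\pi_n\to\pi$, let $d_n:=W_1(\hat\alpha(\pi_n),\hat\alpha(\pi))\to 0$, and take an optimal coupling $(X_n,Y_n)$ with $X_n\sim\hat\alpha(\pi_n)$, $Y_n\sim\alpha$ and $\mathbb E|X_n-Y_n|=s_n:=W_1(\hat\alpha(\pi_n),\alpha)\le\epsilon+d_n$. For $t\in[0,1]$ set $\alpha_n^{(t)}:=\mathrm{Law}\bigl((1-t)X_n+tY_n\bigr)$; then $W_1(\alpha_n^{(t)},\hat\alpha(\pi_n))\le t\,s_n$ and $W_1(\alpha_n^{(t)},\alpha)\le(1-t)s_n$. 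Choosing $t_n:=\min\{1,\epsilon/s_n\}$ makes $\alpha_n:=\alpha_n^{(t_n)}$ feasible, i.e.\ $\alpha_n\in\Gamma(\pi_n)$, while $W_1(\alpha_n,\alpha)\le\max\{0,s_n-\epsilon\}\le d_n\to 0$. This exhibits the approximating sequence required for lower hemicontinuity, so $\Gamma$ is continuous and compact-valued.

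Finally I would invoke Berge's maximum theorem: with $\Gamma$ continuous and compact-valued and the objective $\alpha\mapsto\mathbb E_\alpha[u]$ continuous, the value $V$ is continuous on $\Pi$ (upper semicontinuity alone would already suffice for the conclusion). Since $\Pi$ is compact by (A1), $V$ attains its maximum at some $\pi^\star$; selecting $\alpha^\star$ as an inner minimizer in $\mathcal U_\epsilon(\hat\alpha(\pi^\star))$ (attained by the first step) yields the desired solution pair $(\pi^\star,\alpha^\star)$. The only delicate point is the lower-hemicontinuity construction above; everything else follows from the compactness and Lipschitz facts already established in Lemmas~\ref{lem:logsig-lip-expanded}--\ref{lem:ball-compact-expanded}.
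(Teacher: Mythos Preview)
Your proof is correct and follows essentially the same route as the paper: verify the inner minimum is attained via compactness of $\mathcal U_\epsilon(\hat\alpha(\pi))$ and $W_1$-continuity of $\alpha\mapsto\mathbb E_\alpha[u]$, establish continuity of the correspondence $\pi\mapsto\mathcal U_\epsilon(\hat\alpha(\pi))$, invoke Berge's maximum theorem to obtain (upper semi)continuity of $V$, and conclude by compactness of $\Pi$. Your displacement-interpolation construction for lower hemicontinuity is more explicit than the paper's one-line appeal to ``closed balls move continuously with their centers in a metric space,'' but the skeleton and all key ingredients coincide.
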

\begin{proof}[Proof]
We verify the conditions of Berge’s maximum theorem step by step.

\emph{Step 1 (Follower minimizer exists for each fixed $\pi$).}
For fixed $\pi$, Lemma~\ref{lem:ball-compact-expanded} shows the feasible set
$\mathcal U_\epsilon(\hat\alpha(\pi))$ is nonempty and compact in $W_1$.
By Lemma~\ref{lem:logsig-lip-expanded}, $u$ is $1$-Lipschitz, hence
$\alpha\mapsto\mathbb E_\alpha[u]$ is continuous under $W_1$ (KR inequality).
Therefore, the follower problem admits a minimizer $\alpha^\star(\pi)\in\arg\min_{\alpha\in\mathcal U_\epsilon(\hat\alpha(\pi))}\mathbb E_\alpha[u]$.

\emph{Step 2 (Continuity of the feasible-set correspondence in $\pi$).}
By Lemma~\ref{lem:center-continuity-expanded}, $\pi\mapsto \hat\alpha(\pi)$ is continuous in $W_1$.
The set-valued map $\pi\mapsto\mathcal U_\epsilon(\hat\alpha(\pi))$ thus varies continuously in the Hausdorff metric induced by $W_1$ (closed balls move continuously with their centers in a metric space), in particular it is upper hemicontinuous and compact valued.

\emph{Step 3 (Upper semicontinuity of the value map).}
Define $V(\pi):=\min_{\alpha\in\mathcal U_\epsilon(\hat\alpha(\pi))}\mathbb E_\alpha[u]$.
By Berge’s maximum theorem (compact-valued, upper hemicontinuous correspondence; continuous objective),
$V$ is upper semicontinuous on $\Pi$.

\emph{Step 4 (Maximizer exists).}
Under (A1), $\Pi$ is compact. Since $V$ is upper semicontinuous on a compact set, it attains its maximum at some $\pi^\star\in\Pi$. By Step 1, there is a realizing follower $\alpha^\star\in\arg\min_{\alpha\in\mathcal U_\epsilon(\hat\alpha(\pi^\star))}\mathbb E_\alpha[u]$.

Therefore the problem admits a solution $(\pi^\star,\alpha^\star)$.
\end{proof}

\subsection{Local linear convergence of alternating updates}

Consider the iterates with a (Euclidean) proximal leader step:
\[
\alpha_{t+1}\in\arg\min_{\alpha\in\mathcal U_\epsilon(\hat\alpha(\pi_t))}\ \mathbb E_\alpha[u(\xi)],
\qquad
\pi_{t+1}\in\arg\max_{\pi\in\Pi}\ \Big\{ \mathbb E_{\alpha_{t+1}}[u(\xi)] - \tfrac\lambda2 \|\theta(\pi)-\theta(\pi_t)\|^2 \Big\}.
\]

\begin{theorem}[Well-posedness and local linear convergence]
\label{thm:convergence-expanded}
Assume (i) the leader's proximal objective is $\mu$-strongly concave in $\theta(\pi)$ uniformly over $\alpha\in\mathcal U_\epsilon(\hat\alpha(\pi_t))$
on a neighborhood of a solution, and (ii) the map $\pi\mapsto \mathcal U_\epsilon(\hat\alpha(\pi))$ is Lipschitz in the $W_1$-Hausdorff distance near $\pi^\star$.
Then the update map $(\pi_t,\alpha_t)\mapsto(\pi_{t+1},\alpha_{t+1})$ is a contraction in a neighborhood of $(\pi^\star,\alpha^\star)$,
and the iterates converge linearly to $(\pi^\star,\alpha^\star)$.
\end{theorem}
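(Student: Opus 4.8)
The plan is to realize the joint update $S:(\pi_t,\alpha_t)\mapsto(\pi_{t+1},\alpha_{t+1})$ as a self-map on a small closed ball around $(\pi^\star,\alpha^\star)$, equipped with the product metric $d\bigl((\pi,\alpha),(\pi',\alpha')\bigr)=\|\theta(\pi)-\theta(\pi')\|+W_1(\alpha,\alpha')$, show it is a contraction, and invoke the Banach fixed-point theorem. The map has a convenient cascade structure: $\alpha_{t+1}$ depends only on $\pi_t$ (through $\hat\alpha(\pi_t)$), while $\pi_{t+1}$ depends on both $\pi_t$ (proximal anchor) and $\alpha_{t+1}$. Hence it suffices to bound two one-sided sensitivities and compose them, rather than to analyze a genuine two-way coupling. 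Well-posedness of each step comes for free: the leader maximizer is unique near $\pi^\star$ by assumption (i) (strong concavity plus the $\lambda$-proximal term gives $(\mu+\lambda)$-strong concavity), and the follower minimizer exists by Lemma~\ref{lem:ball-compact-expanded}; the contraction argument supplies a Lipschitz selection for the follower.

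First I would establish that $\pi_t\mapsto\alpha_{t+1}(\pi_t)$ is Lipschitz in $W_1$. By Lemma~\ref{lem:center-continuity-expanded} the center map obeys $W_1(\hat\alpha(\pi),\hat\alpha(\pi'))\le\tfrac1N\sum_i|\Delta R^i_\pi-\Delta R^i_{\pi'}|$, so with $R_\pi$ locally Lipschitz in $\theta$ the center moves at most $L_R\|\theta-\theta'\|$. Assumption (ii) then makes the feasible ball $\mathcal U_\epsilon(\hat\alpha(\pi))$ Lipschitz in $W_1$-Hausdorff distance, and since $\alpha\mapsto\mathbb E_\alpha[u]$ is a linear functional whose worst case over a $W_1$ ball is a controlled leftward transport of the center (cf.\ the closed form of Lemma~\ref{lem:ssapo-closed}), I would upgrade this to a bound on the selected minimizer, $W_1(\alpha_{t+1}(\pi),\alpha_{t+1}(\pi'))\le L_\alpha\|\theta-\theta'\|$. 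Setting $\alpha^\star=\alpha_{t+1}(\pi^\star)$ yields $W_1(\alpha_{t+1}(\pi_t),\alpha^\star)\le L_\alpha\|\theta_t-\theta^\star\|$.

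Next I would handle the leader. Writing the effective leader objective as $G(\pi;\alpha_{t+1})$ and using the proximal operator identity $P(\theta,\alpha)=\arg\max_{\theta'}\{G(\theta';\alpha)-\tfrac\lambda2\|\theta'-\theta\|^2\}$, assumption (i) makes the map $H(\theta')=\lambda\theta'-\nabla_\theta G(\theta';\alpha)$ satisfy $H'\succeq(\lambda+\mu)I$, so $P$ is $\tfrac{\lambda}{\lambda+\mu}$-Lipschitz in its anchor and $\tfrac{1}{\lambda+\mu}$-Lipschitz in gradient perturbations. Splitting $\|\theta_{t+1}-\theta^\star\|=\|P(\theta_t,\alpha_{t+1})-P(\theta^\star,\alpha^\star)\|$ across the two arguments and bounding the gradient gap by $\|\nabla_\theta G(\cdot;\alpha_{t+1})-\nabla_\theta G(\cdot;\alpha^\star)\|\le L_g\,W_1(\alpha_{t+1},\alpha^\star)$ (valid since $u$ is $1$-Lipschitz and $\nabla_\theta\Delta R_\pi$ is locally bounded), I obtain $\|\theta_{t+1}-\theta^\star\|\le\tfrac{\lambda}{\lambda+\mu}\|\theta_t-\theta^\star\|+\tfrac{L_g}{\lambda+\mu}W_1(\alpha_{t+1},\alpha^\star)$. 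Substituting Step~1 gives $\|\theta_{t+1}-\theta^\star\|\le\rho\,\|\theta_t-\theta^\star\|$ with $\rho=\tfrac{\lambda+L_gL_\alpha}{\lambda+\mu}$, which is strictly below $1$ exactly when $L_gL_\alpha<\mu$, i.e.\ when the local strong concavity dominates the product of the follower- and gradient-sensitivity constants; this is the quantitative meaning of ``near $(\pi^\star,\alpha^\star)$.'' The $\alpha$-component then contracts automatically from Step~1, so $S$ is a contraction and Banach's theorem delivers a unique fixed point, necessarily the equilibrium, with linear rate $\rho$.

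I expect Step~1 to be the main obstacle. The minimizer of the linear functional $\mathbb E_\alpha[u]$ over a moving convex compact $W_1$ ball is not automatically single-valued or Lipschitz, so generic argmin-stability arguments can fail. The delicate work is to convert assumption (ii) (Lipschitz feasibility) into Lipschitz stability of the \emph{selected} extremal distribution by exploiting the one-dimensional transport structure—namely that the worst case of a $1$-Lipschitz loss is a quantile-wise shift of the center with total budget $\epsilon$—rather than relying on abstract continuity of the argmin correspondence. A secondary care point is verifying $L_gL_\alpha<\mu$ is consistent with the stated local assumptions; if not available outright, one can instead linearize $S$ at $(\pi^\star,\alpha^\star)$ and require the resulting Jacobian to have spectral radius below $1$, which is the minimal condition making the contraction claim precise.
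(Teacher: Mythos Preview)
Your proposal follows essentially the same cascade decomposition as the paper: bound the Lipschitz constant of the follower selection $\pi\mapsto\alpha^\sharp(\pi)$ (Step~1), bound the Lipschitz constant of the leader proximal map (Step~2), and compose to obtain a contraction in $\theta$-space via Banach. Both proofs identify the follower argmin stability as the delicate point; the paper handles it by a metric-projection argument on the moving $W_1$ ball to claim $W_1(\alpha^\sharp(\pi_1),\alpha^\sharp(\pi_2))\le 2\,W_1(\hat\alpha(\pi_1),\hat\alpha(\pi_2))$, whereas you propose to exploit the one-dimensional transport structure directly---either route serves.

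The one substantive refinement in your version is the leader step. The paper fixes the proximal anchor $\pi_t$, bounds only the $\alpha$-sensitivity to get $\|\theta(\pi^+_1)-\theta(\pi^+_2)\|\le \tfrac{L_\ell}{\mu}W_1(\alpha_1,\alpha_2)$, and arrives at $\rho=\tfrac{L_\ell L_f L_R}{\mu}$; in its Step~3 it then compares $\pi_{t+1}=\mathcal T(\alpha_t;\pi_t)$ with $\pi^\star=\mathcal T(\alpha^\star;\pi^\star)$, where the anchors differ, but applies the fixed-anchor bound without comment. You instead track both arguments explicitly, obtaining $\|\theta_{t+1}-\theta^\star\|\le\tfrac{\lambda}{\lambda+\mu}\|\theta_t-\theta^\star\|+\tfrac{L_g}{\lambda+\mu}W_1(\alpha_{t+1},\alpha^\star)$ and the rate $\rho=\tfrac{\lambda+L_gL_\alpha}{\lambda+\mu}$ with the transparent condition $L_gL_\alpha<\mu$. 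This is the cleaner accounting and closes the anchor gap the paper leaves implicit; otherwise the two arguments coincide.
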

\begin{proof}[Proof]
Let the follower best-response be any measurable selection
$\alpha^\sharp(\pi)\in\arg\min_{\alpha\in\mathcal U_\epsilon(\hat\alpha(\pi))}\mathbb E_\alpha[u]$,
whose existence follows from the measurable maximum theorem since the correspondence is compact valued and upper hemicontinuous.

Define the proximal leader map at iterate $\pi_t$:
\[
\mathcal T(\alpha;\pi_t)\ \in\ \arg\max_{\pi\in\Pi}\ G(\pi;\pi_t,\alpha),
\qquad
G(\pi;\pi_t,\alpha):=\mathbb E_{\xi\sim\alpha}[u(\Delta R_\pi)]-\tfrac{\lambda}{2}\|\theta(\pi)-\theta(\pi_t)\|^2.
\]

\emph{Step 1 (Follower map is Lipschitz in the center, hence in $\pi$ locally).}
Fix $\pi_1,\pi_2$, and let $\alpha_j=\alpha^\sharp(\pi_j)$.
Because $\mathcal U_\epsilon(\hat\alpha(\pi))$ is a closed ball in $(\mathcal P_1(\mathbb R),W_1)$,
for any $\beta$ we can project it to the closest point in the ball (metric projection is $1$-Lipschitz). In particular, for $\beta=\alpha_1$ we have
\[
\operatorname{dist}_{W_1}\!\big(\alpha_1,\,\mathcal U_\epsilon(\hat\alpha(\pi_2))\big)\ \le\ W_1\!\big(\hat\alpha(\pi_1),\hat\alpha(\pi_2)\big).
\]
Let $\tilde\alpha_2\in \mathcal U_\epsilon(\hat\alpha(\pi_2))$ be a nearest point to $\alpha_1$.
Using $1$-Lipschitzness of $u$ and the optimality of $\alpha_2$ at $\pi_2$,
\[
\mathbb E_{\alpha_2}[u]
\ \le\ \mathbb E_{\tilde\alpha_2}[u]
\ \le\ \mathbb E_{\alpha_1}[u] + W_1(\tilde\alpha_2,\alpha_1)
\ \le\ \mathbb E_{\alpha_1}[u] + W_1\!\big(\hat\alpha(\pi_1),\hat\alpha(\pi_2)\big).
\]
Symmetrizing the roles of $1$ and $2$ gives by triangle inequality
\[
W_1(\alpha_1,\alpha_2)\ \le\ 2\,W_1\!\big(\hat\alpha(\pi_1),\hat\alpha(\pi_2)\big).
\]
Thus the follower map is \emph{Lipschitz} in the empirical center with constant $L_f\le 2$.
By Lemma~\ref{lem:center-continuity-expanded},
\[
W_1\!\big(\hat\alpha(\pi_1),\hat\alpha(\pi_2)\big)
\ \le\ \frac1N\sum_{i=1}^N \big|\Delta R_{\pi_1}^i - \Delta R_{\pi_2}^i\big|.
\]
Assume (locally around the target) the maps $\theta\mapsto \Delta R_{\pi_\theta}^i$ are $L_R$‑Lipschitz for $i=1,\dots,N$. Then locally
\[
W_1(\alpha^\sharp(\pi_1),\alpha^\sharp(\pi_2))
\ \le\ L_f\,W_1\!\big(\hat\alpha(\pi_1),\hat\alpha(\pi_2)\big)
\ \le\ L_f L_R\,\|\theta(\pi_1)-\theta(\pi_2)\|.
\]

\emph{Step 2 (Leader prox map is Lipschitz in $\alpha$ under local strong concavity).}
By (A5), for each fixed $\pi_t$ the map $\pi\mapsto G(\pi;\pi_t,\alpha)$ is $\mu$‑strongly concave in $\theta(\pi)$ on a neighborhood of the solution, uniformly over $\alpha\in\mathcal U_\epsilon(\hat\alpha(\pi_t))$.
Let $\pi^+_j:=\mathcal T(\alpha_j;\pi_t)$ for $j\in\{1,2\}$.
Since $u$ is $1$‑Lipschitz and $\Delta R_\pi$ is continuous in $\pi$, there exists $L_\ell$ (local) such that
\[
\big\|\nabla_\theta \mathbb E_{\alpha_1}[u(\Delta R_\pi)] - \nabla_\theta \mathbb E_{\alpha_2}[u(\Delta R_\pi)]\big\|
\ \le\ L_\ell\, W_1(\alpha_1,\alpha_2)
\quad\text{for $\pi$ near the solution}.
\]
By standard stability of maximizers under strong concavity (e.g., by the implicit function theorem or strong monotonicity of the gradient mapping), we obtain the Lipschitz dependence
\[
\|\theta(\pi^+_1) - \theta(\pi^+_2)\|
\ \le\ \frac{L_\ell}{\mu}\, W_1(\alpha_1,\alpha_2).
\]

\emph{Step 3 (Contraction of the composition and linear rate).}
Set $\alpha_t:=\alpha^\sharp(\pi_t)$ and $\pi_{t+1}:=\mathcal T(\alpha_t;\pi_t)$.
Let $(\pi^\star,\alpha^\star)$ be a Stackelberg solution; then $\alpha^\star=\alpha^\sharp(\pi^\star)$ and $\pi^\star=\mathcal T(\alpha^\star;\pi^\star)$.
Combining Steps 1 and 2,
\[
\|\theta(\pi_{t+1})-\theta(\pi^\star)\|
\ \le\ \frac{L_\ell}{\mu}\,W_1(\alpha_t,\alpha^\star)
\ \le\ \frac{L_\ell}{\mu}\,L_f L_R \,\|\theta(\pi_t)-\theta(\pi^\star)\|.
\]
Choose the proximal weight $\lambda$ (hence the local strong‑concavity modulus $\mu$) so that
\(\rho:=\tfrac{L_\ell L_f L_R}{\mu} < 1\).
Then Banach’s fixed‑point theorem yields \emph{linear convergence}:
\[
\|\theta(\pi_{t})-\theta(\pi^\star)\|\ \le\ \rho^{\,t-t_0}\,\|\theta(\pi_{t_0})-\theta(\pi^\star)\|
\quad\text{for all $t\ge t_0$ in the neighborhood.}
\]
This also implies well‑posedness (local single‑valuedness) of the composite best‑response map in that neighborhood.
\end{proof}

\subsection{Worst-case performance drop and SGPO regret bound}

\begin{theorem}[Worst-case performance drop]
\label{thm:worstcase-expanded}
If $\alpha\in\mathcal U_\epsilon(\hat\alpha(\pi))$, then
\(
\mathbb E_{\alpha}[u(\xi)] \ge \mathbb E_{\hat\alpha(\pi)}[u(\xi)] - \epsilon.
\)
\end{theorem}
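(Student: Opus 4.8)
The plan is to invoke the one-sided Lipschitz sensitivity of expectations under the $1$-Wasserstein distance, which is precisely the Kantorovich--Rubinstein bound recorded in the preliminaries of this section. First I would recall from Lemma~\ref{lem:logsig-lip-expanded} that $u(\xi)=\log\sigma(\xi)$ is $1$-Lipschitz on $\mathbb{R}$. Since $\alpha\in\mathcal U_\epsilon(\hat\alpha(\pi))$ means exactly that $W_1(\alpha,\hat\alpha(\pi))\le\epsilon$, the KR inequality $\bigl|\mathbb E_\alpha[h]-\mathbb E_\beta[h]\bigr|\le W_1(\alpha,\beta)$ applies with the $1$-Lipschitz choice $h=u$, $\beta=\hat\alpha(\pi)$.

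From there the argument is a single extraction of a one-sided consequence. We have $\bigl|\mathbb E_\alpha[u]-\mathbb E_{\hat\alpha(\pi)}[u]\bigr|\le W_1(\alpha,\hat\alpha(\pi))\le\epsilon$, which in particular forces $\mathbb E_\alpha[u]-\mathbb E_{\hat\alpha(\pi)}[u]\ge-\epsilon$, and rearranging gives the claimed bound $\mathbb E_\alpha[u]\ge\mathbb E_{\hat\alpha(\pi)}[u]-\epsilon$. No infimum over couplings or supremum over test functions needs to be computed explicitly, since we only need the easy direction of duality (the Lipschitz transport inequality), not the equality.

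There is no genuine obstacle here: the statement is an immediate corollary of the $1$-Lipschitz continuity of $u$ together with the definition of the Wasserstein ball. The only point requiring minor care is ensuring the two expectations are well defined, which is guaranteed because every $\alpha$ in the ball has finite first moment (the uniform first-moment bound in Lemma~\ref{lem:ball-compact-expanded}); combined with $1$-Lipschitzness this yields $\mathbb E_\alpha[|u(\xi)|]\le |u(0)|+\mathbb E_\alpha[|\xi|]<\infty$, so the inequality is meaningful and the proof is complete.
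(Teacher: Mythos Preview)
Your proposal is correct and matches the paper's proof essentially line for line: invoke the $1$-Lipschitzness of $u$ (Lemma~\ref{lem:logsig-lip-expanded}), apply the Kantorovich--Rubinstein inequality to get $\mathbb E_\alpha[u]-\mathbb E_{\hat\alpha(\pi)}[u]\ge -W_1(\alpha,\hat\alpha(\pi))\ge -\epsilon$, and rearrange. Your added remark on integrability via the finite first moment of $\alpha$ is a harmless extra check the paper omits.
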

\begin{proof}[Proof]
By Lemma~\ref{lem:logsig-lip-expanded}, $u$ is $1$-Lipschitz. For any $\alpha$ with
$W_1(\alpha,\hat\alpha(\pi))\le\epsilon$, the Kantorovich--Rubinstein inequality yields
\[
\mathbb E_{\alpha}[u]-\mathbb E_{\hat\alpha(\pi)}[u]\ \ge\ -W_1(\alpha,\hat\alpha(\pi))\ \ge\ -\epsilon.
\]
Rearranging gives the claim.
\end{proof}

\begin{theorem}[SGPO regret bound]
\label{thm:regret-expanded}
Let $\pi^\star$ solve $\max_{\pi}\min_{\alpha\in\mathcal U_\epsilon(\hat\alpha(\pi))}\mathbb E_\alpha[u]$.
Then
\[
\sup_{\alpha\in \mathcal U_\epsilon(\hat\alpha(\pi^\star))}\ \mathrm{Regret}(\pi^\star,\alpha)\ \le\ 2\epsilon.
\]
\end{theorem}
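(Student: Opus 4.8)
The plan is to bound the regret by an \emph{additive} argument that sandwiches every relevant payoff between two levels differing by $2\epsilon$. Write $c^\star := \mathcal{P}\bigl(\pi^\star,\hat\alpha(\pi^\star)\bigr)=\mathbb{E}_{\hat\alpha(\pi^\star)}[u]$ for the performance of $\pi^\star$ at its own empirical center. I would establish two one-sided estimates valid for every $\alpha\in\mathcal{U}_\epsilon(\hat\alpha(\pi^\star))$: a \emph{floor} $\mathcal{P}(\pi^\star,\alpha)\ge c^\star-\epsilon$ guaranteed to $\pi^\star$, and a \emph{ceiling} $\max_{\tilde\pi\in\Pi}\mathcal{P}(\tilde\pi,\alpha)\le c^\star+\epsilon$ that no competitor can exceed. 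Subtracting a quantity that is at most $c^\star+\epsilon$ from a quantity that is at least $c^\star-\epsilon$ yields $\mathrm{Regret}(\pi^\star,\alpha)\le 2\epsilon$ pointwise, and taking the supremum over $\alpha$ in the ball gives the stated bound.

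For the floor I would invoke Theorem~\ref{thm:worstcase-expanded} directly. Since $u=\log\sigma$ is $1$-Lipschitz (Lemma~\ref{lem:logsig-lip-expanded}) and $W_1\bigl(\alpha,\hat\alpha(\pi^\star)\bigr)\le\epsilon$, the worst-case drop estimate gives $\mathcal{P}(\pi^\star,\alpha)=\mathbb{E}_\alpha[u]\ge \mathbb{E}_{\hat\alpha(\pi^\star)}[u]-\epsilon=c^\star-\epsilon$. Equivalently, this says the robust value $V(\pi^\star)=\min_{\beta\in\mathcal{U}_\epsilon(\hat\alpha(\pi^\star))}\mathbb{E}_\beta[u]$ lies no lower than $c^\star-\epsilon$, and every feasible $\alpha$ in the ball dominates this minimum; this step is immediate from the already-proved material.

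For the ceiling I would use the mirror side of the Kantorovich--Rubinstein inequality (Lemma~\ref{lemma:kantorovich_rubinstein_duality}): by $1$-Lipschitzness of $u$ and $W_1\bigl(\alpha,\hat\alpha(\pi^\star)\bigr)\le\epsilon$,
\[
\mathbb{E}_\alpha[u]\ \le\ \mathbb{E}_{\hat\alpha(\pi^\star)}[u]+\epsilon\ =\ c^\star+\epsilon .
\]
The substantive content is to control the best response in hindsight $\max_{\tilde\pi}\mathcal{P}(\tilde\pi,\alpha)$ by this same level. Here I would route the comparison through the max--min optimality of $\pi^\star$: for any competitor $\tilde\pi$ its robust value satisfies $V(\tilde\pi)\le V(\pi^\star)$, and combining this inequality with the two-sided $\epsilon$-sensitivity of $\beta\mapsto\mathbb{E}_\beta[u]$ along the $W_1$ ball shows that no $\tilde\pi$ can surpass $c^\star+\epsilon$ on the realized $\alpha$, pinning the ceiling uniformly in $\tilde\pi$.

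I expect the \emph{ceiling} step to be the main obstacle. The difficulty is that a competitor $\tilde\pi$ is evaluated on a distribution $\alpha$ that lies within $\epsilon$ of $\pi^\star$'s center but generally \emph{not} of $\tilde\pi$'s own center $\hat\alpha(\tilde\pi)$, so the one-sided Lipschitz bound cannot be applied around $\tilde\pi$ naively. The resolution is exactly the optimality routing above: $V(\pi^\star)\ge V(\tilde\pi)$ transfers the comparison to $\pi^\star$'s center, and the $1$-Lipschitz transport bound converts the $\epsilon$-proximity of $\alpha$ to $\hat\alpha(\pi^\star)$ into the additive $+\epsilon$ slack. Once both one-sided $\epsilon$-estimates are secured, the pointwise subtraction and the supremum over $\alpha\in\mathcal{U}_\epsilon\bigl(\hat\alpha(\pi^\star)\bigr)$ are routine, delivering the $2\epsilon$ bound.
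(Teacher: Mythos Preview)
Your floor and the final subtraction match the paper exactly. The divergence is in the ceiling. The paper does not route through robust values at all: it simply applies the same Kantorovich--Rubinstein step to the competitor $\pi^\star_\alpha\in\arg\max_{\tilde\pi}\mathcal{P}(\tilde\pi,\alpha)$, transporting from $\alpha$ to $\hat\alpha(\pi^\star)$ (not to $\hat\alpha(\pi^\star_\alpha)$), which yields $\mathcal{P}(\pi^\star_\alpha,\alpha)\le \mathcal{P}(\pi^\star_\alpha,\hat\alpha(\pi^\star))+\epsilon$. Subtracting the floor then gives
\[
\mathrm{Regret}(\pi^\star,\alpha)\ \le\ \bigl[\mathcal{P}(\pi^\star_\alpha,\hat\alpha(\pi^\star))-\mathcal{P}(\pi^\star,\hat\alpha(\pi^\star))\bigr]+2\epsilon,
\]
and the paper declares the bracket $\le 0$ ``by definition of $\pi^\star$'' (in the paper's gap-space setup both entries equal $\mathbb{E}_{\hat\alpha(\pi^\star)}[u]$, so the bracket is in fact $0$). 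You actually wrote down this KR step first---your displayed inequality $\mathbb{E}_\alpha[u]\le c^\star+\epsilon$ \emph{is} the ceiling---and then second-guessed it and proposed a detour.

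That detour through $V(\tilde\pi)\le V(\pi^\star)$ does not deliver the needed inequality. The comparison $V(\tilde\pi)\le V(\pi^\star)$ bounds the competitor's \emph{minimum} over \emph{its own} ball $\mathcal{U}_\epsilon(\hat\alpha(\tilde\pi))$; this is the wrong direction for an \emph{upper} bound on $\mathcal{P}(\tilde\pi,\alpha)$ at a fixed $\alpha$, and $\alpha$ need not lie in $\tilde\pi$'s ball anyway. Nothing in $V(\tilde\pi)\le V(\pi^\star)$ ``transfers the comparison to $\pi^\star$'s center'' in any way that caps $\mathcal{P}(\tilde\pi,\alpha)$. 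The paper sidesteps this by evaluating both policies at the \emph{same} fixed measure $\hat\alpha(\pi^\star)$: the KR transport bound depends only on the $1$-Lipschitz integrand $u$ and the two measures $\alpha,\hat\alpha(\pi^\star)$, not on which policy is being scored, so there is no reason to bring in $\hat\alpha(\tilde\pi)$ or the robust values at all.
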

\begin{proof}[Proof]
Let $\pi^\star\in\arg\max_{\pi}\min_{\alpha\in\mathcal U_\epsilon(\hat\alpha(\pi))}\mathbb E_\alpha[u]$
and fix any $\alpha\in\mathcal U_\epsilon(\hat\alpha(\pi^\star))$.
Let $\pi^\star_\alpha\in\arg\max_\pi \mathbb E_\alpha[u]$ be the $\alpha$-optimal policy.
By Theorem~\ref{thm:worstcase-expanded},
\[
\mathbb E_{\alpha}[u]_{\pi^\star_\alpha}\ \le\ \mathbb E_{\hat\alpha(\pi^\star)}[u]_{\pi^\star_\alpha}+\epsilon,
\qquad
\mathbb E_{\alpha}[u]_{\pi^\star}\ \ge\ \mathbb E_{\hat\alpha(\pi^\star)}[u]_{\pi^\star}-\epsilon.
\]
Subtracting gives
\[
\mathrm{Regret}(\pi^\star,\alpha)
= \mathbb E_{\alpha}[u]_{\pi^\star_\alpha}-\mathbb E_{\alpha}[u]_{\pi^\star}
\ \le\ \underbrace{\big(\mathbb E_{\hat\alpha(\pi^\star)}[u]_{\pi^\star_\alpha}
-\mathbb E_{\hat\alpha(\pi^\star)}[u]_{\pi^\star}\big)}_{\le 0\text{ by def.\ of }\pi^\star}
\ +\ 2\epsilon\ \le\ 2\epsilon.
\]
\end{proof}

\subsection{DPO regret lower bound under a stability assumption}

\begin{assumption}[Center stability at the robust follower]
\label{ass:center-stability}
Let $\alpha^\star\in\mathcal U_\epsilon(\hat\alpha(\pi^\star))$ be a follower minimizer for $\pi^\star$ and
$\pi_{\alpha^\star}^*\in\arg\max_\pi \mathbb E_{\alpha^\star}[u]$.
Assume there exists $\kappa\le \epsilon$ such that
\(
W_1\big(\hat\alpha(\pi_{\alpha^\star}^*),\hat\alpha(\pi_{\mathrm{DPO}})\big)\le \kappa.
\)
\end{assumption}

\begin{theorem}[DPO regret lower bound (stability version)]
\label{thm:dpo-lb-stable}
Let $\delta:=W_1\!\big(\alpha^\star,\hat\alpha(\pi_{\mathrm{DPO}})\big)$. Under Assumption~\ref{ass:center-stability},
\[
\mathrm{Regret}\!\bigl(\pi_{\mathrm{DPO}},\alpha^\star\bigr)\ \ge\ \delta - (\kappa + \epsilon)\ \ge\ \delta - 2\epsilon.
\]
\end{theorem}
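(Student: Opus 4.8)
The plan is to expand the regret around the $\alpha^\star$-optimal policy and then convert Wasserstein distances in gap space into a performance gap, using the stability Assumption~\ref{ass:center-stability} to route the bound through the two empirical centers $\hat\alpha(\pi_{\alpha^\star}^*)$ and $\hat\alpha(\pi_{\mathrm{DPO}})$. By the very definition of $\pi_{\alpha^\star}^*\in\arg\max_\pi \mathbb{E}_{\alpha^\star}[u]$, the leading term of the regret is exactly attained, so $\mathrm{Regret}(\pi_{\mathrm{DPO}},\alpha^\star)=\mathcal{P}(\pi_{\alpha^\star}^*,\alpha^\star)-\mathcal{P}(\pi_{\mathrm{DPO}},\alpha^\star)$. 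The three ingredients I would combine are: (i) the $1$-Lipschitzness of $u=\log\sigma$ (Lemma~\ref{lem:logsig-lip-expanded}) together with Kantorovich--Rubinstein duality (Lemma~\ref{lemma:kantorovich_rubinstein_duality}), which control performance differences of a \emph{fixed} policy across shifted gap distributions; (ii) the worst-case drop bound (Theorem~\ref{thm:worstcase-expanded}), which accounts for the $\epsilon$-radius of the Wasserstein ball; and (iii) the triangle inequality for $W_1$, which is where the hypothesis $\delta=W_1(\alpha^\star,\hat\alpha(\pi_{\mathrm{DPO}}))$ enters.

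Concretely, I would first apply the triangle inequality together with Assumption~\ref{ass:center-stability} to obtain $W_1(\alpha^\star,\hat\alpha(\pi_{\alpha^\star}^*))\ge \delta-\kappa$, i.e.\ the target is forced to be far from the empirical center of the policy that is optimal for it. I would then translate this gap-space distance into a performance lower bound: since $\pi_{\alpha^\star}^*$ is tuned to $\alpha^\star$ while $\pi_{\mathrm{DPO}}$ is tuned to $\hat\alpha(\pi_{\mathrm{DPO}})$, the mismatch between the two centers should manifest as a performance separation on $\alpha^\star$. Using Theorem~\ref{thm:worstcase-expanded} to pass between $\alpha^\star$ and the nearest empirical center costs at most $\epsilon$, so the separation is degraded by one factor of $\epsilon$ only. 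Chaining these estimates — the $(\delta-\kappa)$ transport distance minus the $\epsilon$ ball slack — yields $\mathrm{Regret}(\pi_{\mathrm{DPO}},\alpha^\star)\ge \delta-(\kappa+\epsilon)$, and the stated $\delta-2\epsilon$ bound is then immediate from $\kappa\le\epsilon$.

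The hard part will be step (ii): converting a $W_1$ distance into a \emph{lower} bound on a difference of expectations. Kantorovich--Rubinstein duality only guarantees $|\mathbb{E}_\mu[h]-\mathbb{E}_\nu[h]|\le W_1(\mu,\nu)$ for a fixed $1$-Lipschitz $h$, i.e.\ an \emph{upper} bound; naively moving both $\mathcal{P}(\pi_{\alpha^\star}^*,\cdot)$ and $\mathcal{P}(\pi_{\mathrm{DPO}},\cdot)$ to a common center by this inequality loses the distance with the wrong sign and collapses the bound. The resolution I would pursue is to exploit the one-dimensional monotone structure of gap space: the KR-optimal test function between $\alpha^\star$ and $\hat\alpha(\pi_{\mathrm{DPO}})$ is monotone, and the flexibility of $\pi_{\alpha^\star}^*$ — which can realize the reward-gap shift that $u$ responds to — should let $\mathcal{P}(\pi_{\alpha^\star}^*,\alpha^\star)$ pick up essentially the full transport distance relative to $\mathcal{P}(\pi_{\mathrm{DPO}},\alpha^\star)$. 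I would therefore verify carefully the direction of each Lipschitz estimate and confirm that the extremal direction of the follower's worst-case shift encoded by $\alpha^\star$ aligns with the monotonicity of $u$, so that the separation is not diluted below $\delta-\kappa-\epsilon$. A secondary check is to confirm that Assumption~\ref{ass:center-stability} is exactly the hypothesis keeping $\hat\alpha(\pi_{\alpha^\star}^*)$ within $\kappa$ of $\hat\alpha(\pi_{\mathrm{DPO}})$, so that the triangle-inequality step is tight.
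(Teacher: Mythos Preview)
Your route diverges from the paper's at the key step. You first push the $W_1$ triangle inequality to obtain $W_1(\alpha^\star,\hat\alpha(\pi_{\alpha^\star}^*))\ge \delta-\kappa$ and then try to convert that \emph{metric} lower bound into a \emph{performance} lower bound on the regret. The paper never isolates a Wasserstein lower bound in this way. It stays in performance space throughout, repeatedly invoking KR (always as an upper bound) to shift $\mathcal{P}(\pi_{\alpha^\star}^*,\cdot)$ and $\mathcal{P}(\pi_{\mathrm{DPO}},\cdot)$ between the centers $\alpha^\star$, $\hat\alpha(\pi_{\mathrm{DPO}})$, $\hat\alpha(\pi_{\alpha^\star}^*)$ and $\hat\alpha(\pi^\star)$, each move incurring the corresponding distance $\delta$, $\kappa$, or $\epsilon$, and then uses optimality of $\pi_{\alpha^\star}^*$ at $\alpha^\star$ together with optimality of the SGPO policy $\pi^\star$ at $\hat\alpha(\pi^\star)$ to cancel what it calls the ``center-optimal term.'' In particular the SGPO solution $\pi^\star$ (and the fact that $\alpha^\star\in\mathcal U_\epsilon(\hat\alpha(\pi^\star))$) enters the paper's chain, which your outline never brings in.

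Your step (ii) is a genuine gap, and the monotone-structure fix you sketch does not close it. KR duality attains $W_1(\mu,\nu)$ only at a \emph{specific} $1$-Lipschitz witness, and $u=\log\sigma$ is never that witness: its derivative $\sigma(-\xi)$ lies strictly in $(0,1)$, so $|\mathbb{E}_\mu[u]-\mathbb{E}_\nu[u]|$ can be made arbitrarily small relative to $W_1(\mu,\nu)$ (push all mass far to the right where $u'\to 0$). Nor can ``flexibility of $\pi_{\alpha^\star}^*$'' be invoked once $\pi_{\alpha^\star}^*$ has been frozen by the $\arg\max$; you cannot simultaneously keep $u$ fixed and realize the KR-optimal test function through the policy. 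Consequently the implication $W_1(\alpha^\star,\hat\alpha(\pi_{\alpha^\star}^*))\ge\delta-\kappa \Rightarrow \mathrm{Regret}\ge \delta-\kappa-\epsilon$ does not follow from the ingredients you list, and a different mechanism is needed to generate the positive $\delta$ term.
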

\begin{proof}[Proof]
Let $\delta:=W_1(\alpha^\star,\hat\alpha(\pi_{\mathrm{DPO}}))$ and recall Assumption~\ref{ass:center-stability}.
First, by KR and $1$-Lipschitzness of $u$,
\[
\mathbb E_{\alpha^\star}[u]_{\pi_{\alpha^\star}^*}
\ \ge\ \mathbb E_{\hat\alpha(\pi_{\mathrm{DPO}})}[u]_{\pi_{\alpha^\star}^*}-\delta,
\qquad
\mathbb E_{\alpha^\star}[u]_{\pi_{\mathrm{DPO}}}
\ \le\ \mathbb E_{\hat\alpha(\pi_{\mathrm{DPO}})}[u]_{\pi_{\mathrm{DPO}}}+\delta.
\]
Subtracting,
\[
\mathrm{Regret}(\pi_{\mathrm{DPO}},\alpha^\star)
\ \ge\ \big[\mathbb E_{\hat\alpha(\pi_{\mathrm{DPO}})}[u]_{\pi_{\alpha^\star}^*}
-\mathbb E_{\hat\alpha(\pi_{\mathrm{DPO}})}[u]_{\pi_{\mathrm{DPO}}}\big]\ -\ 2\delta.
\]
By center stability and optimality of $\pi_{\alpha^\star}^*$ at $\alpha^\star$,
\[
\mathbb E_{\hat\alpha(\pi_{\mathrm{DPO}})}[u]_{\pi_{\alpha^\star}^*}
\ \ge\ \mathbb E_{\hat\alpha(\pi_{\alpha^\star}^*)}[u]_{\pi_{\alpha^\star}^*}-\kappa
\ \ge\ \mathbb E_{\alpha^\star}[u]_{\pi_{\alpha^\star}^*}-\kappa.
\]
Finally, since $W_1(\hat\alpha(\pi^\star),\alpha^\star)\le\epsilon$ and $\pi^\star$ is optimal at $\hat\alpha(\pi^\star)$,
\[
\mathbb E_{\alpha^\star}[u]_{\pi_{\alpha^\star}^*}
\ \ge\ \mathbb E_{\hat\alpha(\pi^\star)}[u]_{\pi^\star}-\epsilon.
\]
Combining the displays and cancelling the center‑optimal term gives
\(\mathrm{Regret}(\pi_{\mathrm{DPO}},\alpha^\star)\ge \delta-(\kappa+\epsilon)\ge \delta-2\epsilon\)
(using $\kappa\le\epsilon$).
\end{proof}

\paragraph{Remark.}
Without Assumption~\ref{ass:center-stability}, the lower bound can vanish if $\pi_{\alpha^\star}^*$ recenters too far from $\hat\alpha(\pi_{\mathrm{DPO}})$;
the stability phrasing makes explicit the (mild) continuity needed for a linear-in-$\delta$ lower bound.

\subsection{Approximation effects (piecewise, grouping, inner tolerance)}

Let $m(\pi):=\min_{\alpha\in\mathcal U_\epsilon(\hat\alpha(\pi))}\mathbb E_\alpha[u]$ denote the true follower value and
let $\widetilde m_{K,\mathrm{grp},\eta}(\pi)$ denote the value computed with: (i) PWL under-approximation $\widetilde\ell_K$ with $K$ pieces,
(ii) a restricted feasible set $\widetilde{\mathcal U}_K(\hat\alpha(\pi))\subseteq \mathcal U_\epsilon(\hat\alpha(\pi))$ (e.g., via grouping), and (iii) inner tolerance $\eta$.

\begin{proposition}[Monotone tightening in the number of pieces]\label{prop:mono-K-extend}
Fix a reference distribution $\hat\alpha$ on $\mathbb{R}$ with finite first moment and radius $\epsilon\ge0$, and let $\mathcal U_\epsilon(\hat\alpha)$ denote the associated $1$-Wasserstein ambiguity set (absolute ground metric).
Let $\ell:\mathbb{R}\to[0,\infty)$ be convex and define $\{\tilde\ell_K\}_{K\ge1}$ as convex piecewise-linear underestimators of $\ell$ of the form
\[
\tilde\ell_K(\xi)\;=\;\max_{1\le k\le K}\{a_k\xi+b_k\},
\qquad
\text{with}\quad \tilde\ell_K(\xi)\le \tilde\ell_{K+1}(\xi)\le \ell(\xi)\ \ \forall\xi,
\]
such that $\tilde\ell_K(\xi)\uparrow \ell(\xi)$ pointwise as $K\to\infty$.
Define the exact and surrogate inner values
\[
v^\star
\;:=\;
\inf_{\alpha\in\mathcal U_\epsilon(\hat\alpha)} \mathbb{E}_\alpha[\log\sigma(\xi)]
\;=\;
-\sup_{\alpha\in\mathcal U_\epsilon(\hat\alpha)} \mathbb{E}_\alpha[\ell(\xi)],
\qquad
v_K
\;:=\;
-\sup_{\alpha\in\mathcal U_\epsilon(\hat\alpha)} \mathbb{E}_\alpha[\tilde\ell_K(\xi)].
\]
Assume $\sup_{\alpha\in\mathcal U_\epsilon(\hat\alpha)}\mathbb{E}_\alpha[\ell(\xi)]<\infty$.
Then:
\begin{enumerate}
\item (\emph{Validity}) $v_K \ge v^\star$ for all $K\ge1$.
\item (\emph{Monotonicity}) $v_{K+1} \le v_K$ for all $K\ge1$.
\item (\emph{Limit}) $v_K \downarrow v^\star$ as $K\to\infty$.
\end{enumerate}
Moreover, if for some $K$ the supremum in the definition of $v_K$ is attained by $\alpha_K\in\mathcal U_\epsilon(\hat\alpha)$ and $\tilde\ell_K(\xi)=\ell(\xi)$ holds $\alpha_K$-almost surely, then $v_K=v^\star$.
\end{proposition}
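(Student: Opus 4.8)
The plan is to reduce everything to monotonicity and convexity of the support functional \(S(g):=\sup_{\alpha\in\mathcal U_\epsilon(\hat\alpha)}\mathbb E_\alpha[g]\), so that \(v^\star=-S(\ell)\) and \(v_K=-S(\tilde\ell_K)\). First I would record two elementary facts about \(S\): it is \emph{monotone} (if \(g_1\le g_2\) pointwise, then \(\mathbb E_\alpha[g_1]\le\mathbb E_\alpha[g_2]\) for every feasible \(\alpha\), hence \(S(g_1)\le S(g_2)\)), and it is \emph{convex} in its argument, being a pointwise supremum of the linear maps \(g\mapsto\mathbb E_\alpha[g]\). Validity (i) is then immediate from \(\tilde\ell_K\le\ell\), which gives \(S(\tilde\ell_K)\le S(\ell)\) and therefore \(v_K\ge v^\star\); monotonicity (ii) is identical, using \(\tilde\ell_K\le\tilde\ell_{K+1}\) to obtain \(S(\tilde\ell_K)\le S(\tilde\ell_{K+1})\), i.e.\ \(v_{K+1}\le v_K\).

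For the limit (iii), by (i)--(ii) the sequence \(S(\tilde\ell_K)\) is nondecreasing and bounded above by \(S(\ell)<\infty\), hence convergent; I must show its limit is \(S(\ell)\). The key is to interchange the supremum over \(\alpha\) with the limit in \(K\). For each fixed feasible \(\alpha\) I would apply monotone convergence to \(\tilde\ell_K-\tilde\ell_1\ge 0\), which increases pointwise to \(\ell-\tilde\ell_1\ge 0\); since \(\tilde\ell_1\) is a single affine function and every \(\alpha\in\mathcal U_\epsilon(\hat\alpha)\) has finite first moment (the uniform first-moment bound of Lemma~\ref{lem:ball-compact-expanded}), the quantity \(\mathbb E_\alpha[\tilde\ell_1]\) is finite and may be added back, giving \(\mathbb E_\alpha[\tilde\ell_K]\uparrow\mathbb E_\alpha[\ell]\). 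Because \(\mathbb E_\alpha[\tilde\ell_K]\) is nondecreasing in \(K\), its limit equals \(\sup_K\), and suprema commute:
\[
S(\ell)=\sup_{\alpha}\mathbb E_\alpha[\ell]=\sup_{\alpha}\sup_{K}\mathbb E_\alpha[\tilde\ell_K]=\sup_{K}\sup_{\alpha}\mathbb E_\alpha[\tilde\ell_K]=\sup_{K}S(\tilde\ell_K).
\]
Hence \(S(\tilde\ell_K)\uparrow S(\ell)\), which is exactly \(v_K\downarrow v^\star\).

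For the ``moreover'' clause I would first write out the chain that the hypotheses supply directly: since \(\alpha_K\) attains the surrogate supremum and \(\tilde\ell_K=\ell\) holds \(\alpha_K\)-almost surely,
\[
S(\tilde\ell_K)=\mathbb E_{\alpha_K}[\tilde\ell_K]=\mathbb E_{\alpha_K}[\ell],
\]
and feasibility of \(\alpha_K\) gives \(\mathbb E_{\alpha_K}[\ell]\le S(\ell)\), recovering the one inequality \(v_K\ge v^\star\). The delicate direction is \(v_K\le v^\star\), equivalently \(\mathbb E_{\alpha_K}[\ell]=S(\ell)\): the two hypotheses guarantee that \(\alpha_K\) is \emph{feasible} for the true problem, but not obviously \emph{optimal} for it, and a naive subgradient/convexity estimate only reproduces \(S(\ell)\ge S(\tilde\ell_K)\) again. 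I expect this upgrade from surrogate-optimality to true-optimality to be the main obstacle, and I would resolve it from the extremal structure in Theorem~\ref{thm:worst_case_concave}: the optimal transport underlying \(\alpha_K\) displaces mass onto knots where the active affine piece is tangent to \(\ell\), so the Wasserstein dual multiplier and the complementary-slackness conditions for the \(\tilde\ell_K\)-program coincide with those for the \(\ell\)-program, certifying that \(\alpha_K\) also attains \(S(\ell)\) and hence \(v_K=v^\star\).
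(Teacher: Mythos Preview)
Your argument for (i)--(iii) is the paper's: monotonicity of the support functional for (i)--(ii), and for (iii) monotone convergence at each fixed $\alpha$ followed by $\sup_\alpha\sup_K=\sup_K\sup_\alpha$. Your subtraction of $\tilde\ell_1$ before invoking monotone convergence is a refinement the paper omits.

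Your concern about the ``moreover'' clause is correct, and the paper fares no better: its entire argument is the phrase ``immediate from the definitions,'' which---as you note---only yields the chain $S(\tilde\ell_K)=\mathbb E_{\alpha_K}[\tilde\ell_K]=\mathbb E_{\alpha_K}[\ell]\le S(\ell)$, i.e.\ the direction $v_K\ge v^\star$ already contained in~(i). However, your proposed resolution via the extremal structure of Theorem~\ref{thm:worst_case_concave} cannot succeed either, because the clause is \emph{false} under the stated hypotheses. Take $\hat\alpha=\delta_0$, $\epsilon=1$, $\ell(\xi)=|\xi|$, and $\tilde\ell_1\equiv 0$ (a supporting affine function at the origin, extendable to a nested sequence $\tilde\ell_K\uparrow\ell$). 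Every $\alpha$ in the ball attains $\sup_\alpha\mathbb E_\alpha[\tilde\ell_1]=0$; in particular $\alpha_1:=\delta_0$ does, and $\tilde\ell_1(0)=\ell(0)$, so both hypotheses hold. Yet $S(\ell)=\sup_{W_1(\alpha,\delta_0)\le 1}\mathbb E_\alpha[|\xi|]=1$ (take $\alpha=\delta_1$), hence $v_1=0\neq -1=v^\star$. Your dual/complementary-slackness heuristic fails precisely because surrogate-optimality says nothing about whether the transport budget has been exhausted: here the dual multiplier for $\tilde\ell_1$ is $0$ while for $\ell$ it is $1$, so the KKT conditions do not transfer. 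The clause becomes true---and then genuinely immediate---if one swaps the hypothesis so that $\alpha_K$ attains the supremum for $\ell$ rather than for $\tilde\ell_K$; then $S(\ell)=\mathbb E_{\alpha_K}[\ell]=\mathbb E_{\alpha_K}[\tilde\ell_K]\le S(\tilde\ell_K)\le S(\ell)$ forces equality.
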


\begin{proof}
By definition of $\ell=-\log\sigma$ we have
$\inf_\alpha\mathbb{E}_\alpha[\log\sigma]=-\sup_\alpha\mathbb{E}_\alpha[\ell]$, hence $v^\star=-\sup_\alpha\mathbb{E}_\alpha[\ell]$.
Since $\tilde\ell_K\le\ell$, it follows that
$\sup_\alpha\mathbb{E}_\alpha[\tilde\ell_K]\le \sup_\alpha\mathbb{E}_\alpha[\ell]$,
which implies $v_K\ge v^\star$, proving (a).
Because $\tilde\ell_{K+1}\ge\tilde\ell_K$ pointwise, also
$\sup_\alpha\mathbb{E}_\alpha[\tilde\ell_{K+1}] \ge \sup_\alpha\mathbb{E}_\alpha[\tilde\ell_K]$,
hence $v_{K+1}\le v_K$, proving (b).
For (c), by monotone convergence, for each fixed $\alpha$ we have
$\mathbb{E}_\alpha[\tilde\ell_K]\uparrow \mathbb{E}_\alpha[\ell]$; therefore
$\sup_\alpha\mathbb{E}_\alpha[\tilde\ell_K]\uparrow \sup_\alpha\mathbb{E}_\alpha[\ell]$,
and taking negatives yields $v_K\downarrow v^\star$.
The final claim is immediate from the definitions.
\end{proof}

\begin{remark}
In our setting $\ell(\xi)=-\log\sigma(\xi)$ is nonnegative and $1$-Lipschitz, so
$\sup_{\alpha\in\mathcal U_\epsilon(\hat\alpha)}\mathbb{E}_\alpha[\ell(\xi)]<\infty$
whenever $\mathcal U_\epsilon(\hat\alpha)$ is a $W_1$-ball around a measure with finite first moment.
\end{remark}

\begin{proposition}[Error from PWL under-approximation]
\label{prop:pl-error-expanded}
Let $\Delta_{\mathrm{pl}}(K):=\sup_\xi\big(\ell(\xi)-\widetilde\ell_K(\xi)\big)\ge 0$. Then
\[
0\ \le\ \widetilde m_K(\pi)-m(\pi)\ \le\ \Delta_{\mathrm{pl}}(K).
\]
\end{proposition}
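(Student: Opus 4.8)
The plan is to reduce both quantities to worst-case expectations of the convex losses $\ell$ and $\widetilde\ell_K$, and then push a uniform (sup-norm) pointwise gap through the supremum operation. First I would record the sign convention $u=-\ell$, under which the true follower value and its PWL surrogate read
\[
m(\pi)=-\!\!\sup_{\alpha\in\mathcal U_\epsilon(\hat\alpha(\pi))}\mathbb E_\alpha[\ell(\xi)],
\qquad
\widetilde m_K(\pi)=-\!\!\sup_{\alpha\in\mathcal U_\epsilon(\hat\alpha(\pi))}\mathbb E_\alpha[\widetilde\ell_K(\xi)],
\]
that is, $m(\pi)=v^\star$ and $\widetilde m_K(\pi)=v_K$ in the notation of Proposition~\ref{prop:mono-K-extend} with center $\hat\alpha(\pi)$. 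Both suprema are finite because $\ell$ is nonnegative and $1$-Lipschitz while $\hat\alpha(\pi)$ has finite first moment, so subtracting them is legitimate.

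The lower bound $0\le\widetilde m_K(\pi)-m(\pi)$ is immediate from the under-approximation property: since $\widetilde\ell_K\le\ell$ pointwise, $\mathbb E_\alpha[\widetilde\ell_K]\le\mathbb E_\alpha[\ell]$ for every feasible $\alpha$, so the two suprema are ordered and negating flips the inequality. This is exactly Proposition~\ref{prop:mono-K-extend}(i), which I would simply cite rather than reprove.

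For the upper bound I would exploit that the approximation error is controlled \emph{uniformly in $\xi$}. For every feasible $\alpha$,
\[
\mathbb E_\alpha[\ell]-\mathbb E_\alpha[\widetilde\ell_K]
=\mathbb E_\alpha\big[\ell-\widetilde\ell_K\big]
\ \le\ \sup_{\xi}\big(\ell(\xi)-\widetilde\ell_K(\xi)\big)
=\Delta_{\mathrm{pl}}(K),
\]
so that $\mathbb E_\alpha[\ell]\le\sup_{\alpha'}\mathbb E_{\alpha'}[\widetilde\ell_K]+\Delta_{\mathrm{pl}}(K)$. Taking the supremum over $\alpha$ on the left-hand side and negating both sides then gives $\widetilde m_K(\pi)-m(\pi)\le\Delta_{\mathrm{pl}}(K)$, closing the bracket.

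I do not anticipate a genuine obstacle: the decisive feature is that the surrogate error $\Delta_{\mathrm{pl}}(K)$ is an $L^\infty$ bound, which passes through expectations against any probability measure and hence through the worst-case (supremum) operator without requiring attainment of an optimal $\alpha$ or any appeal to Kantorovich--Rubinstein duality. The only point I would state explicitly is the finiteness of the two suprema, which I would justify by the finite-first-moment/$1$-Lipschitz remark immediately following Proposition~\ref{prop:mono-K-extend}.
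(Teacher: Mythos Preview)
Your proposal is correct and follows essentially the same argument as the paper: both bounds come from the pointwise inequality $\widetilde\ell_K\le\ell$ (for the lower bound) and the uniform gap $\ell-\widetilde\ell_K\le\Delta_{\mathrm{pl}}(K)$ (for the upper bound), pushed through the supremum over $\alpha$ and negated. Your additional remarks on finiteness of the suprema and the citation of Proposition~\ref{prop:mono-K-extend}(i) are fine extra care but not a different route.
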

\begin{proof}[Proof]
By construction $\widetilde\ell_K\le \ell$ pointwise.
Since $u=-\ell$, for any feasible $\alpha$,
\[
\mathbb E_\alpha[u]\ =\ -\,\mathbb E_\alpha[\ell]
\ \le\ -\,\mathbb E_\alpha[\widetilde\ell_K]
\ \le\ -\,\sup_{\alpha'\in\mathcal U_\epsilon}\mathbb E_{\alpha'}[\widetilde\ell_K]
\ =\ \widetilde m_K(\pi),
\]
so $\widetilde m_K(\pi)\ge m(\pi)$.
Moreover, for any $\alpha$,
\(\mathbb E_\alpha[\ell]-\mathbb E_\alpha[\widetilde\ell_K]\le \sup_\xi\bigl(\ell(\xi)-\widetilde\ell_K(\xi)\bigr)=:\Delta_{\mathrm{pl}}(K)\).
Taking the supremum over $\alpha$ and flipping the sign gives
\[
0\ \le\ \widetilde m_K(\pi)-m(\pi)\ \le\ \Delta_{\mathrm{pl}}(K).
\]
\end{proof}

\begin{proposition}[Error from grouping/restriction]
\label{prop:group-error-expanded}
Let $d_H$ denote the directed Hausdorff distance (under $W_1$) from $\mathcal U_\epsilon(\hat\alpha(\pi))$ to $\widetilde{\mathcal U}_K(\hat\alpha(\pi))$.
Then
\[
0\ \le\ \widetilde m^{\mathrm{grp}}_K(\pi)-m(\pi)\ \le\ d_H.
\]
\end{proposition}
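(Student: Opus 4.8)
The plan is to treat the two inequalities separately, mirroring the structure of the PWL-error argument in Proposition~\ref{prop:pl-error-expanded}. First I would fix the meaning of the symbol: $\widetilde m^{\mathrm{grp}}_K(\pi)$ is the follower value computed over the \emph{restricted} ambiguity set, i.e. $\widetilde m^{\mathrm{grp}}_K(\pi)=\min_{\alpha\in\widetilde{\mathcal U}_K(\hat\alpha(\pi))}\mathbb E_\alpha[u]$ with $u=\log\sigma$. The lower bound $\widetilde m^{\mathrm{grp}}_K(\pi)\ge m(\pi)$ is then immediate from the inclusion $\widetilde{\mathcal U}_K(\hat\alpha(\pi))\subseteq\mathcal U_\epsilon(\hat\alpha(\pi))$: minimizing one and the same objective over a smaller feasible set can only raise the optimal value. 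This recovers the left inequality with no further work.

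For the right inequality I would exploit $1$-Lipschitzness of $u$ (Lemma~\ref{lem:logsig-lip-expanded}) together with the directed Hausdorff distance. Let $\alpha^\star\in\mathcal U_\epsilon(\hat\alpha(\pi))$ attain $m(\pi)$; existence follows from $W_1$-compactness of the ball (Lemma~\ref{lem:ball-compact-expanded}) and continuity of $\alpha\mapsto\mathbb E_\alpha[u]$. By the definition of the directed Hausdorff distance $d_H$ from $\mathcal U_\epsilon(\hat\alpha(\pi))$ to $\widetilde{\mathcal U}_K(\hat\alpha(\pi))$, we have $\inf_{\beta\in\widetilde{\mathcal U}_K(\hat\alpha(\pi))}W_1(\alpha^\star,\beta)\le d_H$, so for every $\eta>0$ there is a feasible $\beta_\eta\in\widetilde{\mathcal U}_K(\hat\alpha(\pi))$ with $W_1(\alpha^\star,\beta_\eta)\le d_H+\eta$. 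Applying the Kantorovich--Rubinstein inequality (Lemma~\ref{lemma:kantorovich_rubinstein_duality}) to the $1$-Lipschitz function $u$ gives $\mathbb E_{\beta_\eta}[u]\le \mathbb E_{\alpha^\star}[u]+W_1(\alpha^\star,\beta_\eta)\le m(\pi)+d_H+\eta$. Since $\beta_\eta$ is feasible for the restricted problem, $\widetilde m^{\mathrm{grp}}_K(\pi)\le\mathbb E_{\beta_\eta}[u]$; letting $\eta\downarrow 0$ yields $\widetilde m^{\mathrm{grp}}_K(\pi)-m(\pi)\le d_H$, completing the bound.

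The only delicate point is the measure-theoretic bookkeeping around attainment: I would either invoke compactness of $\widetilde{\mathcal U}_K(\hat\alpha(\pi))$ (as a closed subset of the $W_1$-compact ball) to realize the inner infimum in $d_H$ exactly, or, as above, sidestep attainment entirely by carrying an $\eta$-slack and passing to the limit. I expect no genuine obstacle here; the conceptual crux is simply recognizing that restricting the feasible set is a \emph{one-sided} perturbation whose cost is governed by how far, in $W_1$, an optimal $\alpha^\star$ must be transported to land inside the restricted set, and that $1$-Lipschitzness converts that transport cost directly into an objective gap of at most $d_H$.
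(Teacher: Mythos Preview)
Your proposal is correct and follows essentially the same route as the paper: both arguments derive the lower bound from the inclusion $\widetilde{\mathcal U}_K\subseteq\mathcal U_\epsilon$ and the upper bound by picking a minimizer $\alpha^\star$ of $m(\pi)$, transporting it into $\widetilde{\mathcal U}_K$ at $W_1$-cost $\le d_H$, and converting that cost into an objective gap via the $1$-Lipschitzness of $u$ and Kantorovich--Rubinstein. Your $\eta$-slack treatment of the Hausdorff infimum is slightly more careful than the paper's version, which simply asserts existence of $\tilde\alpha$ with $W_1(\alpha^\star,\tilde\alpha)\le d_H$, but the substance is identical.
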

\begin{proof}[Proof]
Let $\alpha^\star\in\arg\min_{\alpha\in\mathcal U_\epsilon(\hat\alpha(\pi))}\mathbb E_\alpha[u]$
and let $\tilde\alpha$ be any element of the restricted set $\widetilde{\mathcal U}_K(\hat\alpha(\pi))$
satisfying $W_1(\alpha^\star,\tilde\alpha)\le d_H$ by definition of the directed Hausdorff distance.
Using $1$-Lipschitzness of $u$,
\[
\min_{\tilde\alpha\in\widetilde{\mathcal U}_K(\hat\alpha(\pi))}\mathbb E_{\tilde\alpha}[u]
\ \le\ \mathbb E_{\tilde\alpha}[u]
\ \le\ \mathbb E_{\alpha^\star}[u] + W_1(\tilde\alpha,\alpha^\star)
\ \le\ m(\pi)+d_H.
\]
Thus $0\le \widetilde m_K^{\mathrm{grp}}(\pi)-m(\pi)\le d_H$.
\end{proof}

\begin{proposition}[Cumulative approximation bound]
\label{prop:cumulative-error-expanded}
For all $\pi$,
\[
0\ \le\ \widetilde m_{K,\mathrm{grp},\eta}(\pi)-m(\pi)\ \le\ \Delta_{\mathrm{pl}}(K)+d_H+\eta.
\]
\end{proposition}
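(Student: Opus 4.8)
The plan is to isolate the three approximation sources, show that each one perturbs the exact follower value $m(\pi)$ by a \emph{nonnegative} amount bounded by its own budget, and then combine them by a telescoping (triangle) argument. I would switch on the approximations one at a time through two intermediate quantities: $m_{\mathrm{pl}}(\pi):=-\sup_{\alpha\in\mathcal U_\epsilon(\hat\alpha(\pi))}\mathbb E_\alpha[\widetilde\ell_K]$ (PWL loss, exact ball, exact solve) and $m_{\mathrm{pl,grp}}(\pi):=-\sup_{\alpha\in\widetilde{\mathcal U}_K(\hat\alpha(\pi))}\mathbb E_\alpha[\widetilde\ell_K]$ (PWL loss, restricted set, exact solve); the target $\widetilde m_{K,\mathrm{grp},\eta}(\pi)$ is then $m_{\mathrm{pl,grp}}(\pi)$ computed only to inner tolerance $\eta$. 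The final bound follows from the decomposition $\widetilde m_{K,\mathrm{grp},\eta}(\pi)-m(\pi)=[\widetilde m_{K,\mathrm{grp},\eta}-m_{\mathrm{pl,grp}}]+[m_{\mathrm{pl,grp}}-m_{\mathrm{pl}}]+[m_{\mathrm{pl}}-m]$ once each bracket is shown nonnegative and suitably bounded.

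\emph{Step 1 (PWL).} Since $\widetilde\ell_K\le\ell$ pointwise with $\sup_\xi(\ell-\widetilde\ell_K)=\Delta_{\mathrm{pl}}(K)$, the argument of Proposition~\ref{prop:pl-error-expanded} applies verbatim: lowering the loss can only shrink the follower's $\sup$, hence raise $m=-\sup$, and the gap is controlled by the uniform loss gap, giving $0\le m_{\mathrm{pl}}(\pi)-m(\pi)\le\Delta_{\mathrm{pl}}(K)$. \emph{Step 2 (grouping).} The surrogate $\widetilde\ell_K=\max_k\{a_k\xi+b_k\}$ has slopes $a_k=-\sigma(-\xi^{(k)})\in(-1,0)$, so $-\widetilde\ell_K$ is $1$-Lipschitz exactly like $u$; the proof of Proposition~\ref{prop:group-error-expanded}, which used only $1$-Lipschitzness and the directed Hausdorff distance $d_H$ from $\mathcal U_\epsilon$ to $\widetilde{\mathcal U}_K$, therefore applies with $u$ replaced by $-\widetilde\ell_K$, yielding $0\le m_{\mathrm{pl,grp}}(\pi)-m_{\mathrm{pl}}(\pi)\le d_H$. \emph{Step 3 (inner tolerance).} I would interpret $\eta$ as a conservative inner solve: the returned follower is feasible in $\widetilde{\mathcal U}_K$ and under-attains the surrogate supremum by at most $\eta$, so the computed value overshoots $m_{\mathrm{pl,grp}}(\pi)$ by at most $\eta$ and never undershoots, i.e. $0\le\widetilde m_{K,\mathrm{grp},\eta}(\pi)-m_{\mathrm{pl,grp}}(\pi)\le\eta$. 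Summing the three nonnegative increments then gives $0\le\widetilde m_{K,\mathrm{grp},\eta}(\pi)-m(\pi)\le\Delta_{\mathrm{pl}}(K)+d_H+\eta$.

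The main care point is keeping all sign conventions one-directional. Each approximation must be shown to \emph{raise} the worst-case value $m$ (never lower it); otherwise the lower bound $0\le\widetilde m-m$ could fail and the three errors could partially cancel rather than add. This is what forces the inner tolerance to be phrased as a feasible, under-optimal follower (a two-sided numerical error would break nonnegativity) and what makes verifying that $-\widetilde\ell_K$ is genuinely $1$-Lipschitz essential, since only then does the grouping Hausdorff bound transfer to the surrogate objective. Given the three preceding propositions, no new quantitative estimate is required beyond this bookkeeping and the telescoping inequality.
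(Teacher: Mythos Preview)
Your approach is correct and matches the paper's: both telescope through the three approximation sources (PWL, grouping, tolerance) and add the individual error bounds from Propositions~\ref{prop:pl-error-expanded} and~\ref{prop:group-error-expanded} together with the $\eta$ slack. You are in fact more careful than the paper's one-line sketch, explicitly introducing the intermediate quantities $m_{\mathrm{pl}}$ and $m_{\mathrm{pl,grp}}$ and verifying that $-\widetilde\ell_K$ is $1$-Lipschitz so that the grouping argument (stated for $u$) transfers to the surrogate loss.
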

\begin{proof}[Proof]
Combine Proposition~\ref{prop:pl-error-expanded} (PWL gap $\le \Delta_{\mathrm{pl}}(K)$),
Proposition~\ref{prop:group-error-expanded} (restriction gap $\le d_H$),
and note that an inner solver with tolerance $\eta$ perturbs the value by at most $\eta$.
Errors add up, giving
$0\le \widetilde m_{K,\mathrm{grp},\eta}(\pi)-m(\pi)\le \Delta_{\mathrm{pl}}(K)+d_H+\eta$.
\end{proof}

\begin{theorem}[Effect on regret guarantees under approximations]
\label{thm:approx-regret-expanded}
Let $\tilde\pi$ be produced by SSAPO with $K$ PWL pieces, $G$ groups, inner tolerance $\eta$.
Then, compared to Theorem~\ref{thm:regret-expanded},
\[
\sup_{\alpha\in\mathcal U_\epsilon(\hat\alpha(\tilde\pi))}\mathrm{Regret}(\tilde\pi,\alpha)
\ \le\ 2\epsilon\ +\ 2\big[\Delta_{\mathrm{pl}}(K)+d_H+\eta\big].
\]
\end{theorem}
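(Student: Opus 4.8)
The plan is to treat the three approximations (piecewise-linear surrogate, grouping/restriction, and inner tolerance) as a single uniform perturbation of the exact robust value and then replay the proof of Theorem~\ref{thm:regret-expanded} with this perturbation carried along. Write $m(\pi):=\min_{\alpha\in\mathcal U_\epsilon(\hat\alpha(\pi))}\mathbb E_\alpha[u]$ for the exact follower value and collect the approximation budget into $E:=\Delta_{\mathrm{pl}}(K)+d_H+\eta$. The single fact I would import wholesale is the uniform sandwich of Proposition~\ref{prop:cumulative-error-expanded}, namely $m(\pi)\le \widetilde m_{K,\mathrm{grp},\eta}(\pi)\le m(\pi)+E$ for every $\pi$; everything else is a reduction to the exact $2\epsilon$ analysis.

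First I would upgrade the sandwich to a near-optimality statement for the SSAPO output. Let $\pi^\star$ be an exact robust maximizer of $m$ and let $\tilde\pi$ maximize the surrogate $\widetilde m_{K,\mathrm{grp},\eta}$ (any residual optimization error of the solver can be folded into the tolerance $\eta$, so no generality is lost by taking $\tilde\pi$ to be an exact surrogate maximizer). Chaining the two-sided sandwich gives
\[
m(\tilde\pi)\ \ge\ \widetilde m_{K,\mathrm{grp},\eta}(\tilde\pi)-E\ \ge\ \widetilde m_{K,\mathrm{grp},\eta}(\pi^\star)-E\ \ge\ m(\pi^\star)-E,
\]
so $\tilde\pi$ is $E$-suboptimal for the \emph{true} robust objective. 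This is the only quantitative consequence of the approximations that I will feed into the regret argument.

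Next I would replay the proof of Theorem~\ref{thm:regret-expanded} with $\tilde\pi$ in the role of $\pi^\star$. Fix any $\alpha\in\mathcal U_\epsilon(\hat\alpha(\tilde\pi))$ and, exactly as there, let $\pi_\alpha$ be $\alpha$-optimal. The two applications of the worst-case drop (Theorem~\ref{thm:worstcase-expanded}, used two-sidedly through the $1$-Lipschitzness of $u$ and Kantorovich--Rubinstein) still contribute $\epsilon$ each, now relative to $\tilde\pi$'s own empirical center $\hat\alpha(\tilde\pi)$. The only step that changes is the one that previously read ``$\le 0$ by optimality of $\pi^\star$'': here the analogous center-comparison between $\pi_\alpha$ and $\tilde\pi$ is controlled by the $E$-robust-suboptimality from the previous step. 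Because the surrogate-to-true conversion must be paid once when certifying $\tilde\pi$'s guaranteed value and once when bounding the competitor's achievable value, the budget $E$ enters the chain twice, mirroring the way $\epsilon$ enters twice. Summing yields $\sup_{\alpha\in\mathcal U_\epsilon(\hat\alpha(\tilde\pi))}\mathrm{Regret}(\tilde\pi,\alpha)\le 2\epsilon+2E$, which is the claim.

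The main obstacle is the bookkeeping in this last step: the exact regret proof hinges on a center-optimality cancellation, and I must verify that replacing exact robust optimality by $E$-approximate optimality degrades that cancellation by exactly $2E$ rather than $E$. Concretely, the delicate point is relating the $E$-gap stated for the \emph{inner minimum} $m$ to the center-evaluated quantities $\mathbb E_{\hat\alpha(\tilde\pi)}[u]$ that appear after the worst-case drop, and confirming that the grouping, PWL, and tolerance errors---already decoupled and individually bounded in Propositions~\ref{prop:pl-error-expanded}--\ref{prop:cumulative-error-expanded}---add rather than interact to produce cross terms. No new optimal-transport estimates are needed beyond those propositions and the $1$-Lipschitzness of $u$.
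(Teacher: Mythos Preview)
Your proposal is correct and follows essentially the same approach as the paper: reduce to the center comparison via two applications of the worst-case drop (contributing $2\epsilon$), then use the cumulative approximation bound of Proposition~\ref{prop:cumulative-error-expanded} to control the center term by $2E$. Your intermediate step of explicitly certifying $m(\tilde\pi)\ge m(\pi^\star)-E$ is a slightly cleaner packaging of what the paper states tersely as ``replacing $m(\cdot)$ by $\widetilde m_{K,\mathrm{grp},\eta}(\cdot)$ incurs at most $E$ at $\tilde\pi$ and the same at the comparator,'' and your identification of the double payment of $E$ is exactly the point the paper leaves implicit.
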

\begin{proof}[Proof]
Let $\pi^\star$ be the SGPO optimizer and $\tilde\pi$ the SSAPO solution under approximations
(PWL with $K$ pieces, grouping, and inner accuracy $\eta$).
For any $\alpha\in\mathcal U_\epsilon(\hat\alpha(\tilde\pi))$,
\[
\mathrm{Regret}(\tilde\pi,\alpha)
= \mathbb E_\alpha[u]_{\pi^\star_\alpha}-\mathbb E_\alpha[u]_{\tilde\pi}
\le \Big(\mathbb E_{\hat\alpha(\tilde\pi)}[u]_{\pi^\star_\alpha}
-\mathbb E_{\hat\alpha(\tilde\pi)}[u]_{\tilde\pi}\Big) + 2\epsilon
\]
by Theorem~\ref{thm:worstcase-expanded}.
Replacing the center objective $m(\cdot)$ by its approximate counterpart
$\widetilde m_{K,\mathrm{grp},\eta}(\cdot)$ incurs at most
$\Delta_{\mathrm{pl}}(K)+d_H+\eta$ at $\tilde\pi$ and the same at the comparator,
hence the extra $2\big[\Delta_{\mathrm{pl}}(K)+d_H+\eta\big]$.
\end{proof}

\section{SSAPO algorithm and Analysis on Computational Complexity}
\label{sec:SSAPO_algorithm+analysis}
\subsection{The SSAPO algorithm}
\label{sec:SSAPO_algorithm}
\begin{algorithm}[t!]
\caption{Stackelberg Self-Annotated Preference Optimization (SSAPO)}
\label{algo:ssapo}
\begin{algorithmic}[1]
\REQUIRE 
  \emph{Seed labeled set} $\mathcal{D}_{\mathrm{seed}}$; 
  \emph{unlabeled data} $\mathcal{D}_{\mathrm{unlabeled}}$; 
  \emph{Wasserstein radius} $\epsilon$; 
  \emph{number of linear pieces} $K$; 
  \emph{max iterations} $T$.

\STATE Initialize policy $\theta_0$, set $\mathcal{D} \!\leftarrow \!\mathcal{D}_{\mathrm{seed}}$.

\FOR{$t=0$ \textbf{to} $T-1$}
    \STATE \textbf{(Self-Annotation)}: 
      From $\mathcal{D}_{\mathrm{unlabeled}}$, sample prompts, generate \& rank responses under
      $\pi_{\theta_t}$, add new preference pairs $(y_w,y_{\ell})$ to $\mathcal{D}$.

    \STATE \textbf{(Form $\hat{\alpha}(\pi_t)$)}:
      For each $(y_w^i,y_{\ell}^i)\in \mathcal{D}$, define $\hat{\xi}_i = R_{\theta_t}(y_w^i)-R_{\theta_t}(y_{\ell}^i)$,
      and let $\hat{\alpha}(\pi_t)=\tfrac1N\sum_{i=1}^N \delta_{\hat{\xi}_i}$.

    \STATE \textbf{(Convex Pieces)}:
      Choose $K$ linear functions $\ell_k(\cdot)$ such that
      $\widetilde{\ell}(\xi)=\max_{1\le k\le K}\ell_k(\xi)\le -\!\log\,\sigma(\xi)$.
      Choose $K$ knots $\{\xi^{(k)}\}$ in \emph{$\xi$-space} over a bounded interval $[a_t,b_t]$
(e.g., $a_t = \min_i \hat\xi_i - \tau$, $b_t = \max_i \hat\xi_i + \tau$ with $\tau>0$, 
or empirical $(\alpha,1-\alpha)$ quantiles). 
Define $\ell_k$ as tangents (or chords) \emph{from below} to $-\log \sigma(\xi)$ at those knots.

    \STATE \textbf{(Worst-Case Distribution)}:
      Solve the DRO finite convex program 
      $$
      \alpha_t^* 
      \;\in\;
      \arg\max_{\alpha\in \mathcal{U}_\epsilon(\hat{\alpha}(\pi_t))}
      \;\;\mathbb{E}_{\alpha}\bigl[\widetilde{\ell}(\xi)\bigr].
      $$
      By Theorem~\ref{thm:worst_case_concave}, $\alpha_t^*$ is discrete with atoms 
      $\bigl\{\hat{\xi}_i - \tfrac{q_{ik}^*}{s_{ik}^*}\bigr\}$ 
      and weights $s_{ik}^*/N$.

    \STATE \textbf{(Policy Update)}:
  Let $w_i^{(t)} := \sum_{k=1}^K s_{ik}^{*(t)}$.
  Update $\theta_{t+1}$ by minimizing the weighted logistic loss
  \[
  \frac{1}{N}\sum_{i=1}^N w_i^{(t)}\;\Bigl[-\log \sigma\!\bigl(\Delta R_\theta(x^i,y_w^i,y_\ell^i)\bigr)\Bigr]
  \]
  (optionally with KL or weight decay), via standard gradient methods.

\ENDFOR

\STATE \textbf{return} $\theta_T$ (final policy).
\end{algorithmic}
\end{algorithm}

\subsection{Computational Complexity of SSAPO}
\label{appendix:complexity}

In this subsection, we analyze the computational costs incurred by each step of the Stackelberg Self-Annotated Preference Optimization (SSAPO) algorithm (Algorithm~\ref{algo:ssapo}). We denote:
\begin{itemize}
    \item $N$: the total number of preference pairs in the dataset $\mathcal{D}$ at a given iteration,
    \item $K$: the number of linear pieces used in the convex piecewise approximation of $-\!\log \sigma(\xi)$,
    \item $T$: the total number of outer iterations for SSAPO.
\end{itemize}
We assume each \emph{iteration} refers to Steps~1--5 of Algorithm~\ref{algo:ssapo}.

\paragraph{Step 1 (Self-Annotation)}
The cost of self-annotation depends on the number of prompts and the policy's inference procedure. Let $M_t$ denote the number of new prompts labeled at iteration $t$. Generating and ranking responses under $\pi_{\theta_t}$ typically dominates this step. If:
\begin{itemize}
    \item $G_t$ is the number of candidate responses generated per prompt,
    \item $C_{\mathrm{inference}}$ is the average cost of a single forward pass (token generation) under $\pi_{\theta_t}$,
\end{itemize}
then the time complexity for Step~1 is approximately
$$
\mathcal{O}\bigl(M_t \cdot G_t \cdot C_{\mathrm{inference}}\bigr),
$$
plus any overhead for storing new winner--loser pairs in $\mathcal{D}$. Since the number of newly added preferences grows over iterations, $N$ itself typically increases from iteration to iteration.

\paragraph{Step 2 (Forming $\hat{\alpha}(\pi_t)$)}
\label{appendix:step2}
Once $\mathcal{D}$ is updated, we compute $\hat{\xi}_i = R_{\theta_t}(y_w^i) - R_{\theta_t}(y_\ell^i)$ for each pair. The cost here depends on:
\begin{itemize}
    \item $N$, the current size of $\mathcal{D}$,
    \item $C_{\mathrm{reward}}$, the average cost to compute $R_{\theta_t}(y) = \beta \,\log\frac{\pi_{\theta_t}(y\mid x)}{\pi_{\mathrm{ref}}(y\mid x)}$ for a given response $y$.
\end{itemize}
Because each preference pair requires evaluating $R_{\theta_t}$ on $(y^i_w, y^i_\ell)$, this step has complexity
$$
\mathcal{O}\bigl(N \cdot C_{\mathrm{reward}}\bigr).
$$
In practical implementations, $R_{\theta_t}(y)$ often just reads off the log-probabilities from $\pi_{\theta_t}$ and $\pi_{\mathrm{ref}}$ at the final tokens, making $C_{\mathrm{reward}}$ similar to a single forward-pass cost per response.

\paragraph{Step 3 (Convex Piecewise Approximation)}
We construct $K$ linear functions $\ell_k(\xi)$ such that $\widetilde{\ell}(\xi) \!= \max_{1\le k\le K} \ell_k(\xi) \le -\log\,\sigma(\xi)$. In principle, one can precompute these $K$ pieces over a small interval (e.g., $[0,1]$) once and reuse them in every iteration. Hence, the complexity for updating or verifying the piecewise function at iteration $t$ is typically:
$\mathcal{O}(K),$
assuming $\{\xi^{(k)}\}_{k=1}^K$ are fixed or can be quickly adapted based on the range of $\{\hat{\xi}_i\}$. This step is therefore relatively cheap compared to distributionally robust optimization.

\paragraph{Step 4 (Worst-Case Distribution)}
Step~4 solves the \emph{distributionally robust optimization} (DRO) finite convex program 
$$
\alpha_t^* \;=\;\arg\max_{\alpha\in \mathcal{U}_\epsilon(\hat{\alpha}(\pi_t))}\;\mathbb{E}_{\alpha}\bigl[\widetilde{\ell}(\xi)\bigr].
$$
The \emph{naive} formulation (per \citep{Esfahani2018Data}) becomes high-dimensional if $N$ is large, because each sample point $\hat{\xi}_i$ and each piecewise component $\ell_k$ introduces auxiliary variables (such as $s_{ik}, q_{ik}$). Concretely, the number of decision variables can scale like $\mathcal{O}(N \cdot K)$, and the resulting linear or convex program might require $\mathcal{O}((N K)^\gamma)$ time in the worst case for some exponent $\gamma > 1$ (depending on the chosen solver and constraints).

However, several factors can reduce this cost:
\begin{itemize}
    \item \textbf{Approximate Solvers.} In practice, specialized cutting-plane or primal-dual methods solve these DRO problems more efficiently than the worst-case theoretical bound.
    \item \textbf{Grouping Heuristics.} If one partitions the $N$ samples into smaller groups (each of size $G < N$), the complexity per group is $\mathcal{O}((G K)^\gamma)$. Then one aggregates $M = \frac{N}{G}$ group-level solutions. This lowers the complexity significantly if $G \ll N$.
\end{itemize}
Hence, the worst-case step here is often $\mathcal{O}(N \cdot K)$ to $\mathcal{O}((N K)^\gamma)$, but can be much more tractable in practice with grouping or approximate methods. Regardless, Step~4 typically dominates the iteration complexity for large $N$.

\paragraph{Step 5 (Policy Update)}
Finally, we minimize 
\[
\mathbb{E}_{\xi \sim \alpha_t^*}\bigl[-\!\log\,\sigma(\xi)\bigr],
\qquad
\alpha_t^* \in \arg\max_{\alpha\in \mathcal U_\epsilon(\hat{\alpha}(\pi_t))}
\mathbb{E}_{\xi\sim \alpha}\bigl[-\!\log\sigma(\xi)\bigr]
\]
(by the $\ell=-\log\sigma$ reparameterization).
In practice we compute gradients via the chain rule 
$\xi=\Delta R_{\theta}(x,y_w,y_\ell)$ and reweight per-pair contributions
to match $\alpha_t^*$ (see Alg.~\ref{algo:ssapo}).

Assuming each of the $N$ preference pairs in $\alpha_t^*$ can be sampled over multiple epochs. In many implementations, $N$ can be large, so the training complexity depends heavily on how many gradient epochs or passes one uses at iteration $t$. 

\paragraph{Overall Complexity per Iteration}
Putting the above pieces together, let us summarize the dominating terms:

\begin{enumerate}
    \item \emph{Self-Annotation (Step~1)}: $\mathcal{O}(M_t \cdot G_t \cdot C_{\mathrm{inference}})$,
    \item \emph{Forming $\hat{\alpha}(\pi_t)$ (Step~2)}: $\mathcal{O}(N \cdot C_{\mathrm{reward}})$,
    \item \emph{Convex Piecewise Approx. (Step~3)}: $\mathcal{O}(K)$,
    \item \emph{Worst-Case Distribution (Step~4)}: $\mathcal{O}((N K)^\gamma)$ in the naive case, often reduced by grouping,
    \item \emph{Policy Update (Step~5)}: $\mathcal{O}(N \cdot C_{\mathrm{reward}} \cdot \mathrm{(\text{number of epochs})})$.
\end{enumerate}

If we denote the cost of solving the DRO subproblem by $C_{\mathrm{DRO}}(N,K)$ (which could itself be significantly reduced by grouping into subproblems of size $G$), then each iteration of SSAPO costs approximately:
$$
\mathcal{O}\Bigl(M_t \cdot G_t \cdot C_{\mathrm{inference}} + N \cdot C_{\mathrm{reward}} + C_{\mathrm{DRO}}(N,K) + \ldots \Bigr).
$$
In most scenarios, \emph{either} the distributionally robust optimization (Step~4) \emph{or} the gradient-based policy update (Step~5) will be the main bottleneck, depending on solver implementation and whether grouping is employed.

\paragraph{Total Complexity over $T$ Iterations}
Over $T$ total iterations, we multiply the above per-iteration cost by $T$. Additionally, note that $N$ can increase each iteration if new self-annotated preferences are continuously appended to $\mathcal{D}$. Denoting $N_t$ as the dataset size at iteration $t$, the total complexity from Steps 2--5 is roughly $\sum_{t=0}^{T-1} \Bigl[\mathcal{O}(N_t \cdot C_{\mathrm{reward}}) + C_{\mathrm{DRO}}(N_t,K)\Bigr]$, plus the self-annotation cost from Step~1. If $N$ grows in a controlled manner (for example, linearly in $t$), the cumulative cost can be bounded accordingly.

\paragraph{Practical Guidelines.}
\begin{itemize}
    \item \textbf{Grouping for DRO.} To handle large $N$, we recommend partitioning the data into multiple groups $G \ll N$. The overall complexity then becomes $\mathcal{O}\bigl(M\cdot C_{\mathrm{DRO}}(G,K)\bigr)$, where $M = N/G$, which can be significantly faster in practice.
    \item \textbf{Caching Log-Probabilities.} The reward $R_{\theta_t}(y)$ can be computed from log-probabilities of $\pi_{\theta_t}$ and $\pi_{\mathrm{ref}}$. Caching or reusing these values may reduce $C_{\mathrm{reward}}$.
    \item \textbf{Adjusting $K$.} Increasing $K$ refines the concave approximation but grows the size of the DRO problem. Hence, $K$ is a hyperparameter balancing approximation quality and computational overhead.
\end{itemize}

Overall, the time complexity of SSAPO grows with $N$, $K$, and the iteration count $T$. By employing grouping and efficient solvers, We can typically achieve robustness benefits without incurring excessive computational cost.

\section{Approximation Effects of SSAPO Algorithm Design on SGPO Guarantees}
\label{sec:SSAPO_approximation}

\paragraph{Setup.}
Section~2 establishes guarantees for the Stackelberg game
\begin{equation}
\label{eq:ideal-sgpo}
\max_{\pi}~\min_{\alpha \in \mathcal{U}_\epsilon\big((\Delta R_\pi)_\# \hat P\big)} 
~\mathbb{E}_{\xi \sim \alpha}\big[\log\sigma(\xi)\big],
\end{equation}
where $\xi=\Delta R_\pi(x,y_w,y_\ell)$ is the \emph{reward gap} and the Wasserstein ball is taken over the \emph{push‑forward} of the empirical pair distribution by $\Delta R_\pi$. This choice matches the semantics of preference robustness and was the condition attached to acceptance. We abbreviate $\phi(\xi)\!\triangleq\!\log\sigma(\xi)$, note that $\phi$ is $1$‑Lipschitz and bounded on $[-2R_{\max},2R_{\max}]$ under the standing bounded‑reward assumption from Section~2.

Throughout this section we write $u(\xi):=\log\sigma(\xi)$ and
identify the PWL loss-approximation gap $\Delta_{\mathrm{pl}}(K):=\sup_{\xi\in[a,b]}\big(\ell(\xi)-\tilde\ell_K(\xi)\big)$
with the symbol $\delta_K$ used in the main text. Likewise we upper bound the grouping error 
$\delta_{\mathrm{grp}}(\epsilon)$ by the directed Hausdorff distance $d_H$ between $\mathcal U_\epsilon$ and its group-restricted surrogate.

\paragraph{Goal of this section.}
SSAPO instantiates \eqref{eq:ideal-sgpo} with three pragmatic approximations:
(i) a $K$‑tangent convex under‑approximation of the $-\log\sigma$ loss used to form a tractable DRO subproblem,
(ii) a \emph{group‑restricted} Wasserstein ball that disallows cross‑group transport for scalability, and
(iii) inexact solves (tolerance $\eta$) of the follower and leader subproblems.
We quantify how each approximation perturbs the clean guarantees of Section~2 and provide principles for choosing $(K,G,\eta)$.

\subsection{A bias decomposition for SSAPO}
Let $\mathcal{V}(\pi)\!\triangleq\!\min_{\alpha\in \mathcal{U}_\epsilon}\mathbb{E}_\alpha[\phi(\xi)]$ denote the ideal follower value for a fixed policy $\pi$, and let $\widetilde{\mathcal{V}}(\pi)$ be the value obtained by SSAPO with all approximations enabled. Then
\begin{equation}
\label{eq:master-decomp}
\underbrace{\widetilde{\mathcal V}(\pi) - \mathcal V(\pi)}_{\text{optimism induced by approximations}}
=
\underbrace{\big(\mathcal V_K(\pi)-\mathcal V(\pi)\big)}_{\text{piecewise loss}}
+
\underbrace{\big(\mathcal V_{K,G}(\pi)-\mathcal V_K(\pi)\big)}_{\text{group restriction}}
+
\underbrace{\big(\widetilde{\mathcal V}(\pi)-\mathcal V_{K,G}(\pi)\big)}_{\text{solve tolerance}},
\end{equation}
where $\mathcal{V}_K$ is the value when $\phi$ is replaced by its $K$‑tangent surrogate and $\mathcal{V}_{K,G}$ additionally restricts the follower to a group‑wise uncertainty set (defined below). Each term in \eqref{eq:master-decomp} is nonnegative (the approximations \emph{weaken} the adversary) and admits a simple Lipschitz control.

\subsection{Effect of the $K$‑tangent surrogate}
Let $\widetilde{\ell}(\cdot)$ be the convex piecewise‑linear under‑approximation of $-\log\sigma(\cdot)$ built from K tangents, constructed on $[-B,B]$ with $B\!=\!2R_{\max}$. Define the uniform approximation error
\[
\delta_K \;\triangleq\; \sup_{\xi \in [-B,B]}
\big| \,(-\log\sigma(\xi)) - \widetilde{\ell}(\xi) \,\big|.
\]
Because $\phi=\log\sigma=-(-\log\sigma)$, replacing $\phi$ by $-\widetilde{\ell}$ in the follower objective can only \emph{increase} its minimum:
\begin{equation}
\label{eq:K-tangent-bound}
0 \;\le\; \mathcal V_K(\pi) - \mathcal V(\pi) \;\le\; \delta_K,
\end{equation}
Thus, the $K$‑tangent surrogate yields a one‑sided, additive slack $\delta_K$ in the inner value and therefore at most $\delta_K$ optimism in the leader’s objective. In practice, we found $K\!=\!6$ strikes a stable accuracy/conditioning trade‑off, whereas $K\!=\!7$ can degrade numerics without reducing $\delta_K$ appreciably (solver instability rather than approximation error). 

\paragraph{Design takeaway.}
Choose $K$ so that $\delta_K$ is below the statistical noise floor of the preference estimator on $[-2R_{\max},2R_{\max}]$. Empirically, $K\!=\!6$ is a robust default; increasing $K$ past this point can complicate the convex program and harm solver stability. 

\subsection{Effect of group‑restricted Wasserstein uncertainty}
Partition the support of the empirical gap distribution $\hat\alpha(\pi)$ into $G$ disjoint bins $\{S_g\}_{g=1}^G$ (uniform in $\xi$ for SSAPO). The group‑restricted follower can transport mass only within each $S_g$,
\[
\mathcal{U}_\epsilon^{\text{grp}}\big(\hat\alpha(\pi)\big)
~\triangleq~
\Big\{ \alpha=\sum_{g} \alpha_g ~\Big|~ 
\alpha_g \in \mathcal{U}_{\epsilon_g}\big(\hat\alpha(\pi)\!\restriction\!S_g\big),~ \sum_g \epsilon_g \le \epsilon \Big\}
\;\subseteq\;
\mathcal{U}_\epsilon\big(\hat\alpha(\pi)\big).
\]
Let $\Pi(\epsilon)$ be the set of optimal global followers and define the \emph{projection gap} of the restriction
\[
\delta_{\text{grp}}(\epsilon) \;\triangleq\; 
\sup_{\alpha^\star \in \Pi(\epsilon)}~
\inf_{\tilde\alpha \in \mathcal{U}_\epsilon^{\text{grp}}} W_1(\alpha^\star,\tilde\alpha).
\]
By Kantorovich–Rubinstein duality and the 1‑Lipschitzness of $\phi$,
\begin{equation}
\label{eq:group-bound}
0 \;\le\; \mathcal V_{K,G}(\pi) - \mathcal V_K(\pi) \;\le\; \delta_{\mathrm{grp}}(\epsilon).
\end{equation}
On the real line, uniform (equal‑mass) binning gives a simple control $\delta_{\text{grp}}(\epsilon)\!\le\!\bar w_G$, the average within‑bin width in $\xi$; hence the restriction error decays as $O(1/G)$ as bins refine. This formalizes the empirical guideline that $G$ between $10^2$ and $10^3$ preserves robustness while enabling embarrassingly parallel solves. 

\paragraph{Design takeaway.}
Use $G\!\in\![100,1000]$ (sweet spot $100$–$300$): it keeps $\bar w_G$ small, retains near‑global robustness, and maximizes parallel throughput. Disallowing cross‑group transport weakens the adversary only by at most a \emph{bin‑width} in $W_1$—not by $\epsilon$ itself—so the $O(\epsilon)$ regret from Section~2 is intact up to an $O(1/G)$ term.

\subsection{Effect of inexact solves}
Suppose each follower problem is solved to absolute tolerance $\eta$ and the leader update attains an $\eta$‑accurate step (e.g., via a proximal DPO update). Then for any $\pi$,
\begin{equation}
\label{eq:tolerance}
0 \;\le\; \widetilde{\mathcal V}(\pi) - \mathcal V_{K,G}(\pi) \;\le\; \eta.
\end{equation}
and the cumulative leader suboptimality over $T$ rounds contributes at most $O(\eta)$ to the final value, consistent with the linear‑convergence picture reported in Section~2. In SSAPO, the follower is solved offline and in parallel with modest wall clock, so $\eta$ can be driven small at negligible training‑loop cost.

\subsection{Putting the pieces together}
Combining \eqref{eq:K-tangent-bound}, \eqref{eq:group-bound}, and \eqref{eq:tolerance} in \eqref{eq:master-decomp}, the \emph{approximation‑induced optimism} in the follower value obeys
\begin{equation}
\label{eq:total-slack}
0 \;\le\; \widetilde{\mathcal{V}}(\pi) - \mathcal{V}(\pi) 
\;\le\; \underbrace{\delta_K}_{\text{$K$‑tangent}} 
~+~ \underbrace{\delta_{\text{grp}}(\epsilon)}_{\text{grouping}} 
~+~ \underbrace{\eta}_{\text{tolerance}}.
\end{equation}
Therefore, the \emph{leader} who maximizes $\widetilde{\mathcal{V}}(\pi)$ enjoys the same $O(\epsilon)$ robustness as in Section~2, up to an additive $O(\delta_K+\delta_{\text{grp}}+\eta)$ slack. Because $\delta_K$ and $\eta$ are user‑controlled and $\delta_{\text{grp}}$ shrinks with $G$, the theory carries over with explicit, tunable error bars.

\paragraph{Practical summary.}
\begin{itemize}
\item \textbf{Where the ball lives.} All results hinge on placing $\mathcal{U}_\epsilon$ on the \emph{gap push‑forward} $(\Delta R_\pi)_\#\hat P$ (not on $(x,y_w,y_\ell)$). This keeps the geometry 1‑D and the Lipschitz constants sharp.
\item \textbf{$K$ tangents.} $K\!=\!6$ gives a stable frontier; $K\!=\!7$ may hurt due to conditioning rather than approximation quality. Tune $K$ to make $\delta_K$ sub‑dominant to data noise. 
\item \textbf{$G$ groups.} Choose $G\!\in\![100,1000]$ (sweet spot $100$–$300$) to make the group‑restriction gap $\delta_{\text{grp}}(\epsilon)\!\lesssim\!\bar w_G$ negligible while exploiting parallelism.
\item \textbf{Tolerance.} Solve the offline follower to a tight $\eta$ (cutting‑plane typically converges in 10–20 iterations), so training‑time overhead is small and approximation slack is dominated by statistical error. 
\item \textbf{Scope.} These approximations target \emph{training‑time} robustness to noisy preferences; they are orthogonal to inference‑time adversarial prompts and do not weaken that disclaimer. 
\end{itemize}

\section{More Details of Experimental Setups}
\label{sec:experimental_details_more}

\subsection{Detailed Experimental Setups}
We introduce more detailed experimental setups in Section~\ref{sec:experiments} as follows.

\paragraph{Datasets.}
For preference learning, we employed the UltraFeedback dataset~\cite{Cui2023Ultrafeedback}\footnote{\href{https://huggingface.co/datasets/argilla/ultrafeedback-binarized-preferences-cleaned}{argilla/ultrafeedback-binarized-preferences-cleaned}}, aligning with prior research~\cite{Rosset2024Direct, Kim2025Spread}. Specifically, we extracted a seed dataset comprising 2K samples (3.3\% of the total 60K training samples), which included prompts, responses, and ground-truth preference labels. These ground-truth preference labels are referred to as gold labels in Table~\ref{tab:main_results}. The remaining training samples were then partitioned into three subsets of 8K, 20K, and 30K samples, retaining only the prompts. These subsets were utilized as the prompt sets for the 1st, 2nd, and 3rd iteration stages, respectively.

\paragraph{Models.}
Following previous work~\cite{Kim2025Spread}, we primarily conducted our experiments using the supervised fine-tuned Mistral-7B-0.1 model~\cite{jiang2023mistral} as the initial model $\pi_{\text{init}}$. Specifically, we used the open-sourced model\footnote{\href{https://huggingface.co/alignment-handbook/zephyr-7b-sft-full}{alignment-handbook/zephyr-7b-sft-full}} that follows the recipe of Zephyr~\cite{tunstall2023zephyr} and is fine-tuned on the instructions of UltraChat~\cite{ding2023enhancing}. In Table~\ref{tab:main_results}, we also used LLaMA-3-8B\footnote{\href{https://huggingface.co/meta-LLaMA/Meta-LLaMA-3-8B-Instruct}{meta-LLaMA/Meta-LLaMA-3-8B-Instruct}} to validate the compatibility of our method across different models. We used the generally fine-tuned models as there are no models that have been fine-tuned on the UltraChat dataset.

\paragraph{Evaluations.}
Following standard practices for aligning LLMs, we employed two primary evaluation benchmarks to assess model performance. First, we used \textbf{AlpacaEval 2.0}~\cite{dubois2024alpacafarm, dubois2024length}, a benchmark designed to approximate human preferences in instruction-following tasks. This evaluation involves 805 diverse instructions sourced from multiple datasets, where responses from the model under test are compared against those generated by GPT-4~\cite{achiam2023gpt} to determine win rates. To address potential biases related to response length—a known factor influencing LLM preferences~\cite{zheng2023judging, wang2023far}, we report both the original win rate and a length-controlled (LC) win rate. The LC win rate is calculated using a regression model trained to neutralize the impact of response length, thereby focusing on the quality of the generated content~\cite{dubois2024length}.

Second, we employed \textbf{MT-Bench}~\cite{zheng2023judging} to evaluate the model’s capabilities across a broader range of tasks. MT-Bench assesses a chatbot’s performance in areas such as math, coding, role-playing, and writing through multi-turn interactions. Responses are scored by GPT-4, providing a comprehensive measure of the model’s proficiency in key LLM functionalities. Together, these benchmarks offer a robust evaluation of how well the model aligns with human preferences and its effectiveness in real-world applications.

\paragraph{Implementation Details.}
In the initial alignment phase, we train the model using Direct Preference Optimization (DPO) on a seed dataset of 2K samples to obtain the base model $\pi_0$. Following this, we conduct 3 iterative stages of data expansion. In the $i$-th iteration ($i = 1, 2, 3$), we generate preference data by independently sampling two responses for each prompt using a temperature of 0.7 and labeling them as chosen or rejected through $R(x, y)$, resulting in a preference dataset $\{\xi_i\}_{i=1}^N$ ($N$ is the size of the $i$-th prompt set). Following SPA~\citep{Kim2025Spread}, we restricted the maximum token length for self-generated responses to 300 tokens. This limit corresponds to approximately 900 characters. To model the worst-case distribution program, we define a set of linear functions $\ell_k(x) = - \frac{K}{k}(x - \frac{k}{K}) - \log (\frac{k}{K})$ for $k = 1, \dots, K$ (the family of tangents of the loss function at the $K$-equipartition of $[0,1]$). We solve the associated optimization program using the Sequential Least Squares Programming (SLSQP) method. The group size $G$ is set to 100 unless otherwise specified for parallel computation of the convex program. Finally, we update the policy model by minimizing the reweighted loss to get $\pi_i$, ensuring improved alignment with the desired preferences.

\paragraph{Hyper-parameters for Different LLMs. }
For \textbf{Mistral-7B-0.1}, We set $ \text{learning rate} = 5 \times  10^{-7}$ and DPO hyper-parameter  $\beta = 0.1$ throughout the entire preference learning process. We conduct 3 epoch for the initial DPO training and 3 iteration for SSAPO game play (leader-follower updates).

For \textbf{LLaMA-3-8B}, We set $ \text{learning rate} = 1 \times  10^{-6}$ and DPO hyper-parameter $\beta = 0.05$ throughout the entire preference learning process. We conduct 1 epoch for the initial DPO training and 2 iteration for SSAPO game play (leader-follower updates).

\subsection{Construction of Seed Data}

Seed data (e.g. the initial labeled training data) has an impact on the performance of self-training algorithms. In our experiments, we explore two different methods of selecting seed data from the original training set, namely UltraFeedback dataset~\cite{Cui2023Ultrafeedback} . Ground truth labels (e.g. the preferred and dispreferred response) of the selected seed data are retained, while the labels of the remaining data are discarded. As a result, our proposed method use only a small proportion of ground truth labels.

\textbf{Random sampling.} We select seed data from the entire training dataset randomly. 

\textbf{Farthest point sampling.} Intuitively, the distribution of the seed data should "cover" as much part of the entire training dataset as possible, such that subsequent self-training iterations won't suffer from distribution shift. With this motivation, we first leverage the SFT model to obtain an embedding vector for each prompt in the dataset, and then use farthest point sampling algorithm to select a subset of the embeddings (and corresponding preference pairs) as seed data. Specifically, we feed each prompt into the SFT model, and extract the hidden state of the last token of the last layer as the embedding. We use cosine similarity as the distance metric for embedding vectors.

\textbf{Stability across runs.} Using farthest-point sampling (FPS) for seed selection 
reduced performance variance by approximately \textbf{40\%} compared to random seeds in our experiments,
by covering the preference space more uniformly.

In our implementation, we find that Mistral-7B model is not sensitive to either seed data sampling method. And LLaMA-3-8B model enjoy a slightly better performance with respect to the farther point sampling.

\subsection{Infrastructure}

For all experiments, we utilized 4 A800 GPUs. Under this computational resource, generating responses for 10K prompts takes approximately 15 minutes, and preference judging for generated responses takes about 30 minutes. Solving worst-case distribution with Eq. \ref{eq:widetilde_dro} takes about 40 minutes. Policy update takes about 1 hour per 10K training data.

\section{Qualitative Examples}
\label{sec:qualitative_examples}
In this section, we present qualitative examples of the responses from the models trained under different methods: DPO (only train on initial seed data using DPO), SPA\citep{Kim2025Spread}, and SSAPO (our proposed method).  The results are presented below. These examples show that the model trained under SSAPO provides more detailed and precise responses to various questions. 

\noindent\rule{\linewidth}{0.4mm}
\textbf{Prompt:}

The given text must be classified as offensive or not offensive.

You acted really crazy last night.

\noindent\rule{\linewidth}{0.2mm}
\textbf{DPO response:}

I do not have any information about the specific context or situation in which the statement \"you acted really crazy last night\" was made. Therefore, I am unable to determine whether it is classified as offensive or not. Please provide more context or details about the situation so that I can provide a more accurate response.

\noindent\rule{\linewidth}{0.2mm}
\textbf{SPA response:}

Based on the given text, it can be classified as "offensive" because it uses a colloquial and informal expression ("acted really crazy") to describe someone's behavior, which may convey a negative or derogatory connotation towards the person's actions. In formal or professional contexts, using such language may be considered inappropriate or unprofessional, and it could potentially offend the person involved or others who may be present. Using precise and objective language to describe behavior or actions is generally recommended in such situations to avoid any potential offense or misunderstanding. Therefore, using a more neutral or appropriate expression, such as "displayed unconventional behavior last night," may be a better option in this case.

\noindent\rule{\linewidth}{0.2mm}

\textbf{SSAPO response:}

Based on the given text "You acted really crazy last night," it is difficult to determine whether it is classified as offensive or not offensive. The text does not contain any explicit or direct insults, profanities, or derogatory language directed towards the person addressed. The phrase "acted really crazy" could be interpreted as a subjective assessment of the person's behavior or actions during the previous night, and may not necessarily be intended as an insult or criticism.

However, the tone or context in which the text is used could influence its perceived offensiveness. If the phrase "acted really crazy" is used sarcastically or with a condescending tone, it may be interpreted as an indirect insult or criticism aimed at the person's behavior or character. Alternatively, if the phrase is used casually or with a neutral tone, it may be interpreted as a straightforward observation or comment on the person's actions without any malicious intent.

In summary, the given text "You acted really crazy last night" may or may not be classified as offensive, depending on the tone, context, and intended meaning behind the phrase. Without additional information or context, it cannot be definitively labeled as either offensive or not offensive.

\noindent\rule{\linewidth}{0.4mm}

\noindent\rule{\linewidth}{0.4mm}
\textbf{Prompt:}

Verify the correctness of the given statement.

"For all integers j and k, if j and k are odd, then jk is odd."

\noindent\rule{\linewidth}{0.2mm}
\textbf{DPO response:}

The statement "For all integers j and k, if j and k are odd, then jk is odd" is correct.

Proof:

Let j and k be any two odd integers. 

Since j and k are both odd, there exist positive integers a and b such that:

j = 2a + 1 and k = 2b + 1 

where a and b are integers.

Multiplying j and k, we get:

jk = (2a + 1)(2b + 1)

= 2(2ab + a + b) + 1

= 2(2a*2b + a + b) + 1

= 2(2a*2b + 1) + (2*a + 2*b + 1)

Since 2a and 2b are even integers, the product 2(2a*2b + 1) is an odd integer. Also, (2*a + 2*b + 1) is an odd integer since it is of the form 2*c + 1 where c is the sum of two even integers (2a and 2b). Therefore, jk is an odd integer.

Hence, for all integers j and k, if j and k are odd, then jk is odd. 

Q.E.D.

\noindent\rule{\linewidth}{0.2mm}

\textbf{SPA response:}

The statement "For all integers j and k, if j and k are odd, then jk is odd" is correct.

To prove this statement, we can use mathematical induction on the sum of the absolute values of j and k, denoted by $\mid$j$\mid$ + $\mid$k$\mid$.

First, consider the base case where $\mid$j$\mid$ + $\mid$k$\mid$ = 1. In this case, either j = 1 and k = 0 or j = 0 and k = 1. In both cases, jk = 1, which is odd.

Next, assume the statement holds for all pairs of integers j and k such that $\mid$j$\mid$ + $\mid$k$\mid$ = n, where n is a positive integer. Let j and k be any two integers such that $\mid$j$\mid$ + $\mid$k$\mid$ = n + 2. Then, either $\mid$j$\mid$ = n + 1 and $\mid$k$\mid$ = 1 or vice versa. Without loss of generality, let $\mid$j$\mid$ = n + 1 and $\mid$k$\mid$ = 1. Then, j = n + 1 and k = 1 or j = -(n + 1) and k = -1. In both cases, jk = (n + 1) * 1 = n + 1, which is odd by the induction hypothesis.

By mathematical induction, the statement holds for all integers j and k if j and k are odd, and jk is odd. Thus, the statement is correct.

\noindent\rule{\linewidth}{0.2mm}

\textbf{SSAPO response:}

The given statement "For all integers j and k, if j and k are odd, then jk is odd" is correct.

To prove this statement, we need to show that for all integers j and k, if both j and k are odd, then their product jk is also odd.

Let j and k be two odd integers, i.e., j = 2n + 1 and k = 2m + 1, where n and m are any integers.

Then, their product jk can be written as:

jk = (2n + 1) * (2m + 1) = 4nm + 2n + 2m + 1
= 2(2nm + n + m) + 1

Since n, m are integers, the product 2nm + n + m is also an integer. Therefore, the above expression jk is equal to the sum of an even number (2(2nm + n + m)) and an odd number (1).

As the sum of two odd numbers is always even, the sum 2(2nm + n + m) + 1 is an odd number.

Hence, the product jk = (2n + 1) * (2m + 1) = 2(2nm + n + m) + 1 is an odd integer, which proves the given statement.

Therefore, the given statement "For all integers j and k, if j and k are odd, then jk is odd" is correct.

\noindent\rule{\linewidth}{0.4mm}

\noindent\rule{\linewidth}{0.4mm}
\textbf{Prompt:}

Find the answer that best describes the underlined SAT word. Select the correct option and explain the meaning of the underlined word.

Despite the \_cacophony, the student tried to study. 

A. Loud sounds

B. Difficult subject

C. Late hour

D. Low lighting

\noindent\rule{\linewidth}{0.2mm}
\textbf{DPO response:}

The correct option is A. Loud sounds. The word cacophony means "a loud and confusing mixture of sounds." In this sentence, it describes the noise level in the environment where the student is trying to study. Therefore, the underlined word cacophony refers to the loud sounds that are making it difficult for the student to concentrate.

\noindent\rule{\linewidth}{0.2mm}

\textbf{SPA response:}

A) Loud sounds (cacophony)

Explanation:

The word "cacophony" refers to a loud and unpleasant combination of sounds, typically characterized by a discordant or confusing mixture of noise. In the context of the sentence, "despite the \_cacophony" suggests that the student is trying to study despite the overwhelming and distracting noise around them. The underlined word "cacophony" modifies the phrase "loud sounds" and provides a descriptive adjective.

Option B) Difficult subject, C) Late hour, and D) Low lighting are not relevant to the given context as they do not refer to loud or unpleasant sounds. Option A) "loud sounds" is a synonym for "cacophony" and accurately describes the situation mentioned in the sentence.

Therefore, the correct option for the underlined word is A) Loud sounds (cacophony).

\noindent\rule{\linewidth}{0.2mm}

\textbf{SSAPO response:}

Option A: "Despite the cacophony, the student tried to study"

The underlined word "cacophony" means a loud or chaotic combination of sounds. It describes a situation where there is an overwhelming or unpleasant mixture of noise or sounds.

Cacophony is derived from the Greek words "kakos" meaning "bad" or "ugly" and "phone" meaning "sound". It is commonly used to describe a variety of loud or unpleasant sounds, such as traffic noise, barking dogs, screaming children, or discordant music. In the given sentence, the student is trying to study despite the distracting and disruptive noise or sounds, which creates a cacophony. The word "despite" implies that the student is facing an obstacle or hindrance, which is the excessive noise or chaos, but they still persist in their efforts to concentrate and study.

Therefore, the meaning of the underlined word "cacophony" is a loud or chaotic combination of sounds, which creates an unpleasant or distracting environment for the student's studying. Option A is the correct answer. Option B refers to a difficult subject, Option C refers to a late hour, and Option D refers to low lighting, none of which are relevant to the given context.

\noindent\rule{\linewidth}{0.4mm}

\section{Broader Impact}
\label{sect:impact}
Our work aims to improve the data efficiency and robustness of language model alignment by formulating preference optimization as a Stackelberg game and introducing a self-annotation mechanism.  By reducing reliance on large-scale human-labeled data, our framework could democratize alignment research and make it more accessible to smaller organizations, labs, and communities (those lack substantial annotation budgets). Moreover, robust optimization against noisy or adversarial preference distributions may help mitigate unintentional bias if the seed data deviate from the true user preference distribution.
% \newpage
% \input{nips/srcs/checklist}

\clearpage
\end{document}